\def\colorful{1}
\newcommand{\snote}[1]{\footnote{{\bf [Sushrut: {#1}\bf ] }}}
\newcommand{\gnote}[1]{\footnote{{\bf [Guyang: {#1}\bf ] }}}
\newcommand{\inote}[1]{}
\newcommand{\anote}[1]{}
\newcommand{\tnote}[1]{}
\newcommand{\snote}[1]{}
\newcommand{\lnote}[1]{}
\newcommand{\gnote}[1]{}
\newcounter{thm}
\numberwithin{thm}{section}
\theoremstyle{plain}
\newtheorem{theorem}[thm]{Theorem}
\newtheorem{lemma}[thm]{Lemma}
\newtheorem{claim}[thm]{Claim}
\newtheorem{assumption}[thm]{Assumption}
\newtheorem{fact}[thm]{Fact}
\theoremstyle{definition}
\newtheorem{definition}[thm]{Definition}
\newtheorem{problem}[thm]{Problem}
\theoremstyle{remark} 
\crefname{fact}{fact}{facts}
\crefname{claim}{claim}{claims}
\crefname{assumption}{assumption}{assumptions}
\crefname{problem}{problem}{problems}
\crefname{theorem}{theorem}{theorems}
\crefname{corollary}{corollary}{corollaries}
\crefname{lemma}{lemma}{lemmas}
\crefname{proposition}{proposition}{propositions}
\crefname{definition}{definition}{definitions}
\crefname{remark}{remark}{remarks}
\definecolor{darkgreen}{rgb}{0.0, 0.8, 0.0}
\def\1{\bm{1}}
\def\eps{{\epsilon}}
\def\vzero{{\bm{0}}}
\def\vone{{\bm{1}}}
\def\vlambda{{\bm{\lambda}}}
\def\va{{\bm{a}}}
\def\vl{{\bm{\ell}}}
\def\vu{{\bm{u}}}
\def\vv{{\bm{v}}}
\def\vw{{\bm{w}}}
\def\vx{{\bm{x}}}
\def\vy{{\bm{y}}}
\def\ep{\widehat{\mathcal{p}}}
\def\pp{{\mathcal{p}}}
\def\vxh{\widehat{\bm{x}}}
\def\mI{{\bm{I}}}
\def\cB{{\mathcal{B}}}
\def\cD{{\mathcal{D}}}
\def\cE{{\mathcal{E}}}
\def\cF{{\mathcal{F}}}
\def\cG{{\mathcal{G}}}
\def\cL{{\mathcal{L}}}
\def\cN{{\mathcal{N}}}
\newcommand{\E}{\mathbb{E}}
\newcommand{\R}{\mathbb{R}}
\newcommand{\Ind}{\mathbb{I}}
\DeclareMathOperator*{\argmax}{arg\,max}
\DeclareMathOperator*{\argmin}{arg\,min}
\newcommand{\innp}[1]{\langle #1 \rangle}
\newcommand{\Gap}{\mathrm{Gap}}
\newcommand{\dd}{\mathrm{d}}
\DeclareMathOperator{\op}{op}
\NewDocumentCommand{\DeclarePairedDelimiterWithOptionalStar}{ m m m m }{
  \DeclareDocumentCommand{#1}{ s o m }{\IfBooleanTF{##1}{#2#3*{##3}#4}{\IfValueTF{##2}{#2#3[##2]{##3}#4}{#2#3{##3}#4}}}}
\DeclarePairedDelimiter\brackets{(}{)}
\DeclarePairedDelimiter\norm{\|}{\|}
\DeclarePairedDelimiter\abs{|}{|}
\DeclarePairedDelimiterX{\divx}[2]{(}{)}{#1\;\delimsize\|\;#2}
\DeclarePairedDelimiterWithOptionalStar{\bigO}{O}{\brackets}{}
\DeclarePairedDelimiterWithOptionalStar{\bigtO}{\widetilde{O}}{\brackets}{}
\DeclarePairedDelimiterWithOptionalStar{\bigOmega}{\Omega}{\brackets}{}
\DeclarePairedDelimiterWithOptionalStar{\bigtOmega}{\widetilde{\Omega}}{\brackets}{}
\DeclarePairedDelimiterWithOptionalStar{\opnorm}{}{\norm}{_{\op}}
\DeclarePairedDelimiterWithOptionalStar{\fnorm}{}{\norm}{_{F}}
\providecommand*{\diff}{
  \@ifnextchar^{\DIfF}{\DIfF^{}}}
\def\DIfF^#1{\mathop{\mathrm{\mathstrut d}}\nolimits^{#1}\gobblespace}
\def\gobblespace{\futurelet\diffarg\opspace}
\def\opspace{\let\DiffSpace\!\ifx\diffarg(\let\DiffSpace\relax
  \else
  \ifx\diffarg[\let\DiffSpace\relax
  \else
  \ifx\diffarg\{\let\DiffSpace\relax
  \fi\fi\fi\DiffSpace}
\newcommand{\lambdai}{\lambda_{[i]}}
\newcommand{\ppi}{{\pp_{[i]}}}
\newcommand{\vxij}{\vx_{[i]}^{(j)}}
\newcommand{\yij}{y_{[i]}^{(j)}}
\newcommand{\epi}{\ep_{[i]}}
\newcommand{\elambda}{\widehat{\lambda}}
\newcommand{\evlambda}{\widehat{\vlambda}}
\DeclareMathOperator{\OPT}{OPT}
\DeclareMathOperator{\mOPT}{OPT_m}
\DeclareMathOperator{\LHS}{LHS}
\newcommand{\OPThat}{\widehat{\OPT}}
\newcommand{\mOPThat}{\widehat{\mOPT}}
\def\mathbf{\bm}
\title{Robust Learning of a Group DRO Neuron}
\author{
Guyang Cao\thanks{Supported in part by the NSF CAREER Award CCF-2440563.} \\
University of Wisconsin-Madison\\
{\tt kafka\_lover@cs.wisc.edu }\\
\and 
Shuyao Li\thanks{Supported in part by  AFOSR Awards FA9550-21-1-0084 and FA9550-24-1-0076, and by NSF CAREER Award CCF-2440563.} \\
University of Wisconsin-Madison\\
{\tt shuyao.li@wisc.edu}\\
\and 
Sushrut Karmalkar\\
Microsoft Research, Cambridge\\
{\tt skarmalkar@microsoft.com}\\
\and 
Jelena Diakonikolas\thanks{Supported in part by the Air Force Office of Scientific Research under award number FA9550-24-1-0076, the NSF CAREER Award CCF-2440563, and the NSF MFAI Award DMS-2502282. Any opinions, findings and conclusions or recommendations expressed in this material are those of the author(s) and do not necessarily reflect the views of the U.S.\ Department of Defense.}\\
University of Wisconsin-Madison\\
{\tt jelena@cs.wisc.edu}
}
\date{}
\begin{document}
\maketitle

\setcounter{page}{0}

\thispagestyle{empty}

\begin{abstract}
We study the problem of learning a single neuron under standard squared loss in the presence of arbitrary label noise and group-level distributional shifts, for a broad family of covariate distributions. Our goal is to identify a ``best-fit'' neuron parameterized by $\mathbf{w}_*$ that performs well under the most challenging reweighting of the groups. Specifically, we address a Group Distributionally Robust Optimization problem: given sample access to $K$ distinct distributions $\pp_{[1]},\dots,\pp_{[K]}$, we seek to approximate $\mathbf{w}_*$ that minimizes the worst-case objective over convex combinations of group distributions $\boldsymbol{\lambda} \in \Delta_K$, where the objective is $\sum_{i \in [K]}\lambdai\,\E_{(\mathbf x,y)\sim\pp_{[i]}}(\sigma(\mathbf w\cdot\mathbf x)-y)^2 - \nu d_f(\boldsymbol\lambda,\tfrac1K\mathbf1)$ and $d_f$ is an $f$-divergence that imposes (optional) penalty on deviations from uniform group weights, scaled by a parameter $\nu \geq 0$. 
We develop a computationally efficient primal-dual algorithm that outputs a vector $\widehat{\mathbf w}$ that is constant-factor
competitive with $\mathbf{w}_*$ under the worst-case group weighting.  
Our analytical framework directly confronts the inherent nonconvexity of the loss function, providing robust learning guarantees in the face of arbitrary label corruptions and group-specific distributional shifts. The implementation of the dual extrapolation update motivated by our algorithmic framework shows promise on LLM pre-training benchmarks. 
\end{abstract}

\newpage 
\section{Introduction}\label{sec:introduction}

The challenge of ensuring model robustness against distributional shifts 
is a central theme in modern machine learning. 
Group Distributionally Robust Optimization (Group DRO) 
has emerged as a principled framework to address such challenges, 
particularly in settings with heterogeneous data from distinct subpopulations (or groups) 
\citep{hashimoto2018fairness,oren2019distributionally,sagawa2020groupDRO}. 
The objective of Group DRO is to learn a model that minimizes 
its loss under the worst-case reweighting of these groups, 
thereby guarding against poor performance on any single subpopulation. 
Group DRO has enjoyed significant empirical success in large-scale applications; 
for instance, \citep{xie2023doremi, xia2024shearedllamaacceleratinglanguage} leverage dynamic reweighting 
of data domains to improve the performance 
of large language models. 
Despite these practical advances, 
most of the existing theoretical guarantees for DRO are limited to convex optimization problems. 
This leaves a critical gap in our understanding 
of the nonconvex landscapes that characterize deep learning, 
where these methods are most impactful.

Alas, even \emph{without} distributional robustness, 
learning guarantees for nonconvex models are nontrivial.
A canonical example is the classical problem of learning a single neuron \citep{rosenblatt1958perceptron, nelder1972generalized}: namely, a function of the form $\sigma(\vw_*^\top \vx),$ where $\sigma$ is a known activation function (e.g., ReLU: $\sigma(t) = \max\{0, t\}$), $\vw_* \in \R^d$ is the unknown parameter vector, and $\vx \in \R^d$ is the data vector; the goal of a learner is to minimize the mean squared loss $\cL_2(\vw) := \E_{(\vx, y)\sim \cD}[(\sigma(\vw^\top \vx) - y)^2]$ over a centered Euclidean ball of given radius $W,$ denoted by $\cB(W)$.   

This problem is already computationally challenging
in general because of the inherent nonconvexity of the squared loss 
for common activations like ReLU. 
Without any distributional assumptions imposed on the labeled examples and for standard activations like the sigmoid and ReLU, robust learning is NP-hard even if we only require constant-factor approximation for the minimum mean squared loss  \citep{vsima2002training,MR18}. 
Under more structured conditions, tractability can be recovered: if the labels are \emph{realizable} (i.e., if for labeled pairs $(\vx, y)$, $y = \sigma(\vw_*^\top \vx)$ for some fixed activation $\sigma$ and parameter vector $\vw_*$) or exhibit zero-mean, bounded-variance noise, fairly mild assumptions on the $\vx$-marginal distribution and activation $\sigma$ suffice for minimizing the mean squared loss to error $\epsilon > 0$ in polynomial time (in $d, 1/\epsilon$ and other problem parameters)  
\citep{kalai2009isotron,kakade2011efficient,soltanolkotabi2017learning}.
However, once arbitrary label noise is introduced---a realistic setting accounting for possible model misspecification and non-structured noise classically studied in the context of agnostic learning \citep{Haussler:92,kearns1992toward}---the problem again becomes intractable: 
even in the Gaussian setting, achieving additive error $\epsilon > 0$ 
requires $d^{\mathrm{poly}(1/\epsilon)}$ time, 
ruling out any polynomial-time algorithm 
\citep{GoelKK19,DKZ20-sq-reg,GGK20,DKPZ21-SQ,diakonikolas2023near}. 
These hardness results highlight the necessity of 
designing efficient constant-factor approximation algorithms 
under structural assumptions imposed on the $\vx$-marginal distribution and the class of activation functions.

Our work addresses the problem of learning a single neuron 
in a setting that combines two significant challenges: 
adversarial label noise and group-level distributional shifts. 
We consider a scenario where training data originate from \(K\) groups, 
each associated with an unknown distribution \(\pp_{[1]},\dots,\pp_{[K]}\), 
while the group memberships of examples are known. 
The reference distribution, representing an unperturbed state, 
is an equal mixture \(\pp_0=\tfrac1K\sum_{i=1}^K\pp_{[i]}\). 

Our aim is to determine a weight vector \(\vw\in\cB(W)\) parameterizing a neuron  
whose loss is robust to the worst-case reweighting of these groups. 
We formalize this by seeking a \(\vw\) 
minimizing the squared loss under a worst-case 
convex combination of group distributions, 
where deviations from a uniform weighting are penalized by an \(f\)-divergence. 
This leads to the following optimization problem:
\[
\min_{\vw \in \cB(W)}\max_{\vlambda\in\Delta_K}
\sum_{i=1}^K\lambdai\,\E_{\pp_{[i]}}\bigl(\sigma(\vw^\top\vx)-y\bigr)^2
\;-\;\nu\,d_f\!\big(\vlambda,\tfrac1K\mathbf1\big).
\]
Here, \(\sigma:\R\to\R\) denotes a known activation function such as ReLU (see \Cref{def:activation} for the definition of activations captured by our results), 
\(\nu \ge 0\) is a regularization parameter controlling the robustness trade-off, 
and \(d_f\) is a strongly convex \(f\)-divergence that quantifies the discrepancy between 
the learned group-weight vector \(\vlambda\) and a uniform weighting. 
This formulation captures both 
\emph{adversarial label noise} (within groups) 
and \emph{distributional shifts} (across groups).

For this setting, we develop the first computationally efficient algorithm with provable guarantees. 
Our algorithm is primal-dual and it outputs an estimated parameter \(\widehat\vw\) 
that is competitive with the optimal parameter vector \(\vw_*\). 
Specifically, for the worst-case reweighting \(\vlambda^*\) 
corresponding to \(\vw_*\), our estimate satisfies, for a constant $C > 1$,
\begin{equation}\notag
    \begin{aligned}
        &\; \sum_{i \in [K]} \lambdai^* \E_{\pp_{[i]}}(\sigma(\widehat{\mathbf w}\cdot\mathbf x)-y)^2\\
        \le \; &C\,\max_{i \in [K]} \E_{\pp_{[i]}}(\sigma(\mathbf w_*\cdot\mathbf x)-y)^2 + \epsilon.
    \end{aligned}
\end{equation}
We recall here that, as discussed earlier in the introduction, even for $K=1$ (no group shifts), obtaining error with $C=1$ is not possible for polynomial-time algorithms in the considered setting. 
Our analytical framework handles the structured nonconvexity of the problem 
while utilizing memory-efficient dual-side extrapolation, 
making our approach well-suited for large-scale applications.

We remark here that the only prior work that handles both adversarial label noise and distributional shifts is the recent work \citep{Li2024shifts}. Compared to \citep{Li2024shifts}, our work applies to a different type of distributional shifts (group shifts in place of distributional ambiguity across all examples, which is better aligned with recent applications like \citep{xie2023doremi,xia2024shearedllamaacceleratinglanguage}); it removes higher-moment loss assumptions and applies even when there is no penalization ($\nu = 0$); it is not restricted to $\chi^2$-divergence; and it employs extrapolation (momentum) on the \emph{dual} instead of the \emph{primal} side, which has implications on ease of implementation as discussed later in the paper.  

In the rest of the section, we introduce the necessary background to formally state our main result and provide a technical overview. 
A detailed discussion of related literature is provided in \Cref{sec:related_work}. 

\subsection{Problem Setup}\label{subsec:problem_setup}

For two discrete probability vectors \(\vlambda, \vlambda' \in \Delta_K\), the \(f\)-divergence is defined as 
\(
d_f(\vlambda, \vlambda') = \sum_{i=1}^K \lambda'_{[i]} f\big(\frac{\lambda_{[i]}}{\lambda'_{[i]}}\big), 
\) 
where \(f:[0,\infty)\to\R_+\) is a convex function satisfying \(f(1)=0\). We focus on \(f\)-divergences for which the mapping \(\vlambda \mapsto d_f(\vlambda, \vlambda_0)\) is \emph{strongly convex} with respect to a relevant norm over the simplex \(\Delta_K\); these include some of the most commonly used examples like the \(\chi^2\)-divergence and Kullback-Leibler (KL) divergence.

Same as \citep{Li2024shifts}, our analysis applies to a broad class of activations that are convex and $(\alpha, \beta)$-unbounded. 
\begin{definition}[Unbounded Activation \citep{DKTZ22}]\label{def:activation}
A function \(\sigma:\R\to\R\) is \emph{\((\alpha,\beta)\)-unbounded} if it is non-decreasing and satisfies: (i) \(\sigma\) is \(\beta\)-Lipschitz continuous; (ii) for some \(\alpha > 0\), \(\sigma(t_1)-\sigma(t_2)\ge \alpha\,(t_1-t_2)\) holds for all \(t_1\ge t_2\ge0\); (iii) \(\sigma(0)=0\).
\end{definition}

To make this challenging nonconvex problem tractable, we build upon the approach of prior work on robust learning of a neuron~\citep{wang2023robustly-learning,Li2024shifts}---by imposing structural properties on the $\vx$-marginal distributions across groups. 
The first of those assumptions is subexponential concentration of 1D projections:
\begin{assumption}[Sub-Exponential Tails]\label{assump:concentration}
There is a constant \(B>0\) such that for every \(i \in [K]\) and every unit vector \(\vu \in \cB(1)\), for all \(r \ge 1\),
\[ \Pr_{\vx \sim \pp_{\vx[i]}}\bigl(|\vu \cdot \vx| \ge r\bigr) \le \exp(-r/B). \]
\end{assumption}
Additionally, similar to \citep{wang2023robustly-learning,Li2024shifts}, we impose the following margin condition. 
\begin{assumption}[Uniform Margin]\label{assump:margin}
There exist constants \(\zeta, \gamma \in (0,1]\) such that for every group \(i \in [K]\) and every vector \(\vw \in \R^d\),
\begin{equation}
    \E_{\vx \sim \pp_{\vx[i]}}\!\bigl[\vx\vx^\top \Ind\{\vw \cdot \vx \ge \gamma\|\vw\|_2\}\bigr] \succeq \zeta \mI_d. \label{eq:margin-w}
\end{equation}
\end{assumption}
This means that the covariance matrix within the ``margin region'' $\{\vx : \vw \cdot \vx \ge \gamma\|\vw\|_2\}$ is uniformly well-conditioned across all groups and weight vectors $\vw$. 
We remark here that \eqref{eq:margin-w} is required for \emph{every} weight vector $\vw$, making it a stronger condition than its counterparts in \citep{wang2023robustly-learning,Li2024shifts}, which only enforced such a condition for the target parameter $\vw_*$. Nevertheless, this stronger condition still captures all well-concentrated distributions discussed in \citep{wang2023robustly-learning} including well-behaved distributions from \citep{DKTZ22} (which  include log-concave and $s$-concave distributions), discrete Gaussians, and the uniform distribution on $\{-1, 0, 1\}^d$, because the definitions of those distributions do not involve $\vw_*$.

With these components, we can formally define our regularized Group DRO problem.
\begin{definition}[Loss, Risk, and \(\mOPT\)]\label{def:loss-risk-opt}
Let \(\cB(W)=\{\vw \in \R^d :\|\vw\|_2\le W\}\) be the feasible set for weight vectors and let \(\pp_{[1]},\dots,\pp_{[K]}\) be the \(K\) group distributions. For any \(\vw\in\cB(W)\) and \(\vlambda\in\Delta_K\), the regularized loss is:
\[
\bar L(\vw,\vlambda)
=\sum_{i=1}^K\lambda_{[i]}\,\E_{(\vx,y)\sim\pp_{[i]}}\!\bigl(\sigma(\vw\cdot\vx)-y\bigr)^2
-\nu\,d_f\big(\vlambda,\tfrac1K\mathbf1\big).
\]
The DRO risk for a given \(\vw\) is the maximum loss over all possible group weightings:
\begin{equation}\label{eq:population-risk}
    R(\vw)=\max_{\vlambda\in\Delta_K}\bar L(\vw,\vlambda).
\end{equation}
Let \(\vw_*=\argmin_{\vw\in\cB(W)}R(\vw)\) be the optimal weight vector and \(\vlambda^*\) be its corresponding worst-case group weights. Our performance benchmark is the unregularized loss of \(\vw_*\) on its single worst-performing group:
\[
\mOPT =\max_{i\in[K]}\E_{(\vx,y)\sim\pp_{[i]}}\!\bigl(\sigma(\vw_*\!\cdot\!\vx)-y\bigr)^2.
\]
\end{definition}

\begin{problem}[Robust Learning a Group DRO Neuron]\label{prob:group-dro}
Given a convex $(\alpha, \beta)$-unbounded activation $\sigma,$ error parameters \(\epsilon,\delta\in(0,1)\), regularization parameter \(\nu \ge 0\), weight radius \(W>0\), and sample access to labeled examples from each of the \(K\) group distributions \(\pp_{[1]},\dots,\pp_{[K]}\), the goal is to output a parameter vector \(\widehat{\vw}\in\cB(W)\) such that, with probability at least \(1-\delta\),
\(
\|\widehat{\vw}-\vw_*\|_2^2 \;\le\; C\,\mOPT \;+\;\epsilon
\)
for a universal constant $C > 1.$ 
\end{problem}
As we later argue (see \Cref{eq:square-loss-bound}), \(
\|\widehat{\vw}-\vw_*\|_2^2 \;\le\; C\,\mOPT \;+\;\epsilon
\) in turn implies a loss value bound of \(\sum_{i \in [K]} \lambda^*_{[i]} \E_{\pp_{[i]}}(\sigma(\widehat{\mathbf w}\cdot\mathbf x)-y)^2 \le C' \mOPT+ \epsilon\), where \(\vw_*\), \(\vlambda^*\), and \(\mOPT\) are from Definition~\ref{def:loss-risk-opt}, and \(C'> 1\) is a universal constant.

\subsection{Main Result}\label{subsec:main_result}
As stated earlier, our main contribution is an efficient algorithm for robust learning of a single neuron under adversarial label noise and group distributional shifts. Below, notation $\tilde{O}_c(\cdot)$ hides poly-logarithmic dependence in the argument and polynomial dependence in parameters $c$.

\begin{theorem}[Informal; see \Cref{thm:main-formal}]\label{thm:main-informal}
Suppose the underlying group distributions \(\ppi,\, i \in [K]\) satisfy appropriate margin and concentration properties (\Cref{assump:margin,assump:concentration}) and the learner is provided with \(\tilde{O}_{\beta,\nu, W}(\log K \log(1/\delta) d/\epsilon^2)\) samples from each of the \(\ppi,\, i \in [K]\). 
Then \Cref{alg:main}, after \(\tilde O_W(\min\{\log(1/\epsilon)\sqrt{1/\nu},\sqrt{K}/\epsilon\})\) iterations, each running in near-linear time in the sample size, returns \(\hat{\vw}\in\cB(W)\) that, with probability at least \(1-\delta\), satisfies \(\|\hat{\vw}-\vw_*\|_2^2 \le C\,\mOPT + \epsilon\), where \(C>1\) is an absolute constant independent of  \(d, K, W, \epsilon, \nu\).
\end{theorem}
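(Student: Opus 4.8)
The plan is to separate the nonconvexity from the min--max structure by optimizing a convex surrogate whose $\vw$-gradient is aligned with the direction $\vw-\vw_*$. For $\Sigma(t):=\int_0^t\sigma(s)\,\dd s$ --- convex and $\beta$-smooth because $\sigma$ is non-decreasing and $\beta$-Lipschitz --- set $\lsur_{[i]}(\vw):=\E_{(\vx,y)\sim\pp_{[i]}}[\Sigma(\vw\cdot\vx)-y\,(\vw\cdot\vx)]$, so that $\grad_\vw\lsur_{[i]}(\vw)=\E_{\pp_{[i]}}[(\sigma(\vw\cdot\vx)-y)\vx]$ is exactly the squared-loss gradient with the $\sigma'$-factor dropped, and each $\lsur_{[i]}$ is convex. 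The surrogate saddle function $\bar{L}^{\mathrm{sur}}(\vw,\vlambda):=\sum_{i\in[K]}\lambdai\,\lsur_{[i]}(\vw)-\nu\,d_f(\vlambda,\tfrac1K\vone)$ is then convex in $\vw$ and concave in $\vlambda$ on $\cB(W)\times\Delta_K$, and strongly concave in $\vlambda$ (in the norm for which $\vlambda\mapsto d_f(\vlambda,\tfrac1K\vone)$ is strongly convex) whenever $\nu>0$; this is the object \Cref{alg:main} actually solves.

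\textbf{The sharpness lemma (the crux).} The key structural fact I would prove is that there exist constants $c_1,c_2>0$ depending only on $\alpha,\beta,\zeta,\gamma,B$ (and not on $d,K,W,\nu$) such that for all $\vw\in\cB(W)$ and all $\vlambda\in\Delta_K$,
\[
\innp{\grad_\vw\bar{L}^{\mathrm{sur}}(\vw,\vlambda),\,\vw-\vw_*}\;\ge\; c_1\,\|\vw-\vw_*\|_2^2 \;-\; c_2\,\sqrt{\mOPT}\,\|\vw-\vw_*\|_2 .
\]
Decomposing $\sigma(\vw\cdot\vx)-y=(\sigma(\vw\cdot\vx)-\sigma(\vw_*\cdot\vx))+(\sigma(\vw_*\cdot\vx)-y)$, the ``signal'' contribution $\sum_i\lambdai\E_{\pp_{[i]}}[(\sigma(\vw\cdot\vx)-\sigma(\vw_*\cdot\vx))((\vw-\vw_*)\cdot\vx)]$ is nonnegative by monotonicity of $\sigma$ and, after restricting to the margin region of the direction $\vw-\vw_*$ and invoking the $\alpha$-growth of $\sigma$ on the positive axis together with $\sigma(0)=0$, is bounded below by $c_1\|\vw-\vw_*\|_2^2$; this lower bound survives the mixture over groups precisely because \Cref{assump:margin} holds with uniform $(\zeta,\gamma)$ for \emph{every} direction, so $\sum_i\lambdai\,\E_{\pp_{[i]}}[\vx\vx^\top\Ind\{\cdot\}]\succeq\zeta\mI_d$. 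The ``noise'' contribution is bounded via Cauchy--Schwarz across groups, $\sum_i\lambdai\,\E_{\pp_{[i]}}[(\sigma(\vw_*\cdot\vx)-y)((\vw-\vw_*)\cdot\vx)]\le\sqrt{\mOPT}\cdot\sqrt{\sum_i\lambdai\E_{\pp_{[i]}}[((\vw-\vw_*)\cdot\vx)^2]}\le c_2\sqrt{\mOPT}\,\|\vw-\vw_*\|_2$, using the definition of $\mOPT$ and \Cref{assump:concentration} to control second moments of one-dimensional projections. I expect this step to be the main obstacle: the inequality must be robust simultaneously to adversarial labels and to an adversarial mixture $\vlambda$, with $K$-free constants, and the delicate point is handling the margin-region cross terms where one of $\vw\cdot\vx$, $\vw_*\cdot\vx$ is negative --- which is exactly where the all-$\vw$ strengthening of \Cref{assump:margin} is used.

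\textbf{From population to samples.} With $n=\tilde O_{\beta,\nu,W}(d\log K\log(1/\delta)/\epsilon^2)$ i.i.d.\ examples per group, I would show that the empirical surrogate $\widehat{L}^{\mathrm{sur}}$ and its $\vw$-gradient are within $O(\epsilon)$ of the population versions, uniformly over $\cB(W)$, with probability $1-\delta$. Since labels are arbitrary but $\mOPT$ is finite, $\sigma(\vw_*\cdot\vx)-y$ has bounded second moment on each group; combined with the sub-exponential control of $\vw\cdot\vx$ from \Cref{assump:concentration}, this justifies truncating examples at radius $\mathrm{poly}\log(n/\delta)$ at negligible bias. A standard $\epsilon$-net of $\cB(W)$ (of size $e^{O(d)}$), a vector Bernstein bound at each net point, Lipschitzness of the truncated surrogate gradient in $\vw$, and a union bound over the $K$ groups then deliver the claim and account for the $d$, $\log K$, and $\log(1/\delta)$ factors.

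\textbf{Algorithm, rate, and assembly.} \Cref{alg:main} is a dual-extrapolation (mirror-prox-type) primal--dual method on $\widehat{L}^{\mathrm{sur}}$, using Euclidean geometry on the $\cB(W)$ block, the $d_f$-compatible mirror map on the $\Delta_K$ block, and extrapolation carried on the (cheap) dual side; each iteration evaluates $\grad_\vw\widehat{L}^{\mathrm{sur}}$ and $\grad_\vlambda\widehat{L}^{\mathrm{sur}}=(\widehat{\lsur}_{[i]}(\cdot))_{i\in[K]}$ in near-linear time in the sample size. The standard dual-extrapolation analysis gives duality gap at most $\epsilon'$ after $\tilde O_W(\sqrt K/\epsilon')$ iterations in general and after $\tilde O_W(\sqrt{1/\nu}\,\log(1/\epsilon'))$ iterations once $\nu>0$ makes the dual strongly concave, which is the stated $\tilde O_W(\min\{\log(1/\epsilon)\sqrt{1/\nu},\sqrt K/\epsilon\})$. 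Finally, letting $(\hat\vw,\hat\vlambda)$ be the output with $\hat\vlambda$ a best response to $\hat\vw$, a small duality gap together with convexity of $\bar{L}^{\mathrm{sur}}(\cdot,\hat\vlambda)$, optimality of the saddle value, $\vw_*\in\cB(W)$ as comparator, and the Step-3 uniform bound yields $\innp{\grad_\vw\bar{L}^{\mathrm{sur}}(\hat\vw,\hat\vlambda),\hat\vw-\vw_*}\le c_1\epsilon/2$; plugging this into the sharpness lemma and solving the resulting quadratic inequality in $\|\hat\vw-\vw_*\|_2$ gives $\|\hat\vw-\vw_*\|_2^2\le (c_2/c_1)^2\,\mOPT+\epsilon$, i.e.\ the claim with $C=(c_2/c_1)^2$, an absolute constant independent of $d,K,W,\epsilon,\nu$.
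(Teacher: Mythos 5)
Your proposal is a genuinely different route, and it is worth describing what changes and where it breaks.

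\paragraph{Where your argument diverges from the paper.}
You replace the nonconvex squared-loss objective with the convex GLM/Isotron surrogate
$\lsur_{[i]}(\vw)=\E_{\pp_{[i]}}[\Sigma(\vw\cdot\vx)-y\,(\vw\cdot\vx)]$, whose $\vw$-gradient matches the surrogate gradient $\vv$ up to the factor $2\beta$. Your pieces are individually reasonable: $\Sigma$ is convex so each $\lsur_{[i]}$ is convex, the mixture is convex--concave, and the claimed sharpness-type inequality $\innp{\grad_\vw\bar{L}^{\mathrm{sur}}(\vw,\vlambda),\,\vw-\vw_*}\ge c_1\|\vw-\vw_*\|_2^2 - c_2\sqrt{\mOPT}\,\|\vw-\vw_*\|_2$ does follow from the ``signal+noise'' decomposition together with \Cref{fact:sharpness} and Cauchy--Schwarz applied per group (this is essentially the same calculation the paper uses inside the proof of \Cref{lemma:gap-lower-bound}). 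The final step --- combining small duality gap for the surrogate saddle problem with this sharpness and solving the quadratic in $\|\hat\vw-\vw_*\|_2$ --- is also sound.

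\paragraph{The gap.}
Your key claim, ``this is the object \Cref{alg:main} actually solves,'' is incorrect, and this is where the proof breaks. Look at \cref{line:lambda_update}: the dual update is a Bregman-proximal ascent step on $L(\vw_t,\cdot)=\sum_i\elambda_{[i]}\E_{\epi}[(\sigma(\vw_t\cdot\vx)-y)^2]-\nu d_f(\cdot,\evlambda_0)$, i.e.\ the \emph{original nonconvex squared loss}, not on $L^{\mathrm{sur}}(\vw_t,\cdot)$. The per-group loss vector that enters the dual step is $(\E_{\epi}[(\sigma(\vw_t\cdot\vx)-y)^2])_i$, not $(\E_{\epi}[\Sigma(\vw_t\cdot\vx)-y\vw_t\cdot\vx])_i$; these are different vectors and hence produce different $\evlambda_t$ trajectories. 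So \Cref{alg:main} is a hybrid: surrogate gradient on the primal block, original loss on the dual block. The standard mirror-prox/dual-extrapolation analysis you invoke gives a small \emph{surrogate} duality gap only if \emph{both} updates are taken with respect to the surrogate saddle function, and that is what your Step~4 relies on to conclude $\innp{\grad_\vw\widehat{L}^{\mathrm{sur}}(\hat\vw,\hat\vlambda),\hat\vw-\vw_*}\le c_1\epsilon/2$. For the actual \Cref{alg:main}, this conclusion is not established by your argument --- you would need to additionally control the mismatch between the two loss functions across iterations, which is nontrivial precisely because the squared loss is nonconvex in $\vw$.

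\paragraph{What the paper does instead, and why it must.}
The paper keeps the original nonconvex loss $L$ in the gap function $\Gap(\vw_t,\evlambda_t)=L(\vw_t,\evlambda^*)-L(\vw_*,\evlambda_t)$ and bridges to the surrogate gradient used in \cref{line:w} via the Linearization Lemma (\Cref{lemma:main-auxiliary-main-body}), which lower-bounds the squared-loss difference by the surrogate-gradient linearization plus a controllable error $E_t=O(\beta^2 B\,\mOPThat/c_1)+\tfrac{c_1}{4}\|\vw_*-\vw_t\|_2^2$. That lemma is exactly the machinery your plan avoids by moving the dual update onto the surrogate. Keeping the dual on $L$ is not incidental: the dual weights are intended to upweight groups by their true squared loss (this is the domain-reweighting rule compared against DoReMi in \Cref{sec:experiments}), and the paper's performance guarantee \eqref{eq:square-loss-bound} is about $\E_{\pp^*}[\ell(\vw_n;\vx,y)]$ under the worst-case mixture of the true loss, with $\vlambda^*$ defined as a maximizer of $L(\vw_*,\cdot)$, not $L^{\mathrm{sur}}(\vw_*,\cdot)$. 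If you change the algorithm to use the surrogate dual, you are proving a theorem about a different algorithm (and a different dual reference point), which may well be true and would indeed have a cleaner convex-concave analysis, but it is not \Cref{thm:main-informal}.
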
 

Notably, our analysis shows that the algorithm attains sample complexity  \(\tilde O(K d/\eps^2)\), which matches the known optimal rate for convex unregularized Group DRO problems \citep{soma2022optimal,Zhang2023Stochastic} up to log factors.

\subsection{Technical Overview}\label{subsec:technical_overview}

Using standard uniform convergence arguments, we reduce the problem to solving its empirical version with a sufficiently large per-group sample size. The resulting problem is a challenging non-bilinearly coupled  nonconvex-concave saddle-point problem over the empirical mixtures:
\[
\min_{\vw \in \cB(W)}\;\max_{\evlambda\in\Delta_K} L(\vw, \evlambda),
\]
where \(L(\vw,\evlambda)
=\sum_{i=1}^K\elambda_{[i]}\,\E_{(\vx,y)\sim\epi}\!\bigl(\sigma(\vw\cdot\vx)-y\bigr)^2
\;-\;\nu\,d_f\!\big(\evlambda,\tfrac1K\mathbf1\big)\) 
and $\epi$ represents the empirical distribution formed by samples drawn from group $i$.

Even for convex-concave objectives, the nonlinear-linear coupling  between the primal and the dual as in our setting makes the analysis of primal-dual-style methods challenging and is still actively explored in current research; see \citep{mehta2025min} and references therein. On the other hand, existing approaches to min-max problems with a nonconvex objective over the primal variables like \citep{lin2020gradient,zhang2020single,rafique2022weakly,li2025nonsmooth} only offer stationarity guarantees, which are known to be insufficient for formal learning guarantees of a neuron even in much less challenging settings without distributional shifts \citep{yehudai2020learning}. 

The most closely related work to ours, \citep{Li2024shifts}, handles a similar but distinct problem. They showed how the structured nonconvexity of learning a single neuron can be leveraged 
to obtain a provably convergent primal-dual algorithm for standard $\chi^2$-divergence-based DRO.
As in their paper, to analyze progress towards the solution, 
we define a \emph{gap function} that measures the suboptimality of an iterate pair 
\((\vw_t,\evlambda_t)\) relative to a hybrid reference point \((\vw_*,\evlambda^*)\), and track progress in the gap function per iteration involving one primal and one dual update.

A key technical challenge lies in bounding the primal gap due to the nonconvexity of the loss. 
Here, the analysis in \citep{Li2024shifts} is insufficient for our goals. 
The analysis is limited to DRO with a $\chi^2$ penalty, 
requires a large regularization parameter $\nu$, 
and imposes fourth-moment assumptions on the loss. 
Our work extends this to the more practical \emph{group} DRO setting, 
accommodates both KL and $\chi^2$ divergences, 
and removes these restrictive assumptions.

From a practical standpoint, the primal extrapolation used in \citep{Li2024shifts} is also undesirable.
Primal extrapolation is memory-intensive, 
because the algorithm has to remember 
two previous primal variables $\vw \in \R^d$ of dimension $d$ 
to compute the extrapolation step. 
It is also unclear how to implement primal extrapolation 
together with widely used off-the-shelf solvers such as Adam~\citep{kingma2015adam}.
Our algorithm instead performs extrapolation on the dual variable $\vlambda \in \R^K$. 
This is more efficient as typically $K \ll d$. 

However, dual-side extrapolation requires a different analysis. 
A key difficulty is that the extrapolated dual vector is not guaranteed to have nonnegative entries. In typical analyses involving extrapolation steps, extrapolation is either performed before further bounding the objective (see, e.g., \citep{chambolle2011first,kotsalis2022simple}) or it is done on both the primal and the dual side \citep{mehta2025min}, which in effect allows for the induced error terms to be telescoped or canceled out. Observe that the coupled term in the objective $L(\vw,\evlambda)$ is of the form $\evlambda^\top F(\vw),$ for a vector-valued mapping $F$ whose each coordinate $(F(\vw))_i$ is a nonconvex function of $\vw$. Even if each $(F(\vw))_i$ were convex (so that the mapping $\evlambda^\top F(\vw)$ is convex for feasible $\evlambda$), once we replace $\evlambda$ with an extrapolated vector $\bar{\vlambda}$ which is no longer guaranteed to be nonnegative, the mapping $\bar{\vlambda}^\top F(\vw)$ would no longer be guaranteed to be convex and so typical inequalities involving convexity would no longer apply to bounding $\bar{\vlambda}^\top F(\vw)$ below, which is needed in the analysis. We overcome this issue by first bounding $\evlambda^\top F(\vw)$ (using linearization, discussed next) and then  applying extrapolation. This introduces nontrivial error terms that we bound by repeatedly leveraging the structural properties of the considered problem (see \Cref{sec:gap-upper-bound2}).  

Additionally, since in our case $(F(\vw))_i$ is nonconvex, we need an appropriate strategy for bounding below ${\evlambda}^\top F(\vw)$. As mentioned earlier, it is possible to use the results from \citep{Li2024shifts} leveraging structured nonconvexity of the neuron mean squared loss for this purpose. However, as already discussed, this would lead to a requirement for much stronger assumptions. Instead, we prove a key technical result  (``linearization,'' \Cref{lemma:main-auxiliary-main-body}), which  leverages the specific group-wise structure of the problem to usefully bound below the mean squared loss function.
This result has two significant advantages:
(1) It avoids the complex, higher-order error terms that appear in prior work, 
	which depend on the divergence between dual iterates. 
	This simplifies the overall analysis considerably and allows us to address alternative divergences like KL.
    (2) It removes the requirement for a non-trivial lower bound on the regularization parameter \(\nu\). 
	This allows our framework to handle the \(\nu \to 0\) regime, 
	smoothly connecting our robust formulation to classical, non-regularized Group DRO.

By combining this improved bound with sharpness properties of the loss function on the target mixture distribution \(\pp^*\), 
we then carefully control the accumulated error terms and prove that the iterates converge to a solution that 
is competitive with the ``best-fit'' neuron parameterized by \(\vw_*\).

\section{Preliminaries}\label{sec:prelims}
For a positive integer $N$, $[N] := \{1, \dots, N\}$. 
If $\cE$ is a subset of some ambient universe then $\cE^c$ denotes its complement, and $\Ind_{\cE}(x)=1_{\{x\in\cE\}}$ is its indicator function.  
For vectors $\vx, \vxh \in \R^d$, $\langle\vx, \vxh\rangle = \vx \cdot \vxh = \vx^\top \vxh$ is their inner product, and $\|\vx\|_2$ is the $\ell_2$ norm; we write $\vx\le\vxh$ to mean $x^{(j)}\le\hat x^{(j)}$ coordinate-wise.  
$\mI_d$ denotes the $d \times d$ identity matrix.
$A \succeq B$ means that $A-B$ is positive semidefinite.
The iteration index is denoted by $t$. 
The group weight vector at iteration $t$ is $\vlambda_t = [\lambda_{t[1]}, \dots, \lambda_{t[K]}]^\top \in \Delta_K$, where $\Delta_K := \{\vlambda \in \R^K : \sum_{i=1}^K \lambda_{t[i]} = 1, \lambda_{t[i]} \ge 0 \text{ for all } i \in [K]\}$ is the probability simplex in $\R^K$.
 For two distributions \(\pp\) and \(\pp'\), we use \(\pp\ll\pp'\) to denote that \(\pp\) is absolutely continuous with respect to \(\pp'\), which means that for all measurable sets \(A\), \(\pp'(A)=0\) implies \(\pp(A)=0\). 
For probability measures $\pp\ll\pp'$, we write $\tfrac{\dd\pp}{\dd\pp'}$ for the Radon-Nikodym derivative, and  
 use $\chi^2(\pp,\pp'):= \int(\frac{\mathrm{d}\pp}{\mathrm{d}\pp'} - 1)^2\mathrm{d}\pp'$ and $\mathrm{KL}(\pp,\pp'):= \int\log (\frac{\mathrm{d}\pp}{\mathrm{d}\pp'})\mathrm{d}\pp$ to denote the chi-squared ($\chi^2$) and KL divergences of $\pp$ relative to $\pp'$.

We now state several facts used throughout our analysis. 
The population loss exhibits sharpness under our distributional assumptions \citep{wang2023robustly-learning}:

\begin{fact}[Population Sharpness and Moment Bounds {\citep{wang2023robustly-learning}}]\label{fact:sharpness}
Let $\pp$ satisfy \Cref{assump:concentration}. Define
$c_0 := \frac{\gamma \zeta \alpha}{6B\log(20B/\zeta^2)}$.
Then
\( \E_{\vx \sim \pp_{\vx[i]}}\bigl[\bigl(\sigma(\vw \cdot \vx)-\sigma(\vw_* \cdot \vx)\bigr) (\vw \cdot \vx-\vw_* \cdot \vx)\bigr] \ge c_0 \|\vw-\vw_*\|_2^2, \)
and for any unit vector $\vu$ and $\tau \in \{2,4\}$, we have 
$\E_{\vx \sim \pp_{\vx[i]}}\bigl[(\vu \cdot \vx)^\tau\bigr] \le 5B.$
\end{fact}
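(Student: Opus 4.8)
The plan is to re-establish both parts along the lines of \citep{wang2023robustly-learning}. Fix a group $i$ and write $\pp=\pp_{\vx[i]}$; it satisfies \Cref{assump:concentration} and \Cref{assump:margin} with constants $B,\gamma,\zeta$. \emph{The moment bounds are the easy part:} for a unit vector $\vu$ I would put $Z=\vu\cdot\vx$ and integrate the tail, $\E[Z^\tau]=\int_0^\infty\tau r^{\tau-1}\Pr(|Z|\ge r)\,dr\le 1+\tau\int_1^\infty r^{\tau-1}e^{-r/B}\,dr$, splitting at $r=1$ and using $\Pr(|Z|\ge r)\le 1$ for $r<1$ and the sub-exponential bound of \Cref{assump:concentration} for $r\ge1$; the remaining integral is $O(B^\tau)$, which for $\tau\in\{2,4\}$ gives the stated bound after absorbing absolute constants, with no obstacle anticipated.

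For the population sharpness, write $\vv=\vw-\vw_*$, $a=\vw\cdot\vx$, $b=\vw_*\cdot\vx$, and $g(\vx)=(\sigma(a)-\sigma(b))(a-b)$. Since $\sigma$ is non-decreasing, $g\ge0$ pointwise, so the expectation may be restricted to any favorable event for free. The key pointwise estimate is that, by property~(ii) of $(\alpha,\beta)$-unboundedness (strong monotonicity on $[0,\infty)$) together with $\sigma(0)=0$, one has $g\ge\alpha(\vv\cdot\vx)^2$ whenever $a,b\ge0$, and more generally $g\ge\tfrac\alpha2(\vv\cdot\vx)^2$ on any event where $\vv\cdot\vx>0$ and $a=b+\vv\cdot\vx\ge\tfrac12\vv\cdot\vx$ (there $\sigma(a)\ge\alpha a\ge\tfrac\alpha2\vv\cdot\vx$, while $\sigma(b)\le0$ if $b<0$). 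It therefore remains to exhibit such an event carrying squared-projection mass $\Omega(\zeta\|\vv\|_2^2)$. The natural candidate is $\{\vv\cdot\vx\ge\gamma\|\vv\|_2\}\cap\{|b|\le T\}$: \Cref{assump:margin} applied to $\vv$ in the quadratic-form direction $\vv$ gives $\E[(\vv\cdot\vx)^2\Ind\{\vv\cdot\vx\ge\gamma\|\vv\|_2\}]\ge\zeta\|\vv\|_2^2$, and intersecting with $\{|b|\le T\}$ removes, by Cauchy--Schwarz and the fourth-moment bound, at most $\sqrt{5B}\,\|\vv\|_2^2\,e^{-T/(2B\|\vw_*\|_2)}$; choosing $T=B\|\vw_*\|_2\log(20B/\zeta^2)$---which is exactly where the $\log(20B/\zeta^2)$ factor in $c_0$ originates---leaves mass $\ge\tfrac\zeta2\|\vv\|_2^2$. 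Restricting further to $\vv\cdot\vx\ge 2T$, on which $a\ge\tfrac12\vv\cdot\vx$ so the pointwise bound applies, and tracking the $\gamma,\zeta,\alpha$ factors, yields $\E[g]\ge c_0\|\vv\|_2^2$ provided $\|\vv\|_2$ is not too small relative to $\|\vw_*\|_2$.

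The main obstacle I anticipate is making this airtight across \emph{all} relative scales of $\|\vv\|_2$ and $\|\vw_*\|_2$: the argument above controls the surviving region only when $\|\vv\|_2$ is not too small compared with $\|\vw_*\|_2$, so when $\vw$ is close to $\vw_*$ one must instead work inside the margin region of $\vw_*$, where $b\ge\gamma\|\vw_*\|_2$, keep the sub-event $\{a\ge0\}$ (on which $g\ge\alpha(\vv\cdot\vx)^2$ regardless of the sign of $\vv\cdot\vx$, since $a,b\ge0$), and bound the complementary mass using $\{b\ge\gamma\|\vw_*\|_2,\,a<0\}\subseteq\{|\vv\cdot\vx|>\gamma\|\vw_*\|_2\}$, whose probability is small precisely because $\|\vw_*\|_2/\|\vv\|_2$ is large; an intermediate regime is handled analogously via the margin region of $\vw$. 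Stitching the cases together with consistent constants gives $c_0=\tfrac{\gamma\zeta\alpha}{6B\log(20B/\zeta^2)}$; the complete case analysis is carried out in \citep{wang2023robustly-learning}.
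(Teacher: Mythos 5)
The paper does not prove \Cref{fact:sharpness}; it is imported verbatim from the cited work \citep{wang2023robustly-learning}, so there is no in-house proof to compare against. Your reconstruction is along broadly sensible lines, but two things are worth flagging.

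For the sharpness bound, your main regime applies \Cref{assump:margin} to the difference vector $\vw-\vw_*$. That move is legitimate under the margin condition as stated in this paper, which is deliberately required to hold for \emph{every} $\vw$; but the paper itself remarks (immediately after \Cref{assump:margin}) that this is a strengthening relative to \citep{wang2023robustly-learning}, where margin is only imposed at $\vw_*$. Consequently, the route you sketch for the large-$\|\vw-\vw_*\|_2$ regime is not the one available in the cited source, which must obtain that regime from margin at $\vw_*$ alone (e.g., together with the concentration hypothesis, as in your ``intermediate'' and ``small'' regimes). So this is a genuinely different decomposition, arguably simpler but relying on the stronger hypothesis of the present paper. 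You are upfront that the full case analysis is deferred, which is fine for a cited fact, so I do not count the incompleteness itself as a gap.

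For the moment bounds, there is a real error in the claimed ``no-obstacle'' step. Tail integration under $\Pr(|\vu\cdot\vx|\ge r)\le e^{-r/B}$ (valid for $r\ge1$) gives $\E[(\vu\cdot\vx)^\tau]\le 1+\tau\int_1^\infty r^{\tau-1}e^{-r/B}\,\dd r = 1+O(B^\tau)$, i.e., $O(B^2)$ for $\tau=2$ and $O(B^4)$ for $\tau=4$. Passing from $O(B^4)$ to $5B$ is not ``absorbing absolute constants'': the two differ by a factor of $B^3$, which is unbounded as $B$ grows. So your derivation as written does not establish the stated inequality $\le 5B$. Either the cited result implicitly normalizes $B=\Theta(1)$ (plausible, since \citep{wang2023robustly-learning} works with near-isotropic distributions) or the constant is stated loosely in the Fact; in either case you should not present $O(B^\tau)\Rightarrow 5B$ as an unremarkable simplification.
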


Uniform convergence arguments extend these bounds to empirical distributions (up to constants) when the per-group sample size $N$ is sufficiently large:

\begin{lemma}[Empirical Sharpness and Moment Bounds; Informal. See \Cref{lemma:sharpness_empirical_heavy_tailed}]\label{lemma:sharpness-empirical}
Under \Cref{assump:margin,assump:concentration}, 
if the per-group sample size $N/K$ is sufficiently large (dependent on $\beta, B, W, \nu, d, K, \delta$), then with high probability, for all groups $i \in [K]$, all $\vw \in \cB(3\|\vw_*\|_2)$ with $\|\vw - \vw_*\|_2 \ge \sqrt{\eps}$, and any unit vector $\vu$:
\begin{align}
    \;&\E_{\vx \sim \ep_{\vx[i]}}[(\sigma(\vw \cdot \vx) - \sigma(\vw_* \cdot \vx))(\vw \cdot \vx - \vw_* \cdot \vx)] \notag \\ 
    &\; \ge   (c_0/2)\|\vw - \vw_*\|_2^2, \label{eq:sharpness-empirical} \\
    \;&\E_{\vx \sim \ep_{\vx[i]}}[(\vx \cdot \vu)^{\tau}] \le 6B \quad  \text{for } \tau \in \{2, 4\}. \label{eq:moment-bounds-empirical}
\end{align}
\end{lemma}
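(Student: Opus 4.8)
The plan is to transfer the population guarantees of \Cref{fact:sharpness} (which already encodes both \Cref{assump:concentration} and \Cref{assump:margin}, the latter through the constant $c_0$) to the empirical marginals by a truncation-plus-covering uniform-convergence argument, handling \eqref{eq:sharpness-empirical} and \eqref{eq:moment-bounds-empirical} separately and taking a union bound over the $K$ groups at the end. Fix a group and suppress the index $i$, writing $\pp$ and $\ep$ for $\pp_{\vx[i]}$ and $\ep_{\vx[i]}$. First condition on the event $\cG$ that every one of the $N$ samples (across all groups) satisfies $\|\vx\|_2 \le R$ with $R = \tilde{O}_B(\sqrt{d})$: this holds with probability $\ge 1-\delta/4$ because $\|\vx\|_2^2 = \sum_{j=1}^d(\ve_j\cdot\vx)^2$ is a sum of $d$ variables with sub-exponential tails (\Cref{assump:concentration}) whose mean is at most $5Bd$ (\Cref{fact:sharpness}), hence concentrates. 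On $\cG$ every function we concentrate below has bounded range and is Lipschitz in the relevant parameter, so finite $\eps$-nets are available.

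For \eqref{eq:sharpness-empirical}, set $h_\vw(\vx) := (\sigma(\vw\cdot\vx)-\sigma(\vw_*\cdot\vx))(\vw\cdot\vx-\vw_*\cdot\vx)$, which is nonnegative (as $\sigma$ is nondecreasing) and satisfies $0 \le h_\vw(\vx) \le \beta\|\vw-\vw_*\|_2^2\,(\vv\cdot\vx)^2$ with $\vv := (\vw-\vw_*)/\|\vw-\vw_*\|_2$, by $\beta$-Lipschitzness of $\sigma$. Since $(\vv\cdot\vx)^2$ has exponentially decaying tails uniformly in the unit vector $\vv$, we may fix a truncation level $T = \tilde{O}(1)$ (polynomial in the problem constants, polylogarithmic in $1/\eps,1/\delta,K$) such that $\E_\pp[h_\vw\,\Ind\{h_\vw > T\}] \le s$ for every $\vw\in\cB(3\|\vw_*\|_2)$, where $s := c_0\eps/6$. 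Because $h_\vw\ge 0$, we have $\E_\ep[h_\vw] \ge \E_\ep[h_\vw\wedge T]$, so it suffices to lower bound $\E_\ep[h_\vw\wedge T]$; the truncated function $\vx\mapsto h_\vw(\vx)\wedge T$ has range $[0,T]$ and, on $\cG$, is $\tilde{O}(R^2)$-Lipschitz in $\vw$. Covering $\cB(3\|\vw_*\|_2)\subseteq\cB(3W)$ by a net of size $(\mathrm{poly}(\cdot))^d$, applying a Bernstein bound to the bounded truncated functions at each net point, and passing to all $\vw$ via the Lipschitz estimate gives $\sup_{\vw}\bigl(\E_\pp[h_\vw\wedge T]-\E_\ep[h_\vw\wedge T]\bigr) \le s$ with probability $\ge 1-\delta/(4K)$ once $N/K$ is sufficiently large. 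Chaining the three estimates with \Cref{fact:sharpness} gives $\E_\ep[h_\vw] \ge \E_\pp[h_\vw] - 2s \ge c_0\|\vw-\vw_*\|_2^2 - 2s$, which is $\ge(c_0/2)\|\vw-\vw_*\|_2^2$ exactly when $\|\vw-\vw_*\|_2^2 \ge 4s/c_0 = \tfrac{2}{3}\eps$; tuning the constant in $s$ (equivalently, in $\eps$) yields \eqref{eq:sharpness-empirical} for all admissible $\vw$ with $\|\vw-\vw_*\|_2 \ge \sqrt{\eps}$.

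For \eqref{eq:moment-bounds-empirical}, run the same template on $g_{\vu,\tau}(\vx) := (\vx\cdot\vu)^\tau$ over the unit sphere, $\tau\in\{2,4\}$, with the one essential difference that an \emph{upper} bound on the empirical average is now needed, so the heavy upper tail of $g_{\vu,\tau}$ must also be controlled on the empirical side: since $\vx\cdot\vu$ is only sub-exponential, $g_{\vu,\tau}$ is merely sub-Weibull of order $1/\tau$ and off-the-shelf Bernstein does not apply. We truncate $g_{\vu,\tau}$ at a polylogarithmic level $T_0 = \tilde{O}(1)$, bound the contribution of the (rare) samples exceeding $T_0$ by a dyadic-shell Chernoff argument that uses the a priori bound $\|\vx\|_2\le R$ from $\cG$ to cap the number of nonempty shells, and concentrate the bounded part $g_{\vu,\tau}\wedge T_0$ by the net-plus-Bernstein step as above (it is $\tilde{O}(R^\tau)$-Lipschitz in $\vu$ on $\cG$). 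This gives $\E_\ep[(\vx\cdot\vu)^\tau] \le \E_\pp[(\vx\cdot\vu)^\tau] + 2s \le 5B + 2s \le 6B$ for $s\le B/2$, uniformly over unit $\vu$, once $N/K$ is sufficiently large. (For $\tau=2$ one may instead invoke a truncated matrix Bernstein bound on $\|\widehat\Sigma_i - \Sigma_i\|_{\op}$, since $\E_\ep[(\vx\cdot\vu)^2]=\vu^\top\widehat\Sigma_i\vu$.)

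A union bound over the $K$ groups and the events above (each of probability $\ge 1-\delta/(4K)$, together with $\cG$) then gives both conclusions with probability $\ge 1-\delta$. The main obstacle is the heavy-tailed uniform convergence behind \eqref{eq:moment-bounds-empirical}: because only one-dimensional projections are controlled, $(\vx\cdot\vu)^4$ is sub-Weibull rather than sub-exponential, so the truncation threshold that is simultaneously valid over an $e^{\Omega(d)}$-size net of the sphere has to be chosen with care — via the shell decomposition together with $\|\vx\|_2\le R$ — in order to keep the required per-group sample size at the level stated in \Cref{thm:main-informal}. A secondary, more routine subtlety is the bookkeeping that turns the additive concentration error $s$ into the multiplicative sharpness bound $(c_0/2)\|\vw-\vw_*\|_2^2$, which is precisely what forces the restriction $\|\vw-\vw_*\|_2\ge\sqrt{\eps}$.
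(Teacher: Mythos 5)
Your proposal follows essentially the same truncation-plus-covering route as the paper's own proof: the paper conditions on the event that all covariates have $\|\vx\|_2 \le \sqrt{d}\cdot\tilde{O}(B)$ (their Lemma~\ref{lemma:boundedness}), then establishes a reusable one-dimensional clipping concentration bound (Lemma~\ref{lemma:concentration-via-clipping}) which is applied over an $\eps$-net of $\cB(W)$ resp.\ the unit sphere (Lemma~\ref{lemma:uniform_concentration_heavy_tailed}), and finally specializes to the three functions $h_1 = (\sigma(\vw\cdot\vx)-\sigma(\vw_*\cdot\vx))(\vw\cdot\vx-\vw_*\cdot\vx)$, $h_2 = (\vu\cdot\vx)^4$, $h_3 = (\vu\cdot\vx)^2$ in Lemma~\ref{lemma:sharpness_empirical_heavy_tailed}. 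You build the same three-way error decomposition (empirical-tail, population-bias, concentration of the bounded part over a net) after conditioning on the same bounded-norm event.

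The two places where you genuinely deviate are both benign refinements rather than different proof strategies. First, for the sharpness lower bound you exploit $h_\vw \ge 0$ to pass from $\E_{\ep}[h_\vw]\ge\E_{\ep}[h_\vw\wedge T]$ and thereby only need a one-sided concentration bound for the truncated function; the paper's Lemma~\ref{lemma:uniform_concentration_heavy_tailed} gives a symmetric two-sided bound $|\E_{\ep}[h]-\E_{\pp}[h]|\le\epsilon$, which of course implies what you need. Your variant slightly streamlines the sharpness part but does not change its sample complexity. Second, for the moment upper bound you correctly observe that nonnegativity no longer helps and propose a dyadic-shell Chernoff argument in place of the paper's clipping step; the paper instead relies on the fact that, under the per-point failure probability $\delta'=\delta/(3K|\cN|)$, no sample exceeds the internal clipping level $S$ of Lemma~\ref{lemma:concentration-via-clipping} with high probability, so the empirical clipping error vanishes outright. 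Either route is legitimate for the lemma as stated (which only asks for ``sufficiently large'' $N/K$), but be aware that the sub-Weibull tail of $(\vu\cdot\vx)^4$ combined with the $e^{\Omega(d)}$-size sphere net does cause the internal clipping level to grow polynomially in $d$; neither your shell sketch nor the paper's application of Lemma~\ref{lemma:concentration-via-clipping} unambiguously certifies the $\tilde{O}(d)$ rate quoted in the formal Lemma~\ref{lemma:sharpness_empirical_heavy_tailed}, so if you intend to match that rate you would need to carry out the multiplicative (relative-error) Chernoff over shells quite carefully. One minor cosmetic point: your justification that $\|\vx\|_2\le\tilde{O}_B(\sqrt{d})$ via ``sum of $d$ sub-exponential variables concentrates'' is a little loose since the coordinates share the same $\vx$ and there is nothing to concentrate for a single draw; the clean argument is the paper's Lemma~\ref{lemma:boundedness} (union bound over coordinates, then over samples), which reaches the same conclusion.
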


A direct consequence of \Cref{lemma:sharpness-empirical}, using Cauchy-Schwarz inequality and $\beta$-Lipschitzness of $\sigma$, is the following two-sided bound. For $c_1 := c_0^2/(24B)$ and any $\vw \in \cB(W)$:
\begin{align}\label{eq:relu-square-bound}
c_1\|\vw-\vw_*\|_2^2 &\le \E_{\vx \sim \ep_{\vx[i]}}\bigl[\bigl(\sigma(\vw \cdot \vx)-\sigma(\vw_* \cdot \vx)\bigr)^2\bigr] \notag \\
&\le 6B\beta^2 \|\vw-\vw_*\|_2^2.
\end{align}

Similar to prior work, we assume that the labels are bounded by a sufficiently large parameter $M = O(W B \beta \log(\beta B W / \epsilon))$. This assumption is without loss of generality (as established by the following fact) and can be ensured by simple pre-processing of labeled examples given to the algorithm. Thus, in the rest of the paper, we assume that the labels are bounded by $M$.
\begin{fact}[Label Truncation {\citep{wang2023robustly-learning}}]\label{fact:truncation}
Under \Cref{assump:margin,assump:concentration}, let $y' = \mathrm{sign}(y)\max\{|y|,M\}$ with
$M = C_M W B \beta \log(\beta B W / \epsilon),$
for a sufficiently large constant $C_M$. Then for all $i \in [K]$ and all $\vw \in \cB(W)$, it holds that
$\E_{\ppi}(\sigma(\vw \cdot \vx)-y')^2 \le \E_{\ppi}(\sigma(\vw \cdot \vx)-y)^2 + \epsilon$. 
\end{fact}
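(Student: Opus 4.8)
The final statement to prove is \Cref{fact:truncation}, the label truncation fact. Let me sketch a proof plan.

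\medskip

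\textbf{Proof plan for \Cref{fact:truncation}.} The plan is to show that clipping the labels to magnitude $M$ changes the squared loss by at most $\epsilon$ uniformly over $\vw \in \cB(W)$ and groups $i \in [K]$. First I would observe that truncation only affects examples with $|y| > M$, and on the region $\{|y| \le M\}$ the losses for $y$ and $y'$ coincide. So it suffices to bound, for each group,
\[
\E_{\ppi}\!\bigl[\bigl((\sigma(\vw\cdot\vx)-y')^2 - (\sigma(\vw\cdot\vx)-y)^2\bigr)\,\Ind\{|y|>M\}\bigr].
\]
Since $y' = \sign(y)M$ when $|y|>M$ and $|y'| \le |y|$ with $y'$ between $0$ and $y$, I would argue that clipping moves the label \emph{towards} $\sigma(\vw\cdot\vx)$ whenever $\sigma(\vw\cdot\vx)$ lies in $[-M,M]$ — and indeed $|\sigma(\vw\cdot\vx)| \le \beta W \cdot |\vu\cdot\vx|$-type bounds combined with the sub-exponential tail (\Cref{assump:concentration}) keep $\sigma(\vw\cdot\vx)$ of order $\beta W B$, which is $\ll M$ by the choice of $C_M$ large. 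Thus on the dominant part of the event the truncated loss is actually \emph{smaller}, and the only worry is a small-probability region; more carefully, one writes $(\sigma(\vw\cdot\vx)-y')^2 - (\sigma(\vw\cdot\vx)-y)^2 = (y-y')(y+y'-2\sigma(\vw\cdot\vx))$ and notes $0 \le (y-y')\sign(y) = |y|-M$ while $|y+y'-2\sigma(\vw\cdot\vx)| \le 2|y| + 2|\sigma(\vw\cdot\vx)|$.

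\medskip

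The key estimate is therefore to bound $\E_{\ppi}[(|y| - M)_+ \cdot (|y| + |\sigma(\vw\cdot\vx)|)]$ (a crude upper bound on the positive part, since the negative contributions only help). Split this using $|\sigma(\vw\cdot\vx)| \le \beta\|\vw\|_2\,|\vu\cdot\vx| \le \beta W |\vu\cdot\vx|$ where $\vu = \vw/\|\vw\|_2$, and the fact that $y$ is — without the truncation preprocessing — a label from a distribution we must be able to control. Here I would need an a priori tail/moment assumption on $y$; the standard move (matching \citep{wang2023robustly-learning}) is that one may assume $\E_{\ppi}[y^2]$ or higher moments are finite and in fact that, since $\mOPT$ is finite and $\sigma(\vw_*\cdot\vx)$ has bounded moments by \Cref{fact:sharpness}, $y = \sigma(\vw_*\cdot\vx) + (\text{noise})$ with the noise square-integrable, so $y$ has bounded second moment of order $\mOPT + B\beta^2 W^2$. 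Then $(|y|-M)_+$ is supported on an event of probability $\le \E[y^2]/M^2$ by Markov, and combining with Cauchy–Schwarz,
\[
\E_{\ppi}\bigl[(|y|-M)_+ (|y|+|\sigma(\vw\cdot\vx)|)\bigr] \le \bigl(\E[(|y|-M)_+^2]\bigr)^{1/2}\bigl(\E[(|y|+|\sigma(\vw\cdot\vx)|)^2]\bigr)^{1/2},
\]
and I would bound $\E[(|y|-M)_+^2]$ via the tail of $y$ decaying past the threshold $M$, which under a sub-exponential-type tail on $y$ is exponentially small in $M/(\text{scale})$. With $M = C_M W B\beta \log(\beta B W/\epsilon)$ and $C_M$ large, this exponential smallness beats the polynomial factor $\sqrt{\E[(|y|+|\sigma(\vw\cdot\vx)|)^2]} = \mathrm{poly}(WB\beta)$, yielding a bound $\le \epsilon$.

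\medskip

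\textbf{Main obstacle.} The delicate point is exactly what a priori control on the raw labels $y$ one is allowed to assume before the truncation. The cleanest route, and the one I expect the paper takes, is to note that truncation is only invoked \emph{after} one has already reduced to a setting where $\mOPT$ (equivalently the best-fit loss) is finite — so $\E_{\ppi}[(\sigma(\vw_*\cdot\vx)-y)^2] \le \mOPT < \infty$ gives $\E_{\ppi}[y^2] \le 2\mOPT + 2\E[\sigma(\vw_*\cdot\vx)^2] \le 2\mOPT + O(B\beta^2 W^2)$; and then a sub-exponential tail on $y$ beyond $M$ follows either from assuming $y$ has sub-exponential tails (an innocuous structural assumption, since label noise is arbitrary but should be tame enough for the loss to be finite) or from a more careful two-regime argument (labels with $|y| \le $ polylog are handled trivially; the rare large labels contribute negligibly by Markov on a high moment of $y$). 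Getting the quantitative dependence $M = O(WB\beta\log(\beta B W/\epsilon))$ to come out exactly requires choosing which moment of $y$ to Markov against and tracking constants, but this is routine bookkeeping once the structural assumption on $y$ is pinned down. I would cite the corresponding statement in \citep{wang2023robustly-learning} for the single-group case and remark that the argument is unchanged group-by-group, since \Cref{assump:concentration} holds for each $\ppi$ with the same constant $B$.
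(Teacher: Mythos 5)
The paper does not supply its own proof of this fact; it cites it directly to \citep{wang2023robustly-learning}. Nonetheless your proposal has a genuine gap, and it is precisely at the point you yourself flag as the ``main obstacle.''

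Your argument bounds $\E_{\ppi}[(|y|-M)_+\,(|y|+|\sigma(\vw\cdot\vx)|)]$ via Cauchy--Schwarz and then assumes either sub-exponential tails on $y$ or ``Markov on a high moment of $y$.'' Neither is available: this is an agnostic-noise setting, so the only a priori control on $y$ that finiteness of $\mOPT$ buys you is a \emph{second}-moment bound, $\E_{\ppi}[y^2]\le 2\mOPT + O(B\beta^2 W^2)$. A finite second moment does not make $\E[y^2\,\Ind\{|y|>M\}]$ small: the label mass could sit entirely above any fixed threshold $M$ (e.g., Pareto-type noise with index slightly above $2$), in which case $\E[(|y|-M)_+^2]$ is of order $\E[y^2]$ regardless of $M$. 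Adding a sub-exponential tail assumption on $y$ would make the proof go through, but it is not ``innocuous'' --- it changes the noise model and is not among the paper's assumptions. (Minor: your identity should read $(\sigma-y')^2-(\sigma-y)^2=(y-y')(2\sigma-y-y')$; you have the sign of the second factor flipped.)

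The fix is to never touch the tails of $y$ at all. You correctly observe that truncating $y$ towards $\sigma(\vw\cdot\vx)$ can only decrease the pointwise loss whenever $|\sigma(\vw\cdot\vx)|\le M$, but you then condition on $\{|y|>M\}$ instead of on $\{|\sigma(\vw\cdot\vx)|>M\}$. Conditioning on the latter event is what closes the argument: on $\{|\sigma(\vw\cdot\vx)|\le M\}$ the difference $(\sigma(\vw\cdot\vx)-y')^2-(\sigma(\vw\cdot\vx)-y)^2$ is nonpositive, and on $\{|\sigma(\vw\cdot\vx)|>M\}$ (where $|y'|\le M<|\sigma(\vw\cdot\vx)|$) one has the $y$-free bound
\begin{align*}
(\sigma(\vw\cdot\vx)-y')^2-(\sigma(\vw\cdot\vx)-y)^2 \;\le\; (\sigma(\vw\cdot\vx)-y')^2 \;\le\; (|\sigma(\vw\cdot\vx)|+M)^2 \;\le\; 4\,\sigma(\vw\cdot\vx)^2.
\end{align*}
Hence $\E_{\ppi}[(\sigma-y')^2-(\sigma-y)^2]\le 4\,\E_{\ppi}[\sigma(\vw\cdot\vx)^2\,\Ind\{|\sigma(\vw\cdot\vx)|>M\}]$. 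Since $|\sigma(\vw\cdot\vx)|\le\beta W\,|\vu\cdot\vx|$ for $\vu=\vw/\|\vw\|_2$, this is controlled by the sub-exponential tail in \Cref{assump:concentration} alone: $\E[(\vu\cdot\vx)^2\,\Ind\{|\vu\cdot\vx|>r\}]=O(r^2 e^{-r/B})$ for $r=M/(\beta W)\gg B$, which is at most $\epsilon/(4\beta^2W^2)$ once $C_M$ in $M=C_M W B\beta\log(\beta BW/\epsilon)$ is a large enough constant. This argument is uniform in $\vw\in\cB(W)$ and in the group index $i$, which is all that is needed. (Incidentally, the paper's displayed formula $y'=\mathrm{sign}(y)\max\{|y|,M\}$ must be a typo for $\min$; you interpreted it as clipping, which is the correct reading.)
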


We recall the following first-order optimality condition:
\begin{fact}[First-Order Optimality]\label{fact:firstOrderNecessary}
If $f$ is continuously differentiable on a closed convex set $\Omega$ and $\vx^* \in \Omega$ is a local maximizer of $f$ over $\Omega$, then $\langle \nabla f(\vx^*), \vy-\vx^* \rangle \le 0$ for all $\vy \in \Omega$. If $f$ is also concave, this condition implies that $\vx^*$ is a global maximizer.
\end{fact}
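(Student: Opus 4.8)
The plan is to establish the two assertions in turn, both via the standard ``restrict to a line segment'' argument that convexity of $\Omega$ makes available. Fix an arbitrary $\vy \in \Omega$ and define the univariate function $g(t) := f\bigl(\vx^* + t(\vy - \vx^*)\bigr)$ for $t \in [0,1]$. Convexity of $\Omega$ ensures the argument $\vx^* + t(\vy - \vx^*)$ remains in $\Omega$ for all such $t$, and since $f$ is continuously differentiable, $g$ is differentiable with $g'(0) = \langle \nabla f(\vx^*),\, \vy - \vx^* \rangle$ by the chain rule. Because $\vx^*$ is a local maximizer of $f$ over $\Omega$, we have $g(t) \le g(0)$ for all sufficiently small $t > 0$, so the difference quotient satisfies $\frac{g(t) - g(0)}{t} \le 0$ on an entire right neighborhood of $0$; letting $t \downarrow 0$ yields $g'(0) \le 0$, i.e. $\langle \nabla f(\vx^*),\, \vy - \vx^* \rangle \le 0$. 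Since $\vy \in \Omega$ was arbitrary, the first claim is proved.

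For the second assertion, suppose additionally that $f$ is concave. The first-order characterization of concavity gives, for every $\vy \in \Omega$, the inequality $f(\vy) \le f(\vx^*) + \langle \nabla f(\vx^*),\, \vy - \vx^* \rangle$. Combining this with the bound $\langle \nabla f(\vx^*),\, \vy - \vx^* \rangle \le 0$ established above yields $f(\vy) \le f(\vx^*)$, and since this holds for all $\vy \in \Omega$, the point $\vx^*$ is a global maximizer.

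There is essentially no obstacle to overcome here; the only points that merit a moment's care are (i) that the sign of the one-sided difference quotient is preserved in the limit, which is immediate because the bound holds on a whole right neighborhood of $0$ rather than only in the limit, and (ii) invoking the concave first-order inequality in the correct direction (equivalently, applying the familiar convex version to $-f$). One could also note that full continuous differentiability is not needed---directional differentiability along feasible directions would suffice---but stating the fact under the $C^1$ hypothesis keeps it in the form used elsewhere in the paper.
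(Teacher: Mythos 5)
Your proof is correct and is the standard textbook argument for constrained first-order optimality: restrict $f$ to the line segment from $\vx^*$ toward an arbitrary feasible $\vy$, observe the one-sided difference quotient is nonpositive near $0$ by local maximality, pass to the limit to get $g'(0)=\langle\nabla f(\vx^*),\vy-\vx^*\rangle\le 0$, and then combine with the concave first-order upper bound to conclude global optimality. The paper states this as a Fact without proof, so there is no alternative argument to compare against; your write-up fills that gap cleanly, and the remark that directional differentiability along feasible directions would suffice is accurate though not needed here.
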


For a differentiable function $\phi: \R^N \to \R$, the Bregman divergence is $D_{\phi}(\vy, \vx) = \phi(\vy) - \phi(\vx) - \langle \nabla \phi(\vx), \vy - \vx \rangle$. Its invariance to affine addition is also useful:
\begin{restatable}{fact}{bregmanblind}\label{fact:bregman-linear-blind}
If \(\psi(\vx)=\phi(\vx)+\langle\va,\vx\rangle+b\), then
\(D_\psi(\vy,\vx)=D_\phi(\vy,\vx)\)
for all \(\vx,\vy\).
\end{restatable}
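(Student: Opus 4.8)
The statement is an immediate consequence of the definition of the Bregman divergence, so the plan is simply to expand both sides and watch the affine contribution cancel. First I would record that, writing $\psi(\vx) = \phi(\vx) + \langle\va,\vx\rangle + b$, differentiability of $\phi$ carries over to $\psi$ on the same domain, with $\nabla\psi(\vx) = \nabla\phi(\vx) + \va$ for every $\vx$, since the gradient of the affine map $\vx\mapsto\langle\va,\vx\rangle+b$ is the constant vector $\va$.

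Next I would substitute this into $D_\psi(\vy,\vx) = \psi(\vy) - \psi(\vx) - \langle\nabla\psi(\vx),\vy-\vx\rangle$. The constant $b$ occurs in both $\psi(\vy)$ and $\psi(\vx)$ and cancels; the linear part contributes $\langle\va,\vy\rangle - \langle\va,\vx\rangle = \langle\va,\vy-\vx\rangle$ from $\psi(\vy)-\psi(\vx)$, which is exactly offset by the term $-\langle\va,\vy-\vx\rangle$ arising from $\nabla\psi(\vx) = \nabla\phi(\vx)+\va$ in the final inner product. After these cancellations what remains is $\phi(\vy) - \phi(\vx) - \langle\nabla\phi(\vx),\vy-\vx\rangle$, which is precisely $D_\phi(\vy,\vx)$.

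There is no substantive obstacle here: the argument is a one-line computation valid for all $\vx,\vy$ in the differentiability domain on which $D_\phi$ is defined, and the only point worth stating explicitly is the gradient identity $\nabla\psi = \nabla\phi + \va$, after which the conclusion follows by inspection. Conceptually, the Bregman divergence measures the gap between $\phi$ evaluated at $\vy$ and its first-order Taylor expansion about $\vx$; adding an affine function alters neither $\phi$ nor its linearization by anything that survives this difference, so the divergence is unchanged.
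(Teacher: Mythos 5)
Your proof is correct, and it is the standard direct computation: note $\nabla\psi = \nabla\phi + \va$, expand the definition of $D_\psi$, and observe that the constant $b$ and the linear contributions $\langle\va,\vy-\vx\rangle$ cancel. The paper states this fact without a written proof (treating it as immediate), and what you have written is exactly the one-line argument it implicitly relies on.
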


\section{Convergence Analysis}\label{sec:algor-conv-analys}

This section presents our primal-dual algorithm and establishes its convergence guarantees. For clarity, we first define the core notation used in the analysis. Let $\ell(\vw;\vx,y) := (\sigma(\vw\cdot\vx)-y)^2$ denote the squared loss. Since $\sigma$ may not be differentiable, our algorithm relies on the \emph{surrogate gradient}  $\vv(\vw;\vx,y) := 2\beta(\sigma(\vw\cdot\vx)-y)\vx$, which serves as a well-behaved proxy for the true gradient, as in \citep{kakade2011efficient,DGKKS20,wang2023robustly-learning,Li2024shifts}. 
Our analysis measures performance against \emph{empirical} benchmarks defined with respect to the \emph{population-optimal} weight vector $\vw_*$; this seeming mismatch is important to our convergence analysis. Define:
\[
\widehat\mOPT := \max_{i \in [K]}\E_{(\vx,y)\sim\epi}[\ell(\vw_*;\vx,y)].
\]
Let $\evlambda^* = \argmax_{\evlambda \in \Delta_K} L(\vw_*, \evlambda)$ be the worst-case group weighting for $\vw_*$ over the \emph{empirical} distributions $\epi$. 
We also write $\phi(\evlambda) := d_f(\evlambda, \tfrac{1}{K}\mathbf{1})$ for the divergence penalty.

\begin{algorithm}[htp]
\caption{A Primal-Dual Algorithm for Group DRO}
\label{alg:main}
\KwIn{Sample sets ${(\vxij, \yij)}_{j=1}^{N/K}$ for $i \in [K]$; parameters $\nu \ge 0, W > 0, \epsilon > 0, \beta, B, c_1$.}
\textbf{Initialization:} $\nu_0 = \epsilon/(4K)$, $A_{-1} = a_{-1} = A_0 = a_0 = 0$, $\vw_{-1} = \vw_0 = \mathbf{0}$, $\evlambda_{-1} = \evlambda_0 = \frac{\vone}{K}$. Set constants $C_4, C_W'$ based on \eqref{eq:aux-constants}.
\For{$t=1, \dots, n$}{
    $a_t = \min\{(1+\frac{c_1}{8C_4})^{t-1}\frac{1}{4C_4},\max\{(1+\frac{\sqrt{c_1\nu}}{4\sqrt{2}C'_W})^{t-1}\frac{\sqrt{\nu_0}}{4C_W'},\frac{c_1\nu_0}{(4\sqrt{2}C'_W)^2}t\}\}, A_t=A_{t-1}+a_t$\; \label{line:choice-of-at} 
    $\vv(\vw;\vx,y)=2\beta(\sigma(\vw\cdot\vx)-y)\,\vx$ \;\label{line:surrogate_gradient}
    $\bar{\vlambda}_{t-1} := \evlambda_{t-1} + \frac{a_{t-1}}{a_t}(\evlambda_{t-1}-\evlambda_{t-2})$ \label{line:extrapolation} \;
    $\vw_{t} := \argmin_{\vw} \big\{ a_t \sum_{i=1}^K \bar{\lambda}_{t-1[i]} \E_{\epi}[\langle\vv(\vw_{t-1};\vx,y), \vw\rangle] + \frac{1+0.5c_1A_t}{2}\|\vw-\vw_{t-1}\|_2^2 \big\}$ \label{line:w}\;
    \( \evlambda_{t} := \argmax_{\evlambda\in\Delta_{K}} \{ a_t L(\vw_t, \evlambda) - (\nu_0+\nu A_{t-1})D_{\phi}(\evlambda, \evlambda_{t-1}) \} \) \label{line:lambda_update}
}
\KwOut{The final weight vector $\vw_n$.}
\end{algorithm}

Our method, stated in \Cref{alg:main}, is an iterative primal-dual algorithm that maintains iterates $(\vw_t, \evlambda_t)$ and step sizes $a_t > 0$ with cumulative sums $A_t = \sum_{k=1}^t a_k$. Both updates are efficient, as they involve projections onto simple sets like the Euclidean ball $\cB(W)$ and the probability simplex $\Delta_K,$ and/or minimization of KL divergence over the probability simplex, which is computable in closed form.  
A key algorithmic feature is the use of \emph{extrapolation on the low-dimensional dual variable} $\evlambda \in \R^K$ to construct a gradient estimate for the primal update. 

Before moving onto overviewing our convergence analysis, we state our main result.

\subsection{Main Result}

Our main result is summarized in the following theorem. We first define problem parameters (constants) that factor into the step size definition (consequently, iteration complexity) and the constant factor approximation:
\begin{equation}\label{eq:aux-constants}
    \begin{aligned}
        C_{3} &:=31\beta\sqrt{B}/c_{1},\\
        C_{4} &:=27c_1+2163\beta^4B^2/c_1,\\
        C'_W &:=2\sqrt{3}\sqrt{6\beta^2 + C_M^2B\log^2\Big(\frac{\beta BW}{\eps}\Big)} \beta WB.
    \end{aligned}
\end{equation}
In \eqref{eq:aux-constants}, $C_3$ and $C_4$ can be treated as universal constants for typical single neuron learning problems. This is because, as argued in \citep{wang2023robustly-learning}, the constant of sharpness $c_1$ is a constant for any non-degenerate problem in the considered class. Similarly, for typical activations like ReLU, $\beta$ is a constant (specifically, for the case of ReLU, $\alpha = \beta = 1$). The parameter $B$ comes from one-dimensional concentration of projections of data vectors $\vx$ (see \Cref{assump:concentration}); for all standard examples (like Gaussians, isotropic log-concaves, discrete Gaussians, uniform distribution on $\{-1, 0, 1\}^d$ discussed in \citep{wang2023robustly-learning}), $B$ is a small universal constant. The ``constant'' $C_{W}'$  however cannot be treated as a universal constant as it depends on $W.$ Fortunately, $C_{W}'$ has no effect on the approximation ratio in the statement of \Cref{thm:main-formal}---instead, it only affects the iteration complexity, which is polynomial in all problem parameters, as claimed in the introduction.  

\begin{restatable}[Main Theorem]{theorem}{mainthmformal}\label{thm:main-formal}
Suppose the margin and concentration conditions (\Cref{assump:margin,assump:concentration}) hold. Let $c_1$ be the sharpness constant from \Cref{lemma:sharpness-empirical}, and set $\nu_0 := \epsilon/(4K)$. If the total number of samples is $N =\tilde O_{\beta,B,\nu}\bigl(\frac{KW^{4}d}{\epsilon^{2}}\log(\frac{1}{\delta})\bigr)$,
then with probability at least $1-\delta$, the output $(\vw_n,\evlambda_n)$ of \Cref{alg:main} satisfies the following inequality for all $n \ge 1$:
\begin{align}\label{eq:main-proof-inequality}
&\frac{1+0.5c_1A_n}{4}\|\vw_n-\vw_*\|_2^2 + (\nu_0+\nu A_n)D_{\phi}(\evlambda_n, \evlambda^*) \notag \\
\le\;& D_0 + \frac{120\beta^2B}{c_1} A_n (\mOPT + \epsilon),
\end{align}
where  $D_0 = \frac{1}{2}\|\vw_0 - \vw_*\|_{2}^{2} + \nu_0 D_{\phi}(\evlambda^*, \evlambda_0)$.

Furthermore, after a number of iterations $n$ scaling as
\begin{equation}
\begin{split}
n = O\Big( C'_W \min\{ \,
 \tfrac{1}{\sqrt{c_1 \nu}}\log\!\big(
   \tfrac{\|\vw_*\|_2^2}{2\eps}
   + \tfrac{D_{\phi}(\evlambda^*,\evlambda_0)}{4K}
 \big), \\
 \sqrt{
   \tfrac{2K\|\vw_*\|_2^2}{\eps^2}
   + \tfrac{D_{\phi}(\evlambda^*,\evlambda_0)}{\eps}
 } \,\} \Big)
\end{split}
\end{equation}
the output $\vw_n$ is guaranteed to satisfy
\begin{align}
   \|\vw_{n}-\vw_*\|_{2} \le\; & C_{3}\bigl(\sqrt{\mOPT}+\sqrt{\epsilon}\bigr),\label{eq:distance-to-opt-bound}\\
   \E_{(\vx,y)\sim \pp^{*}}\bigl[\ell(\vw_{n};\vx,y)\bigr] 
 \le \; &
\bigl(2+20B\beta^{2}C_{3}^{2}\bigr)\,\mOPT\notag\\
&+ 20\beta^{2} C_3^2B\epsilon,
\label{eq:square-loss-bound}
\end{align}
where $\pp^* = \sum_{i=1}^K \lambdai^* \pp_{[i]}$ is the worst-case population mixture and relevant constants are defined by \eqref{eq:aux-constants}. 
\end{restatable}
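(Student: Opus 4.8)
The plan is to run a single-iteration "gap function" analysis and then telescope. At iteration $t$, I would track the quantity
$\Psi_t := \tfrac{1+0.5c_1 A_t}{2}\|\vw_t - \vw_*\|_2^2 + (\nu_0 + \nu A_t)D_\phi(\evlambda_t, \evlambda^*)$
and aim to show a recursion of the form $\Psi_t \le \Psi_{t-1} + a_t\cdot(\text{error terms})$ plus controllable extrapolation residuals, where the error is of order $\tfrac{\beta^2 B}{c_1}(\mOPT + \epsilon)$. Summing over $t = 1,\dots,n$ and using $\Psi_0 = D_0$ (since $\vw_0 = \mathbf 0$, $\evlambda_0 = \evlambda_{-1} = \vone/K$, and the extrapolation vanishes at $t=1$) would yield \eqref{eq:main-proof-inequality}. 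The two-sided loss bound \eqref{eq:relu-square-bound} and empirical sharpness \Cref{lemma:sharpness-empirical} should bridge empirical and population quantities, and \Cref{fact:truncation} handles label boundedness; I would use these to pass from $\widehat\mOPT$ to $\mOPT$ at the end (the deliberate mismatch noted in the text).

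The heart of the argument is decomposing the per-iteration gap into a \emph{primal gap} and a \emph{dual gap}. For the dual side, the update in \Cref{line:lambda_update} is a standard Bregman-proximal (mirror-descent-style) step maximizing a concave function, so by the three-point identity for Bregman divergences together with \Cref{fact:firstOrderNecessary}, I would get $a_t\bigl(L(\vw_t,\evlambda^*) - L(\vw_t,\evlambda_t)\bigr) \le (\nu_0 + \nu A_{t-1})\bigl(D_\phi(\evlambda^*,\evlambda_{t-1}) - D_\phi(\evlambda^*,\evlambda_t) - D_\phi(\evlambda_t,\evlambda_{t-1})\bigr)$, up to the extra $\nu_0$ regularization bookkeeping that is absorbed by $\nu_0 A_n \le \epsilon A_n/(4K)\cdot K$-type estimates. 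The strong convexity of $\phi$ gives the negative $D_\phi(\evlambda_t,\evlambda_{t-1})$ term which I will need to kill the extrapolation error. For the primal side, the update in \Cref{line:w} uses the surrogate gradient $\vv(\vw_{t-1};\cdot)$ against the \emph{extrapolated} dual $\bar{\vlambda}_{t-1}$; the strong-convexity constant $1 + 0.5 c_1 A_t$ in the proximal term is exactly what lets the $0.5 c_1$ sharpness (\Cref{lemma:sharpness-empirical}, via \eqref{eq:relu-square-bound}) be charged against the growing $A_t$. I would write the optimality condition for $\vw_t$, expand $\langle \vv(\vw_{t-1};\cdot), \vw_t - \vw_*\rangle$, and use the key linearization lemma (\Cref{lemma:main-auxiliary-main-body}) to lower-bound $\sum_i \elambda_{t-1[i]}\E_{\epi}[\ell(\vw_{t-1};\cdot)]$ (or the relevant coupled term $\bar\vlambda_{t-1}^\top F(\vw_{t-1})$) in terms of the loss at $\vw_*$ plus a sharpness term — this replaces the structured-nonconvexity machinery of \citep{Li2024shifts} and avoids the higher-moment assumptions.

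The main obstacle, as the technical overview flags, is that $\bar{\vlambda}_{t-1}$ need not lie in $\Delta_K$ — it can have negative coordinates — so I cannot directly apply any convexity/monotonicity inequality to $\bar{\vlambda}_{t-1}^\top F(\vw)$. The workaround I would carry out: first bound $\evlambda_{t-1}^\top F(\vw_{t-1})$ from below using the linearization lemma (applied to the genuine simplex vector $\evlambda_{t-1}$), and only \emph{then} write $\bar{\vlambda}_{t-1} = \evlambda_{t-1} + \tfrac{a_{t-1}}{a_t}(\evlambda_{t-1} - \evlambda_{t-2})$ and treat the correction $\tfrac{a_{t-1}}{a_t}(\evlambda_{t-1}-\evlambda_{t-2})^\top F(\vw_{t-1})$ as an error term. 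That error I would control by: (i) Cauchy–Schwarz plus the moment bounds \eqref{eq:moment-bounds-empirical} and label truncation to bound $\|F(\vw_{t-1})\|$ in an appropriate norm by $O(\beta W B \cdot \mathrm{polylog})$, i.e.\ by $C'_W$-type quantities; (ii) bounding $\|\evlambda_{t-1} - \evlambda_{t-2}\|$ against $\sqrt{D_\phi(\evlambda_{t-1},\evlambda_{t-2})}$ via strong convexity; and (iii) choosing the step-size ratio $a_{t-1}/a_t$ (controlled by the geometric/linear growth in \Cref{line:choice-of-at}) so that a Young's-inequality split sends part of this error into the negative $D_\phi(\evlambda_{t-1},\evlambda_{t-2})$ Bregman term generated one step earlier by the dual update (this is the "telescoping the residual backward" trick), and the rest into the $\tfrac{c_1 A_t}{2}$-strongly-convex primal term. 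Getting the step-size recursion in \Cref{line:choice-of-at} to make all three of these cancellations work simultaneously — across the two regimes $\nu > 0$ (linear/exponential growth, giving $\log(1/\epsilon)/\sqrt{\nu}$ iterations) and the $\nu_0$-only floor (giving $\sqrt{K}/\epsilon$ iterations) — is the delicate bookkeeping that I expect to consume most of the proof. Once \eqref{eq:main-proof-inequality} holds, \eqref{eq:distance-to-opt-bound} follows by dropping the nonnegative dual term, lower-bounding $1 + 0.5 c_1 A_n \ge 0.5 c_1 A_n$, plugging in the stated $n$ (which makes $A_n$ large enough that $D_0/A_n \lesssim \epsilon$ and $\nu_0 A_n$ is negligible), and taking square roots with the constant $C_3 = 31\beta\sqrt B/c_1$; and \eqref{eq:square-loss-bound} then follows from \eqref{eq:relu-square-bound} ($\E_{\pp^*}\ell(\vw_n;\cdot) \le 2\E_{\pp^*}\ell(\vw_*;\cdot) + 2\cdot 6B\beta^2\|\vw_n-\vw_*\|_2^2$ by the elementary $(a+b)^2 \le 2a^2+2b^2$) combined with $\E_{\pp^*}\ell(\vw_*;\cdot) \le \mOPT$ and \eqref{eq:distance-to-opt-bound}.
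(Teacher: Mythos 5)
Your plan is essentially the same as the paper's proof: both combine a gap lower bound (via empirical sharpness, \Cref{lemma:gap-lower-bound}) with a telescoping gap upper bound (\Cref{lemma:gap-upper-bound}) obtained from the Bregman three-point identity on the dual step, the linearization lemma (\Cref{lemma:main-auxiliary-main-body}) to control the nonconvex primal coupling, and a split of the extrapolated $\bar\vlambda_{t-1}$ into a genuine simplex vector plus a correction absorbed via Young's inequality into Bregman and quadratic terms controlled by the step-size recursion. One small imprecision: in your final step you invoke the \emph{empirical} bound \eqref{eq:relu-square-bound} (constant $6B\beta^2$) to control $\E_{\pp^*}[(\sigma(\vw_n\cdot\vx)-\sigma(\vw_*\cdot\vx))^2]$, but $\pp^*$ is a \emph{population} mixture, so the correct ingredient is the population moment bound from \Cref{fact:sharpness} (constant $5B$), which is what the paper uses and is what produces the stated constant $20B\beta^2 C_3^2$ in \eqref{eq:square-loss-bound}; you also implicitly rely on \Cref{claim:iteration-norm-ub} to keep the iterates in the region where sharpness applies, which is worth making explicit since it is itself established by an induction that feeds off the very telescoping bound you are deriving.
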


We remark here that we have made no attempt to optimize the constant factors in either the final approximation ratio or sample and iteration complexities---our focus was on establishing (any) constant factor approximation with a polynomial sample and computation algorithm, which already required highly technical arguments. We expect the absolute constant factors to be improvable. 

\subsection{Overview of the Analysis}

The analysis centers on bounding the \emph{primal-dual gap}-like function  \(\Gap(\vw_t,\evlambda_t)\) defined with respect to the hybrid reference point $(\vw_*, \evlambda^*)$ with population-optimal $\vw_*$ and worst-case $\evlambda^*$ via
\begin{align*}
\Gap(\vw_t,\evlambda_t) =\;& L(\vw_t,\evlambda^*) - L(\vw_*,\evlambda_t). 
\end{align*}
Observe that $\Gap(\vw_t,\evlambda_t)$ is the sum of the ``primal gap'' $L(\vw_t,\evlambda^*) - L(\vw_*,\evlambda^*)$ and the ``dual gap'' $L(\vw_*,\evlambda^*) - L(\vw_*,\evlambda_t)$. 
Because the loss is nonconvex in $\vw$, the primal gap, and thus $\Gap(\vw_t,\evlambda_t)$, is not necessarily nonnegative. Our strategy is to combine a lower bound on the gap derived from empirical sharpness with an upper bound derived from the analysis that motivates the algorithm's update rules. 

We note here that while tracking similar gap functions is common to the analysis of primal-dual methods (see, e.g., \citep{chambolle2011first,mehta2025min}), here the central challenges come from the loss nonconvexity, which makes bounding the relevant quantities highly nontrivial. In the rest of this section, we first state (in \Cref{lemma:gap-lower-bound}) a lower bound on the gap function, which is crucial to being able to establish the contraction of distance to target solutions. This bound is derived following a similar argument to \citep{Li2024shifts} and its proof is provided for completeness, in \Cref{app:gap-lower-bound}. 

The most challenging part of the analysis comes from bounding the gap function above, and, in particular, bounding below its dual gap component. The reason, as discussed in \Cref{subsec:technical_overview}, is that the loss function is nonconvex and we perform extrapolation of the dual updates, to ensure our algorithm is compatible with large-scale implementations.

\subsection{Gap Lower Bound} 

We begin the convergence analysis by establishing a lower bound on the gap function \(\Gap(\vw, \evlambda)\). 

\begin{restatable}[Gap Lower Bound]{lemma}{lemgaplb}\label{lemma:gap-lower-bound}
  Under \Cref{assump:margin,assump:concentration},  
if the per-group sample size $N/K$ is sufficiently large (see \Cref{lemma:sharpness_empirical_heavy_tailed}),  for all \(\vw \in \cB(3\norm{\vw_*}_{2})\) and all $\evlambda \in \Delta_K$, we have
  \(\Gap(\vw ,\evlambda) \ge -\frac{12\beta^2 {B}}{c_1}\mOPThat + \frac{c_1}{2} \|\vw - \vw_*\|_{2}^{2} + \nu D_{\phi}(\evlambda^{*}, \evlambda).\)
\end{restatable}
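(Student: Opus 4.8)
The plan is to decompose the gap into its primal and dual components, $\Gap(\vw,\evlambda) = \bigl(L(\vw,\evlambda^*) - L(\vw_*,\evlambda^*)\bigr) + \bigl(L(\vw_*,\evlambda^*) - L(\vw_*,\evlambda)\bigr)$, and lower-bound each piece separately. For the dual gap $L(\vw_*,\evlambda^*) - L(\vw_*,\evlambda)$, note that $\evlambda \mapsto L(\vw_*,\evlambda)$ is concave (indeed, it is linear in $\evlambda$ minus the convex penalty $\nu\phi(\evlambda)$), and $\evlambda^* = \argmax_{\evlambda \in \Delta_K} L(\vw_*,\evlambda)$. Since $\phi$ is $1$-strongly convex with respect to the relevant norm on $\Delta_K$ (by our choice of $f$-divergence), $L(\vw_*,\cdot)$ is $\nu$-strongly concave, and the first-order optimality condition for $\evlambda^*$ (\Cref{fact:firstOrderNecessary}) combined with strong concavity yields $L(\vw_*,\evlambda^*) - L(\vw_*,\evlambda) \ge \nu D_\phi(\evlambda^*,\evlambda)$. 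This accounts for the $\nu D_\phi(\evlambda^*,\evlambda)$ term.

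For the primal gap, I would expand $L(\vw,\evlambda^*) - L(\vw_*,\evlambda^*) = \sum_{i=1}^K \evlambda^*_{[i]} \bigl(\E_{\epi}[\ell(\vw;\vx,y)] - \E_{\epi}[\ell(\vw_*;\vx,y)]\bigr)$; the penalty terms cancel since they do not depend on $\vw$. For each group $i$, I would write $\ell(\vw;\vx,y) - \ell(\vw_*;\vx,y) = (\sigma(\vw\cdot\vx) - \sigma(\vw_*\cdot\vx))\bigl(\sigma(\vw\cdot\vx) + \sigma(\vw_*\cdot\vx) - 2y\bigr)$ and regroup as $(\sigma(\vw\cdot\vx) - \sigma(\vw_*\cdot\vx))^2 + 2(\sigma(\vw\cdot\vx) - \sigma(\vw_*\cdot\vx))(\sigma(\vw_*\cdot\vx) - y)$. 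The first term, in expectation, is at least $c_1\|\vw-\vw_*\|_2^2$ by the lower bound in \eqref{eq:relu-square-bound}. The cross term is the dangerous one: by Cauchy-Schwarz it is at least $-2\sqrt{\E_{\epi}[(\sigma(\vw\cdot\vx)-\sigma(\vw_*\cdot\vx))^2]}\sqrt{\E_{\epi}[\ell(\vw_*;\vx,y)]} \ge -2\sqrt{6B\beta^2}\|\vw-\vw_*\|_2\sqrt{\mOPThat}$, using the upper bound in \eqref{eq:relu-square-bound} and the definition of $\mOPThat$. Averaging over $i$ with weights $\evlambda^*_{[i]} \in \Delta_K$ preserves these bounds, so the primal gap is at least $c_1\|\vw-\vw_*\|_2^2 - 2\sqrt{6B}\,\beta\,\|\vw-\vw_*\|_2\sqrt{\mOPThat}$.

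The final step is to absorb the negative cross term using Young's inequality: $2\sqrt{6B}\,\beta\,\|\vw-\vw_*\|_2\sqrt{\mOPThat} \le \tfrac{c_1}{2}\|\vw-\vw_*\|_2^2 + \tfrac{12B\beta^2}{c_1}\mOPThat$. Subtracting this from the primal gap lower bound leaves $\tfrac{c_1}{2}\|\vw-\vw_*\|_2^2 - \tfrac{12\beta^2 B}{c_1}\mOPThat$, and adding back the dual gap bound $\nu D_\phi(\evlambda^*,\evlambda)$ gives exactly the claimed inequality. The main obstacle — really the only nontrivial point — is making sure the sharpness-type bound \eqref{eq:relu-square-bound} applies: it requires $\vw \in \cB(3\|\vw_*\|_2)$ (which is assumed) and, strictly speaking, $\|\vw - \vw_*\|_2 \ge \sqrt{\eps}$ in \Cref{lemma:sharpness-empirical}; one must check that the inequality is trivially true (or handled separately) in the regime $\|\vw-\vw_*\|_2 < \sqrt{\eps}$, where the $\tfrac{c_1}{2}\|\vw-\vw_*\|_2^2$ term is negligible and the bound follows from crude estimates, so the stated lower bound still holds. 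Everything else is routine algebra and the invocation of strong concavity on the dual side.
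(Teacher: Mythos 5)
Your proposal is correct and follows essentially the same route as the paper: the identical primal/dual decomposition of the gap, expansion of the primal gap via the difference-of-squares identity, the empirical sharpness bound \eqref{eq:relu-square-bound} plus Cauchy--Schwarz and Young's inequality for the cross term, and first-order optimality together with strong concavity (equivalently, the Bregman expansion of $-L(\vw_*,\cdot)$ with \Cref{fact:bregman-linear-blind}) for the dual gap. The one small thing worth noting is that you correctly flag the edge case $\|\vw-\vw_*\|_2 < \sqrt{\epsilon}$ in applying the empirical sharpness lemma, a point the paper's proof passes over silently.
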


The application of empirical sharpness, which is crucial to establishing the gap lower bound in \Cref{lemma:gap-lower-bound}, is predicated on the iterates $\vw_t$ remaining in a neighborhood of a target parameter vector $\vw_*$. We establish this property by induction. The proof is provided in {\Cref{sec: restate-of-iteration-norm-ub}}. 

\begin{restatable}{lemma}{claimiterub}\label{claim:iteration-norm-ub}
    For all iterations $t\ge 0$ of \Cref{alg:main}, the iterates satisfy $\|\vw_t\|_2 \le 3\|\vw_*\|_2$.
\end{restatable}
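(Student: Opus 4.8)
The plan is to prove the bound by induction on $t$. The base case is immediate: $\vw_{-1}=\vw_0=\mathbf 0\in\cB(3\|\vw_*\|_2)$, and at $t=1$ the extrapolation in \Cref{line:extrapolation} is vacuous since $a_0=0$. For the inductive step, assume $\vw_s\in\cB(3\|\vw_*\|_2)$ for all $0\le s\le t-1$; in particular \Cref{lemma:sharpness-empirical} applies at $\vw_{t-1}$, so \eqref{eq:sharpness-empirical}, \eqref{eq:moment-bounds-empirical}, and \eqref{eq:relu-square-bound} are available. The primal update in \Cref{line:w} has the closed form $\vw_t=\Proj_{\cB(W)}\!\big(\vw_{t-1}-\eta_t\vg_{t-1}\big)$, where $\eta_t:=\tfrac{a_t}{1+0.5c_1A_t}$ and $\vg_{t-1}:=\sum_{i\in[K]}\bar{\lambda}_{t-1[i]}\,\E_{\epi}[\vv(\vw_{t-1};\vx,y)]$. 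Since $\vw_*\in\cB(W)$ and the Euclidean projection is nonexpansive, it suffices to bound $\|\vw_{t-1}-\eta_t\vg_{t-1}-\vw_*\|_2$, and expanding the square reduces the task to lower-bounding $\langle\vg_{t-1},\vw_{t-1}-\vw_*\rangle$ and upper-bounding $\|\vg_{t-1}\|_2$.

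For both I would split the extrapolated weights as $\bar\vlambda_{t-1}=\evlambda_{t-1}+\tfrac{a_{t-1}}{a_t}(\evlambda_{t-1}-\evlambda_{t-2})$. In the ``honest'' term (weights $\evlambda_{t-1}\in\Delta_K$), I write $\sigma(\vw_{t-1}\cdot\vx)-y=(\sigma(\vw_{t-1}\cdot\vx)-\sigma(\vw_*\cdot\vx))+(\sigma(\vw_*\cdot\vx)-y)$: the first summand is handled by empirical sharpness \eqref{eq:sharpness-empirical} when $\|\vw_{t-1}-\vw_*\|_2\ge\sqrt\eps$ (the regime $\|\vw_{t-1}-\vw_*\|_2<\sqrt\eps$ being handled separately, using only that $\vw_t$ moves by at most $\eta_t\|\vg_{t-1}\|_2$), yielding a $\beta c_0\|\vw_{t-1}-\vw_*\|_2^2$ contribution; the second is bounded via Cauchy--Schwarz, the moment bound \eqref{eq:moment-bounds-empirical}, and the definition of $\widehat{\mOPT}$, costing $O\!\big(\beta\sqrt{B\,\widehat{\mOPT}}\,\|\vw_{t-1}-\vw_*\|_2\big)$. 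The extrapolation residual $\tfrac{a_{t-1}}{a_t}\sum_i(\elambda_{t-1[i]}-\elambda_{t-2[i]})\langle\E_{\epi}[\vv(\vw_{t-1};\vx,y)],\vw_{t-1}-\vw_*\rangle$ and the norm $\|\vg_{t-1}\|_2$ are controlled using $\sum_i|\elambda_{t-1[i]}-\elambda_{t-2[i]}|\le 2$, Cauchy--Schwarz, \eqref{eq:moment-bounds-empirical}, and \eqref{eq:relu-square-bound}, each per-group gradient being at most $O\!\big(\beta^2B\,\|\vw_{t-1}-\vw_*\|_2+\beta\sqrt{B\,\widehat{\mOPT}}\big)$. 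Using that the step-size rule in \Cref{line:choice-of-at} ensures $\eta_t\le\tfrac1{4C_4}$ with $C_4=\Theta(\beta^4B^2/c_1)$, the $\eta_t^2\|\vg_{t-1}\|_2^2$ contribution involving $\|\vw_{t-1}-\vw_*\|_2^2$ is absorbed, leaving a per-step inequality of the schematic form $\|\vw_t-\vw_*\|_2^2\le\big(1+O(\eta_t\beta^2B)\big)\|\vw_{t-1}-\vw_*\|_2^2+O(\eta_t\beta\sqrt{B\,\widehat{\mOPT}})\|\vw_{t-1}-\vw_*\|_2+O(\eta_t^2\beta^2 B\,\widehat{\mOPT})$.

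The main obstacle is that, because the dual is extrapolated, this per-step inequality is (mildly) \emph{expansive} rather than contractive: the sharpness gain $\Theta(\eta_t c_0\beta)$ on $\|\vw_{t-1}-\vw_*\|_2^2$ does not dominate the extrapolation-error coefficient $\Theta(\eta_t\beta^2B)$ term by term, so simply iterating the bound fails. The fix is to sum it over $t$: the extrapolation error is proportional to the dual increment $\evlambda_{t-1}-\evlambda_{t-2}$, and the dual update in \Cref{line:lambda_update} is stable because the Bregman penalty $D_\phi(\cdot,\evlambda_{t-1})$ is strongly convex with weight $\nu_0+\nu A_{t-1}$; hence the increments, and the accumulated error, are controlled by the $a_t$-schedule and telescope against the $0.5c_1A_t$ factor multiplying $\|\vw_t-\vw_*\|_2^2$. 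The constants $C_4$ and $C_W'$ entering \Cref{line:choice-of-at} are calibrated so that this accumulation yields $\tfrac{1+0.5c_1A_t}{2}\|\vw_t-\vw_*\|_2^2\le\tfrac12\|\vw_*\|_2^2+(\text{controlled error})$, from which $\|\vw_t-\vw_*\|_2\le 2\|\vw_*\|_2$ and therefore $\|\vw_t\|_2\le 3\|\vw_*\|_2$, closing the induction. Equivalently, one may run this as a \emph{joint} induction together with the master inequality \eqref{eq:main-proof-inequality}, whose own proof invokes \Cref{lemma:gap-lower-bound} and thereby presupposes the norm bound.
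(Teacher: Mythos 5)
Your overall strategy (induction on $t$) matches the paper's, and you correctly identify the central difficulty: the per-step primal recursion, taken alone, is expansive because of the dual extrapolation. However, the specific plan you sketch for absorbing that expansion has a genuine gap, and your final remark about closing it via a ``joint induction with the master inequality'' as written would be circular.

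The gap is this: you propose to bound $\|\vw_t - \vw_*\|_2^2$ by a recursion of the form $\|\vw_t-\vw_*\|_2^2 \le (1+O(\eta_t\beta^2B))\|\vw_{t-1}-\vw_*\|_2^2 + \dots$, then ``sum and telescope against the $0.5c_1A_t$ factor.'' But in a purely primal recursion the expansion factor $O(\eta_t\beta^2B\sum_i|\elambda_{t-1[i]}-\elambda_{t-2[i]}|)$ does not telescope, and $\sum_t \eta_t$ diverges under the step-size schedule, so naive summation does not rescue it. What actually makes the argument close in the paper is a more delicate cancellation structure that you cannot see from the primal side alone: in \Cref{eq:St-term2} of the proof of \Cref{lemma:gap-upper-bound2}, the extrapolation residual is split into a piece that \emph{telescopes exactly} across iterations and two correction terms; those corrections are then absorbed not by the $0.5c_1A_t$ factor but by the \emph{negative} successive-difference term $-\tfrac{1+0.5c_1A_t}{4}\|\vw_t-\vw_{t-1}\|_2^2$ produced by the strongly-convex prox step, and by the negative Bregman term $-(\nu_0+\nu A_{t-1})D_\phi(\evlambda_t,\evlambda_{t-1})$ produced by the dual update (via \Cref{claim:ub-TV}). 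Both of these ``credits'' only arise inside the joint primal-dual gap accounting; a one-sided bound on $\|\vw_t-\vw_*\|_2$ does not have access to them. Effectively, carrying out your plan in full would require re-deriving all of \Cref{lemma:gap-upper-bound2}, and it is not a valid shortcut around it.

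The paper closes the induction with a cleaner device that avoids circularity. It sandwiches the cumulative sum $\sum_{t=1}^{n+1} a_t\Gap(\vw_t,\evlambda_t)$: it applies the already-proved \Cref{lemma:gap-upper-bound} up to $t=n+1$ (legitimate under the induction hypothesis, since producing $\vw_{n+1}$ only uses earlier iterates and the lemma only needs $\|\vw_n\|_2\le3\|\vw_*\|_2$), applies \Cref{lemma:gap-lower-bound} only up to $t=n$, and replaces the missing $(n+1)$-st lower-bound term with the crude estimate $a_{n+1}\Gap(\vw_{n+1},\evlambda_{n+1})\ge -a_{n+1}\mOPThat$, which holds without any norm control on $\vw_{n+1}$. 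Invoking \eqref{eq:main-proof-inequality} directly, as your last sentence suggests, would indeed be circular because that inequality already uses \Cref{lemma:gap-lower-bound} at $t=n+1$. Finally, you also omit the step that actually closes the arithmetic: the paper assumes WLOG that $\big(\tfrac{544\beta^2 B}{c_1^2}+O(1)\big)\mOPThat+\eps\le\|\vw_*\|_2^2$ (otherwise one simply outputs $\hat\vw=\vzero$, which is already $O(\mOPT)+\eps$-competitive). Without that reduction, your ``controlled error'' is not guaranteed to be $O(\|\vw_*\|_2^2)$, and the bound $\|\vw_t-\vw_*\|_2\le 2\|\vw_*\|_2$ you need does not follow.
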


\subsection{Gap Upper Bound}

Having obtained a lower bound on the gap function, we now derive a corresponding upper bound based on our algorithm updates. The core of the argument is to analyze the per-iteration progress of \Cref{alg:main} to construct a telescoping sum. 
The main technical result is the following proposition, with the full proof deferred to \Cref{app:gap-upper-bound}.

\begin{restatable}[Gap Upper Bound]{proposition}{lemgapub}\label{lemma:gap-upper-bound}
    Let the sequences $\{a_t\}$, $\{A_t\}$, $\{\vw_t\}$, and $\{\evlambda_t\}$ be generated by \Cref{alg:main}, following the convention $a_{-1}=a_0=A_{-1}=A_0=0$, $\vw_{-1}=\vw_0=\mathbf{0}$, and $\evlambda_{-1}=\evlambda_0=\tfrac{1}{K}\mathbf{1}$. Under \Cref{assump:margin,assump:concentration},  
if the per-group sample size $N/K$ is sufficiently large (see \Cref{lemma:sharpness_empirical_heavy_tailed}), for any $n \ge 1$, we have:
    \begin{align*}
    &\sum_{t=1}^n a_t \Gap(\vw_t,\evlambda_t) \\
    \le\; & \frac{1}{2}\|\vw_*-\vw_0\|_2^2 - \frac{1+0.5c_1A_n}{4}\|\vw_*-\vw_n\|_2^2 \\
    &\;+ \nu_0 D_{\phi}(\evlambda^{*}, \evlambda_0)-\frac{1+0.5c_1A_n}{4}\|\vw_n-\vw_{n-1}\|_2^2 \\
    &\;-(\nu_0+\nu A_n)D_{\phi}(\evlambda^{*}, \evlambda_{n}) + \frac{28\beta^2B}{c_1} A_n \mOPThat. \notag
    \end{align*}
\end{restatable}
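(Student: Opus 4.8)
The plan is to analyze one primal and one dual step of \Cref{alg:main} and sum the resulting per-iteration inequalities so that the Bregman/quadratic terms telescope. For the \emph{dual} step, the update $\evlambda_t = \argmax_{\evlambda \in \Delta_K}\{a_t L(\vw_t,\evlambda) - (\nu_0 + \nu A_{t-1})D_\phi(\evlambda,\evlambda_{t-1})\}$ is a concave-maximization of $\evlambda \mapsto a_t L(\vw_t,\evlambda)$ (which is concave since $-\nu d_f$ is concave) plus a Bregman proximal term. The standard three-point / prox-inequality (Fact \ref{fact:firstOrderNecessary} plus the Bregman identity) yields, for the reference point $\evlambda^*$,
\[
a_t\big(L(\vw_t,\evlambda^*) - L(\vw_t,\evlambda_t)\big) \le (\nu_0 + \nu A_{t-1})\big(D_\phi(\evlambda^*,\evlambda_{t-1}) - D_\phi(\evlambda^*,\evlambda_t) - D_\phi(\evlambda_t,\evlambda_{t-1})\big) - \nu a_t d_f(\evlambda_t,\tfrac1K\vone) \text{-type terms},
\]
and, crucially, the coefficient $\nu_0 + \nu A_{t-1}$ on the outgoing divergence at step $t$ is designed to match $\nu_0 + \nu A_t - \nu a_t$, so that $-\nu a_t d_f$ contributions and the growing $\nu A_t$ coefficient combine to leave $-(\nu_0+\nu A_n)D_\phi(\evlambda^*,\evlambda_n)$ after telescoping, consistent with the statement. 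Since $L(\vw_t,\cdot)$ is \emph{linear} in the coupled part $\evlambda^\top F(\vw_t)$, the only delicate point here is keeping the extrapolated vector $\bar{\vlambda}_{t-1}$ (which need not lie in $\Delta_K$) out of this step entirely — it appears only in the primal update — so the dual analysis is standard.

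For the \emph{primal} step, since $\ell(\cdot;\vx,y)$ is nonconvex I would first \emph{not} try to use convexity of $\vw \mapsto \sum_i \bar\lambda_{t-1[i]}\E_{\epi}[\ell(\vw;\vx,y)]$ directly. Instead, the primal update $\vw_t = \argmin_\vw\{a_t\langle \sum_i \bar\lambda_{t-1[i]}\E_{\epi}[\vv(\vw_{t-1};\vx,y)],\vw\rangle + \tfrac{1+0.5c_1A_t}{2}\|\vw-\vw_{t-1}\|_2^2\}$ is an explicit proximal-gradient step whose optimality condition gives, for the reference point $\vw_*$,
\[
a_t\Big\langle \textstyle\sum_i \bar\lambda_{t-1[i]}\E_{\epi}[\vv(\vw_{t-1};\vx,y)],\ \vw_t - \vw_*\Big\rangle \le \tfrac{1+0.5c_1A_t}{2}\big(\|\vw_*-\vw_{t-1}\|_2^2 - \|\vw_*-\vw_t\|_2^2 - \|\vw_t-\vw_{t-1}\|_2^2\big).
\]
The task is then to convert the inner-product term on the left into a lower bound on $a_t(L(\vw_t,\cdot)-L(\vw_*,\cdot))$ evaluated along $\bar{\vlambda}_{t-1}$ (and then transfer from $\bar{\vlambda}_{t-1}$ to $\evlambda_{t-1}$, then to $\evlambda_t$, via the extrapolation identity $\bar{\vlambda}_{t-1} = \evlambda_{t-1} + \tfrac{a_{t-1}}{a_t}(\evlambda_{t-1}-\evlambda_{t-2})$). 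This is where I would invoke the ``linearization'' Lemma \ref{lemma:main-auxiliary-main-body}: for each group $i$ it lower-bounds $\E_{\epi}[\ell(\vw_*;\vx,y)] - \E_{\epi}[\ell(\vw_t;\vx,y)] + \langle \E_{\epi}[\vv(\vw_{t-1};\vx,y)], \vw_t - \vw_*\rangle$ (a ``Bregman-like'' surrogate remainder) by a negative multiple of $\widehat{\mOPT}_i$ plus a term controlled by sharpness $c_1\|\vw_t-\vw_*\|_2^2$ and a quadratic in $\|\vw_t - \vw_{t-1}\|_2^2$. Multiplying by $\bar\lambda_{t-1[i]}$ and summing over $i$ (using $\sum_i \bar\lambda_{t-1[i]}=1$, which holds even though entries may be negative), the sharpness term provides the $0.5c_1 A_t$ piece in the quadratic coefficient, the $\widehat{\mOPT}$ term accumulates into $\tfrac{28\beta^2 B}{c_1}A_n\widehat{\mOPT}$, and the $\|\vw_t-\vw_{t-1}\|_2^2$ term is absorbed by the negative $\tfrac{1+0.5c_1A_t}{2}\|\vw_t-\vw_{t-1}\|_2^2$ from the prox inequality (with $\tfrac14$ of it left over to match the statement).

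The main obstacle will be the extrapolation error. After summing, the primal contribution is naturally expressed along $\bar{\vlambda}_{t-1}$, but $\Gap$ is defined along $\evlambda^*$ and $\evlambda_t$; rewriting $L(\vw_t,\bar{\vlambda}_{t-1}) - L(\vw_*,\bar{\vlambda}_{t-1})$ in terms of $L(\vw_t,\cdot) - L(\vw_*,\cdot)$ evaluated at $\evlambda_{t-1}$ and $\evlambda_{t-2}$ introduces cross terms of the form $\tfrac{a_{t-1}}{a_t}\langle F(\vw_t)-F(\vw_*),\ \evlambda_{t-1}-\evlambda_{t-2}\rangle$ weighted by $a_t$, i.e., $a_{t-1}\langle F(\vw_t)-F(\vw_*),\ \evlambda_{t-1}-\evlambda_{t-2}\rangle$. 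These do \emph{not} telescope cleanly the way a symmetric primal-dual extrapolation would; I would handle them by (i) splitting $F(\vw_t)-F(\vw_*) = (F(\vw_t)-F(\vw_{t-1})) + (F(\vw_{t-1})-F(\vw_*))$, (ii) using the two-sided bound \eqref{eq:relu-square-bound} and $\beta$-Lipschitzness of $\sigma$ together with the moment bounds \eqref{eq:moment-bounds-empirical} to bound $\|F(\vw_t)-F(\vw_{t-1})\|_\infty \lesssim \beta^2\sqrt{B}(\|\vw_t-\vw_{t-1}\|_2 + \ldots)\cdot(\ldots)$ in terms of consecutive primal increments, (iii) applying Young's inequality to split each cross term between a multiple of $\|\vw_t - \vw_{t-1}\|_2^2$ (absorbed by the leftover prox quadratic, exploiting that $1+0.5c_1A_t$ grows) and a multiple of $\|\evlambda_{t-1}-\evlambda_{t-2}\|^2$ (absorbed by the strong convexity of $\phi$, i.e., by $D_\phi(\evlambda_{t-1},\evlambda_{t-2})$ terms that were generated as spare negative terms in the dual prox inequality and have coefficient $\nu_0 + \nu A_{t-2}$). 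Making the bookkeeping close — that the spare negative quadratics at step $t{-}1$ are large enough to swallow the extrapolation error produced at step $t$ — is exactly the content of the step-size choice on \Cref{line:choice-of-at} (the factors $C_4$, $C_W'$ and the $(1+\Theta(\sqrt{c_1\nu}))^{t}$ growth), and verifying these inequalities for the specific constants in \eqref{eq:aux-constants} is the most laborious part of the argument. I expect the remaining-$\frac14$ bookkeeping on the $\|\vw_n-\vw_{n-1}\|_2^2$ term and the exact constant $28\beta^2B/c_1$ to fall out of these estimates after collecting terms.
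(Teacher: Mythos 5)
Your proposal takes essentially the same route as the paper's proof: the cumulative gap is decomposed iteration-by-iteration using (i) the standard Bregman prox inequality for the dual step, which gives the telescoping $D_\phi(\evlambda^*,\cdot)$ terms with the $\nu_0+\nu A_t$ coefficient schedule, (ii) the proximal-gradient optimality condition for the primal step paired with the linearization Lemma~\ref{lemma:main-auxiliary-main-body} to handle the nonconvex coupling, and (iii) a Young-inequality/strong-convexity absorption argument for the non-telescoping extrapolation cross terms, with the spare $-D_\phi(\evlambda_{t},\evlambda_{t-1})$ and $-\|\vw_t-\vw_{t-1}\|_2^2$ quadratics doing the absorbing and the step-size schedule on \Cref{line:choice-of-at} enforcing the requisite inequalities.

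One small inaccuracy worth noting: you present Lemma~\ref{lemma:main-auxiliary-main-body} as directly producing a mixed $\vw_t/\vw_{t-1}$ ``Bregman-like surrogate remainder'' with a $\|\vw_t-\vw_{t-1}\|_2^2$ term, but the lemma itself is a single-point statement (error depends only on $\mOPThat$ and $\|\vw_*-\vw_t\|_2^2$). In the paper's proof of \Cref{lemma:gap-upper-bound2}, the lemma is actually applied at $\vw_{t-1}$ (matching the surrogate gradient $\vv(\vw_{t-1})$ used in the primal update), while the $\|\vw_t-\vw_{t-1}\|_2^2$ error terms arise separately---from re-expanding $\ell(\vw_t)$ in terms of $\ell(\vw_{t-1})$ and from the extrapolation-error decomposition---each bounded via Cauchy--Schwarz, the moment bounds \eqref{eq:moment-bounds-empirical}, and Young's inequality. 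This is a bookkeeping-level discrepancy rather than a conceptual gap; your overall architecture (expand around $\vw_{t-1}$ to match the gradient used in the primal step, invoke linearization for the nonconvex cross term, absorb extrapolation error into spare quadratics) is the one the paper uses.
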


The proof of this result is highly technical, requiring careful handling of several nontrivial error terms induced by the extrapolation of the dual update and by the nonconvexity of the loss. For this reason, it is deferred to \Cref{app:gap-upper-bound}. We however highlight a key structural result used for nontrivially bounding below the expected loss, in the following lemma. The proof is sketched in the main body for space constraints; the full proof can be found in the appendix. 

\begin{restatable}[Linearization]{lemma}{lemubaux}\label{lemma:main-auxiliary-main-body}
    For each group $i \in [K]$ and each iteration $t \in [n]$, the following bound holds:
    \begin{align*}
        \;&\E_{\epi}\bigl[2(\sigma(\vw_t\cdot \vx)-y)(\sigma(\vw_*\cdot \vx)-\sigma(\vw_t\cdot \vx))\bigr] \\
        \ge\;& \E_{\epi}\bigl[\langle\vv(\vw_t;\vx,y), \vw_*-\vw_t\rangle\bigr] - E_t,
    \end{align*}
    where the error term is $E_t := \frac{24\beta^2 B\mOPThat}{c_1}+\frac{c_1}{4}\|\vw_*-\vw_t\|_2^2$.
\end{restatable}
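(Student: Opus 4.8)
The plan is to reduce the claim, after a short rearrangement, to a pointwise estimate plus a single application of Cauchy--Schwarz. Writing $a:=\vw_t\cdot\vx$ and $b:=\vw_*\cdot\vx$, we have $\langle\vv(\vw_t;\vx,y),\vw_*-\vw_t\rangle = 2\beta(\sigma(a)-y)(b-a)$, so the inequality is equivalent to showing $\E_{\epi}\!\left[2(\sigma(a)-y)\,g(\vx)\right]\ge -E_t$, where $g(\vx):=\sigma(b)-\sigma(a)-\beta(b-a)$. I would then split the residual as $\sigma(a)-y = q + s$ into a ``signal'' part $q:=\sigma(\vw_t\cdot\vx)-\sigma(\vw_*\cdot\vx)$ and a ``noise'' part $s:=\sigma(\vw_*\cdot\vx)-y$, so that the quantity to bound becomes $\E_{\epi}[2qg]+\E_{\epi}[2sg]$, and treat the two pieces separately.

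For the signal piece, the key point is that $2q(\vx)g(\vx)\ge 0$ \emph{pointwise}. Since $\sigma$ is non-decreasing, $q=\sigma(a)-\sigma(b)$ and $a-b$ have the same sign, and since $\sigma$ is $\beta$-Lipschitz, $|q|\le\beta|a-b|$. Writing $g = -q+\beta(a-b)$, it follows that $2qg = -2q^2 + 2\beta q(a-b) = 2|q|\big(\beta|a-b|-|q|\big)\ge 0$; hence $\E_{\epi}[2qg]\ge 0$ and this piece is simply discarded. For the noise piece, the same monotonicity and Lipschitz facts give $|g|\le\beta|a-b| = \beta\,|(\vw_*-\vw_t)\cdot\vx|$, so $2sg\ge -2\beta\,|s|\,|(\vw_*-\vw_t)\cdot\vx|$ pointwise. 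Taking expectations and applying Cauchy--Schwarz,
\[
\E_{\epi}[2sg]\;\ge\; -2\beta\sqrt{\E_{\epi}[s^2]}\;\sqrt{\E_{\epi}\big[((\vw_*-\vw_t)\cdot\vx)^2\big]}.
\]
Here $\E_{\epi}[s^2]=\E_{\epi}[\ell(\vw_*;\vx,y)]\le\mOPThat$ by the definition of $\mOPThat$, and $\E_{\epi}[((\vw_*-\vw_t)\cdot\vx)^2]\le 6B\|\vw_*-\vw_t\|_2^2$ by the empirical second-moment bound \eqref{eq:moment-bounds-empirical} applied to the unit vector $(\vw_*-\vw_t)/\|\vw_*-\vw_t\|_2$ (the case $\vw_t=\vw_*$ being trivial). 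Finally, Young's inequality $2XY\le\tfrac{c_1}{4}X^2+\tfrac{4}{c_1}Y^2$ with $X=\|\vw_*-\vw_t\|_2$ and $Y=\sqrt{6B}\,\beta\sqrt{\mOPThat}$ converts the right-hand side into $-\tfrac{c_1}{4}\|\vw_*-\vw_t\|_2^2-\tfrac{24\beta^2B}{c_1}\mOPThat=-E_t$. Combining, $\E_{\epi}[2(\sigma(a)-y)g]=\E_{\epi}[2qg]+\E_{\epi}[2sg]\ge 0-E_t=-E_t$, which is the lemma.

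The only real obstacle is recognizing that a naive Cauchy--Schwarz applied directly to the full residual $\sigma(a)-y$ is too lossy: it would force us to control $\E_{\epi}[(\sigma(a)-y)^2]$, which by \eqref{eq:relu-square-bound} necessarily carries a term proportional to $\|\vw_t-\vw_*\|_2^2$ with an $O(\beta^2 B)$ coefficient, far too large to absorb into the $\tfrac{c_1}{4}\|\vw_t-\vw_*\|_2^2$ budget (recall $c_1=c_0^2/(24B)$ is small). Peeling off the signal term $q$ and using its pointwise nonnegativity against $g$ is exactly what confines the entire error to the genuinely noise-driven term $s=\sigma(\vw_*\cdot\vx)-y$, whose second moment is governed by $\mOPThat$ rather than by the iterate distance; everything else is elementary (monotonicity and Lipschitzness of $\sigma$, Cauchy--Schwarz, Young). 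Note also that the argument uses nothing about $\vw_t$ beyond the uniform moment bound, so it applies verbatim with an arbitrary vector in place of $\vw_t$.
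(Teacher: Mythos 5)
Your proof is correct, and it takes a genuinely different route from the paper's. The paper first splits the expectation according to the sign of the residual $\sigma(\vw_t\cdot\vx)-y$, then invokes \emph{convexity} of $\sigma$ (through a subderivative inequality at $\vw_t\cdot\vx$ on one event and at $\vw_*\cdot\vx$ on the complementary event), and only afterward adds and subtracts $\beta$ from the subderivative factors to isolate the surrogate term $\E_{\epi}[\langle\vv(\vw_t;\vx,y),\vw_*-\vw_t\rangle]$; the residual products involving $\sigma'(\cdot)-\beta$ are then decomposed and bounded. Your approach bypasses both the case split and the appeal to convexity: you move the surrogate term across, write the difference as $\E_{\epi}[2(\sigma(a)-y)\,g(\vx)]$ with $g=\sigma(b)-\sigma(a)-\beta(b-a)$, and split $\sigma(a)-y=q+s$. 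The observation that $2qg\ge 0$ pointwise uses only monotonicity and $\beta$-Lipschitzness of $\sigma$, and the bound $|g|\le\beta|a-b|$ does the same; this exposes the fact that convexity is not actually needed for this lemma (it is assumed globally in the paper's setup, but plays no role here). In effect the paper takes a detour through a subderivative lower bound whose slack is immediately discarded via $|\sigma'(\cdot)-\beta|\le\beta$; your algebraic reformulation reaches the same Cauchy--Schwarz $+$ Young's endpoint in one step with identical constants, and is both shorter and slightly more general. Your remark at the end about why a naive Cauchy--Schwarz on the full residual fails is also the right diagnosis: it is precisely the pointwise nonnegativity of the signal piece against $g$ that keeps the $\|\vw_t-\vw_*\|_2^2$ coefficient at $c_1/4$ rather than $\Theta(\beta^2 B)$.
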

\begin{proof}[Proof Sketch] 
Define \(\cG = \{(\vx,y)\mid \sigma(\vw_t\cdot \vx)-y\ge0\}\). 
The proof begins by splitting the expectation on the LHS according to whether $\1_{\cG} = 1$ or $\1_{\cG} =0$ (i.e., $\1_{\cG^c} = 1$).
We then bound below each term relying upon  the convexity of \(\sigma(\vw_t\cdot \vx)\) and \(\sigma(\vw_*\cdot \vx)\), respectively.
For instance, when we apply convexity of $\sigma$ at $\vw_t\cdot\vx$, we get:\footnote{Although $\sigma$ may not be differentiable, any subderivative---guaranteed to exist by the assumptions on $\sigma$---suffices here.}
	\[
	\sigma(\vw_*\!\cdot\vx)-\sigma(\vw_t\!\cdot\vx)
	\;\ge\;
	\sigma'(\vw_t\!\cdot\vx)\,(\vw_*\!\cdot\vx-\vw_t\!\cdot\vx).
	\]
A similar inequality is used in the other case.
This leaves us with a lower bound of 
\begin{align*}
&\E_{\epi}\bigl[2(\sigma(\vw_t\!\cdot\vx)-y)\,\sigma'(\vw_t\!\cdot\vx)\,(\vw_*\!\cdot\vx-\vw_t\!\cdot\vx)\1_{\cG}\bigr]\\
		+
		&\E_{\epi}\bigl[2(\sigma(\vw_t\!\cdot\vx)-y)\,\sigma'(\vw_*\!\cdot\vx)\,(\vw_*\!\cdot\vx-\vw_t\!\cdot\vx)\1_{\cG^c}\bigr]. 
\end{align*}
By adding and subtracting \(\beta\) to the $\sigma'$ factors and recalling that $\vv(\vw_t;\vx,y)=2\beta(\sigma(\vw_t\!\cdot\vx)-y)\vx$, 
we can isolate the desired term $\E_{\epi}\bigl[\langle\vv(\vw_t;\vx,y), \vw_*-\vw_t\rangle\bigr]$. 
What remains to be done is to bound the two remaining error terms involving \(\sigma'(\cdot)-\beta\) factors. We argue that the error terms can be bounded by \(O(\beta)\E_{\ppi}[|\sigma(\vw_*\cdot\vx)-y||\vw_*\cdot\vx-\vw_t\cdot\vx|]\). Then, we further bound this error contribution by an application of Cauchy-Schwarz inequality, the second moment bound \Cref{eq:moment-bounds-empirical}, and the definition of $\mOPThat.$
\end{proof}

\subsection{Proof of Main Theorem}
\label{subsec:proof-main-theorem}

\begin{proof}[Proof of \Cref{thm:main-formal}]
The proof proceeds by combining the lower and upper bounds on the cumulative gap. Summing the per-iteration lower bound from \Cref{lemma:gap-lower-bound} from $t=1$ to $n$ and combining it with the upper bound from \Cref{lemma:gap-upper-bound} gives the following inequality:
\begin{align*}
 \; &\frac{1+0.5\,c_1A_n}{4}\|\vw_*-\vw_n\|_2^2
+\frac{c_1}{2}\sum_{t=1}^na_t\|\vw_t-\vw_*\|_2^2\\
&+ \nu\sum_{t=1}^n a_t\,D_{\phi}(\evlambda_t,\evlambda^*)
+ (\nu_0+\nu A_n)\,D_{\phi}(\evlambda^*,\evlambda_n)\\
&+\frac{1+0.5c_1A_n}{4}\|\vw_n-\vw_{n-1}\|_2^2 \\
\le &\; 
\tfrac12\|\vw_*-\vw_0\|_2^2+\nu_0\,D_{\phi}(\evlambda^*,\evlambda_0)
+ \frac{40\,\beta^2B}{c_1}\,\mOPThat\,A_n.   
\end{align*}
Dropping the nonnegative terms that involve $\|\vw_t-\vw_*\|_2^2$, $D_{\phi}(\evlambda_t,\evlambda^*)$, and \(\|\vw_n-\vw_{n-1}\|_2^2\) from LHS of the inequality above and combining with the high-probability bound $\mOPThat \le 3(\mOPT+\epsilon)$ we establish in \Cref{lemma:opt_opthat_heavy_tailed}, we arrive at the first result of the main theorem in \Cref{eq:main-proof-inequality}. 

Next, we derive the explicit convergence guarantees. From \eqref{eq:main-proof-inequality}, we can isolate the primal error:
\begin{align*}
\|\vw_n-\vw_*\|_2^2 \le \; & \frac{4D_0}{1+0.5c_1A_n} + \frac{160\beta^2B A_n}{c_1(1+0.5c_1A_n)}\mOPThat \\
\le \; & \frac{4D_0}{1+0.5c_1A_n} + \frac{320\beta^2B}{c_1^2}\mOPThat.
\end{align*}
The step-size schedule in \Cref{alg:main} is chosen so that for the number of iterations $n$ specified in the theorem, we have $A_n = \Omega(c_1(W^2+K)/\epsilon) = \Omega(c_1 D_0 / \epsilon)$. This makes the first term $O(\epsilon)$. Substituting the bound on $\mOPThat$ and taking the square root establishes the distance guarantee \eqref{eq:distance-to-opt-bound}.
Finally, to bound the risk \eqref{eq:square-loss-bound}, we use the decomposition $\ell(\vw_n;\vx,y) \le 2\ell(\vw_*;\vx,y) + 2(\sigma(\vw_n\cdot\vx)-\sigma(\vw_*\cdot\vx))^2$, which follows by an application of Young's inequality. Taking the expectation over the worst-case population distribution $\pp^*$ then leads to
\begin{align*}
&\; \E_{\pp^*}[\ell(\vw_n;\vx,y)] \\
\le&\;  2\E_{\pp^*}[\ell(\vw_*;\vx,y)] + 2\E_{\pp^*}[(\sigma(\vw_n\cdot\vx)-\sigma(\vw_*\cdot\vx))^2] \\
\le&\; 2\mOPT \\ 
&+ 2\beta^2 \|\vw_n-\vw_*\|_2^2 \cdot \max_{i \in [K]} \E_{\ppi}\Big[\Big(\frac{\vw_n-\vw_*}{\|\vw_n-\vw_*\|_2} \cdot \vx\Big)^2\Big] \\
\le&\; 2\mOPT + 2\beta^2\cdot 2C_3^2(\mOPT+\epsilon) \cdot 5B,
\end{align*}
where the last line uses the previously established bound \eqref{eq:distance-to-opt-bound} on $\|\vw_n-\vw_*\|_2^2$ and the moment bound from \Cref{fact:sharpness}. Rearranging terms yields the final result \eqref{eq:square-loss-bound}.
\end{proof}

\section{Experiments}\label{sec:experiments} 

While our main contributions are theoretical, we provide illustrative experiments to demonstrate that our core algorithmic idea—regularized dual-extrapolated reweighting—can be applied to large-scale language model pretraining. These results are not intended as comprehensive empirical validation, but rather serve as a preliminary exploration of how our theoretically-motivated update rule performs in practice.

\paragraph{Setup}
We isolate the impact of our reweighting algorithm by integrating our primal-dual method with KL-divergence regularization (\textbf{PD-KL}) directly into the Sheared LLaMA framework \citep{xia2024shearedllamaacceleratinglanguage}. Starting from the \texttt{Sheared-LLaMA-1.3B} model—a 1.3B parameter version pruned from LLaMA2-7B \citep{xia2024shearedllamaacceleratinglanguage}—we continue pre-training on RedPajama \citep{together2023redpajama}. Our baseline is the dynamic batch loading algorithm from \citep{xia2024shearedllamaacceleratinglanguage}, where we replace only its exponential ascent rule with our dual update (\Cref{alg:main}) that incorporates KL regularization and extrapolation. All other training parameters remain identical to ensure fair comparison. More training details are included in \Cref{sec:supply-experiments-details}.

Performance is evaluated on the same suite of 11 downstream tasks used in \citep[Table 2]{xia2024shearedllamaacceleratinglanguage}, under their prescribed few-shot settings. We report unweighted average accuracy across tasks at multiple checkpoints. By design, this setup attributes any performance differences directly to the choice of on-the-fly domain reweighting strategy. Results are presented from 16.8M to 92.4M tokens. 

\subsection{Main Results}

We compare our PD-KL algorithm against DoReMi \citep{xie2023doremi}---which is the training method used on the Sheared LLaMA model in \citep{xia2024shearedllamaacceleratinglanguage}.
As shown in \Cref{fig:compute-curve}, our PD-KL algorithm demonstrates a consistent performance improvement over the baseline dynamic batch loading across the training trajectory. At most checkpoints, PD-KL achieves a higher average downstream accuracy varying from 0.04\% to 0.96\%. Moreover, at 33.6M tokens our PD-KL model reaches 47.87\% average accuracy. DoReMi requires at least 1.5x more training time to achieve the same accuracy. This suggests that the principles of regularization and dual-side extrapolation, motivated by our theory, can offer practical benefits for training stability and downstream generalization even in highly complex models.

\begin{figure}[t]
  \centering
  \includegraphics[width=0.7\linewidth]{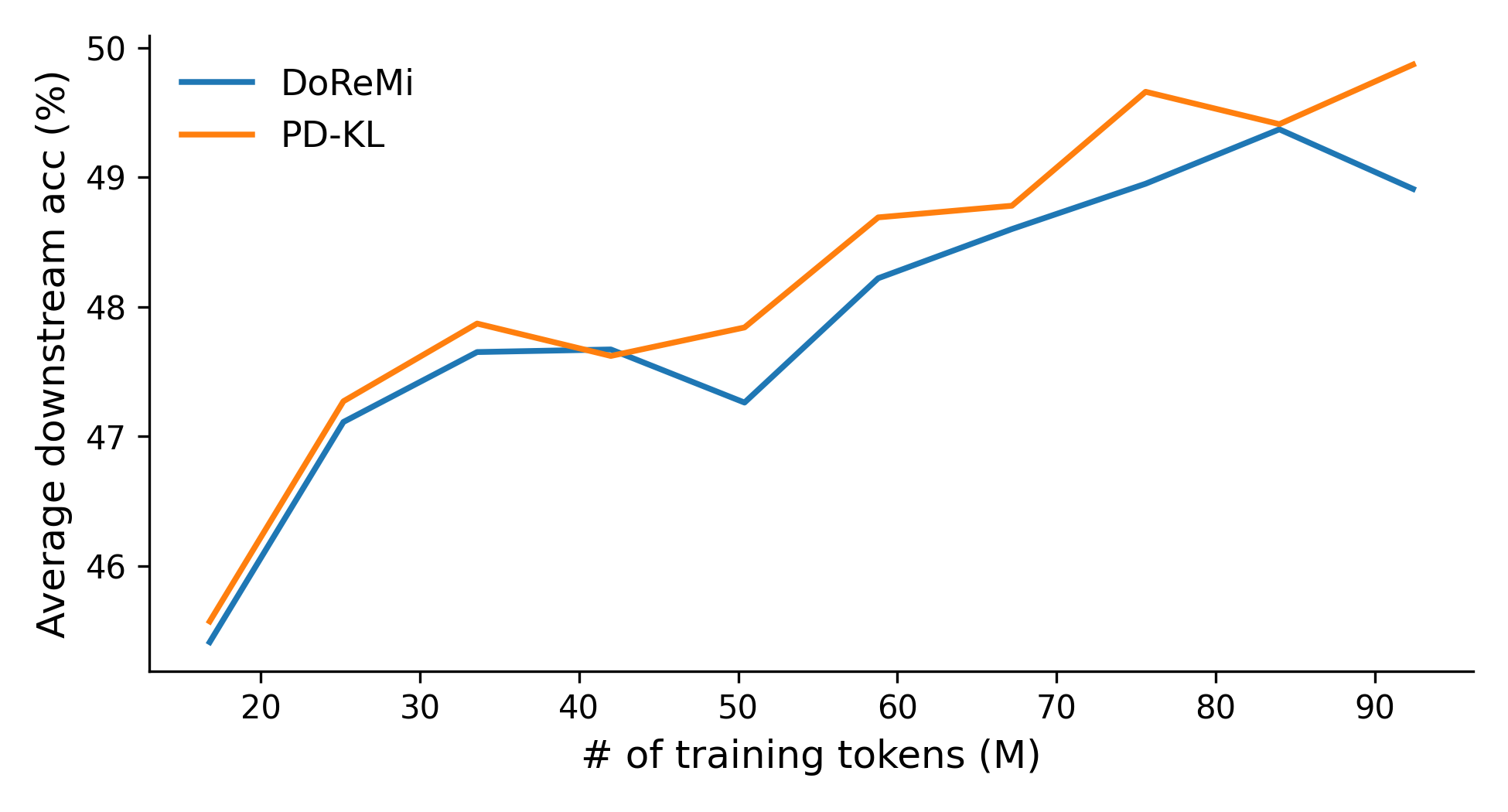}
  \caption{Compute-performance curve on \emph{Sheared-LLaMA-1.3B}. Y-axis is the unweighted overall accuracy scores, X-axis is the number of tokens trained.}
  \label{fig:compute-curve}
\end{figure}

\begin{table}[ht!]
  \centering
  \caption{Per-task results (\%) at 92.4M tokens.}
  \label{tab:task-breakdown}
  \sisetup{
    table-format=2.2,
    table-number-alignment=center
  }
  \resizebox{.6\textwidth}{!}{
  \begin{tabular}{l l l S S}
    \toprule
    Bucket & Task & Metric & \multicolumn{1}{c}{DoReMi} & \multicolumn{1}{c}{PD-KL} \\
    \midrule
    \textbf{Commonsense \& RC} & \textbf{ARC-E} & \texttt{acc\_norm}      & \textbf{50.00} & 49.83 \\
    \textbf{Commonsense \& RC} & \textbf{ARC-C(25)} & \texttt{acc\_norm}   & 29.95 & \textbf{30.38} \\
    \textbf{Commonsense \& RC} & \textbf{HellaSwag(10)} & \texttt{acc\_norm}& \textbf{54.78} & 54.62 \\
    \textbf{Commonsense \& RC} & \textbf{PICA} & \texttt{acc}             & 70.78 & \textbf{71.87} \\
    \textbf{Commonsense \& RC} & \textbf{SciQ} & \texttt{acc}             & 85.00 & \textbf{85.90} \\
    \textbf{Commonsense \& RC} & \textbf{WinoGrande} & \texttt{acc}         & 54.22 & \textbf{55.25} \\
    \textbf{Commonsense \& RC} & \textbf{WSC} & \texttt{acc}              & 36.54 & 36.54 \\
    \midrule
    \textbf{Continued \& LM}  & \textbf{BoolQ(32)} & \texttt{acc}          & 56.39 & \textbf{63.64} \\
    \textbf{Continued \& LM}  & \textbf{LogicQA} & \texttt{acc\_norm}     & \textbf{28.11} & 27.80 \\
    \textbf{Continued \& LM}  & \textbf{LAMBADA} & \texttt{acc}           & 48.61 & \textbf{50.63} \\
    \midrule
    \textbf{World Knowledge}  & \textbf{TruthfulQA(5)} & \texttt{acc}       & \textbf{23.62} & 22.15 \\
    \midrule
    \multicolumn{3}{c}{\textbf{Unweighted Mean}} 
      & \multicolumn{1}{c}{48.91} & \multicolumn{1}{c}{\textbf{49.87}} \\
    \bottomrule
  \end{tabular}
  }
\end{table}

\paragraph{Per-task breakdown.}
For transparency, \Cref{tab:task-breakdown} lists every task used in evaluation, its metric, and per-task scores for both models. To remain faithful to the reference and avoid choice-induced bias, we keep the same per-task metric (e.g., \texttt{acc} vs.\ \texttt{acc\_norm}) and the same shot setting as in \citep{xia2024shearedllamaacceleratinglanguage}, with one exception: following community conventions in large language modeling, we evaluate \mbox{\textbf{ARC-E}} using \texttt{acc\_norm} rather than \texttt{acc}.

\section{Conclusion}\label{sec:conclusion}
We studied the problem of robustly learning a single neuron in the group distributionally robust settings, where labels can be arbitrary and the goal is to be competitive with the ``best-fit'' model, as measured by the mean squared loss on the worst case group reweighting.  
The dual updates of our primal-dual approach can be seen as a novel group/domain reweighting. We hope that our work will encourage more research in this area. Possible future work includes extending either primal-dual updates or only dual updates to popular applications of language model pretraining and comparing whether it is competitive with current state-of-the-art domain reweighting algorithms. On the more theoretical side, it would  be interesting to strengthen the error guarantee to a constant factor error in terms of $\OPT$ rather than $\mOPT$ (for $\nu \gg 0$, since $\OPT = \mOPT$ for $\nu = 0$) and to generalize the technical approach to other related settings involving nonconvex risk minimization.

\newpage
\bibliographystyle{alpha}  
\bibliography{ref}

\clearpage
\appendix

\appendix
\crefalias{section}{appendix}
\crefalias{subsection}{appendix}

\onecolumn
\begin{center}{\LARGE\bfseries Supplementary Material}
\end{center}

\paragraph{Organization} 
In \Cref{sec:related_work}, we review additional related work.  
In \Cref{sec:suppl_prelims}, we collect the definitions and standard results in probability and optimization that we use throughout the paper. 
In \Cref{subsec:uniform-conv}, we prove that, with high probability, all the empirical expectations we need closely track their population counterparts.  
Finally, in~\Cref{app:gap-upper-bound}, we provide the full proof of the gap upper bound,~\Cref{lemma:gap-upper-bound}.  

\section{Additional Related Work}\label{sec:related_work}

\paragraph{Robustly Learning Neurons}
Generalized linear models (a.k.a.\ ``single neurons'') have long been a cornerstone of statistics and machine learning~\citep{nelder1972generalized}, and early algorithms such as the Isotron~\citep{kalai2009isotron} and Kakade-Shamir's efficient GLM learner~\citep{kakade2011efficient} provided the first guarantees for learning a single neuron in the presence of mild, structured label noise.  More recent work has focused on ReLU activations under progressively stronger noise models: for realizable or random additive noise, both gradient-based and spectral methods can recover or approximate the true neuron~\citep{soltanolkotabi2017learning,kalan2019fitting,yehudai2020learning}; in the fully agnostic setting, achieving purely additive error \(\eps > 0\) is intractable even for Gaussian inputs~\citep{DKZ20-sq-reg,GGK20,diakonikolas22a-hardness}, though constant-factor approximations are possible under log-concave or other structured marginals~\citep{DGKKS20,DKTZ22,wang2023robustly-learning}.  Extensions to semi-random or Massart-type noise require further specialized techniques~\citep{diakonikolas2021relu,karmakar2020study,chen2020classification,diakonikolas2019distribution} beyond the scope of the present work.  

In this work, we consider the strictly harder regime where an adversary may not only corrupt labels within each of \(K\) groups, but also shift the covariate distribution across groups.  We formulate this as a Group DRO problem over \(\pp_{[1]},\dots,\pp_{[K]}\), seeking \(\vw\) that minimizes the worst-case weighted squared loss plus an \(f\)-divergence penalty on the group weights.  

\paragraph{Distributionally Robust Optimization}
Covariate shift and related distributional mismatches have been studied extensively in the past---in covariate-shift correction~\citep{shimodaira2000improving,huang2006correcting,bickel2007discriminative}, label-proportion changes~\citep{dwork2012fairness,xu2020class}, domain adaptation and transfer learning \citetext{\citealp{mansour2009domain,pan2009survey,ben2010theory} \citealp{patel2015visual,tan2018survey}}, and classical DRO~\citep{ben2009robust,shapiro2017distributionally}.

Recent machine-learning work has applied DRO to language modeling~\citep{oren2019distributionally}, class-imbalance \allowbreak\citep{xu2020class}, group fairness~\citep{hashimoto2018fairness}, and robust regression~\citep{blanchet2021statistical,duchi2021learning,chen2018robust}. However, these methods typically assume convex, Lipschitz losses and structured label noise, and they do not address nonconvex problems like learning a neuron with respect to the square loss. On the other hand, even the work that did consider nonconvex loss functions like \citep{qi2021online,sinha2018certifying} only considered general, non-structured smooth loss functions, for which only convergence to stationary points can be guaranteed. In addition to  not exploring the structured nonconvexity of common learning tasks, for concrete learning tasks such as those considered in our work, convergence to stationary points is known to be insufficient---even for the special case of learning a ReLU neuron without any distributional ambiguity, stationary points of the squared loss may not lead to any formal learning guarantees \citep{yehudai2020learning}.

Most closely related to our work in this context is \citep{Li2024shifts}, which  explored the task of robustly learning a single neuron in a related, but distinct distributionally robust optimization setting, where there is a single reference distribution and each sample could be reweighted. The main body of the paper has already provided a comparison to \citep{Li2024shifts} both in terms of the results and the techniques, thus we omit repeating it here. 

\paragraph{Group DRO}
As a subclass of problems in Distributionally Robust Optimization, a line of work isolates distributional shift that manifests through a finite set of semantic or demographic groups and asks the learner to minimize the worst-case loss across those groups. \citep{sagawa2020groupDRO} first demonstrated its empirical effectiveness on over-parameterized neural networks, showing dramatic accuracy gains for minority groups. Subsequent theoretical work established sharp sample-complexity bounds under convex, unregularized losses \citep{haghtalab2022ondemand,soma2022optimal,Zhang2023Stochastic}. There are also related explorations into minimax regret optimization~\citep{agarwal2022minimax} and no-regret dynamics~\citep{wang2024fenchel}. They provide algorithmic blueprints using iterative methods like online gradient descent and stochastic optimization techniques, with established convergence guarantees and complexity bounds.

In the context of large-scale language model pretraining, DoReMi \citep{xie2023doremi} has empirically demonstrated the effectiveness of dynamically adjusting mixture weights across domains. 
Building on this line of work, LLM-Shearing \citep{xia2024shearedllamaacceleratinglanguage} further explored structured pruning as a means to accelerate pretraining while maintaining competitive performance of DoReMi.

Our setting addresses an even more aggressive setting, involving two robustness notions---agnostic label noise and adversarial covariate shifts across \(K\) groups—by penalizing deviations from uniform group weights via an \(f\)-divergence in a Group DRO formulation.  This allows us to obtain the first provable, polynomial-time algorithm for learning a nonconvex neuron model under both sources of adversarial perturbation. 

\section{Supplementary Preliminaries}
\label{sec:suppl_prelims}
{In the appendix, we recall that $\phi(\evlambda)$ generically to represent a divergence between $\evlambda$ and the uniform distribution, instantiated as either $\mathrm{KL}(\evlambda, \evlambda_0)$ or $\chi^2(\evlambda, \evlambda_0)$, where $\evlambda_0 = \tfrac{1}{K}\mathbf{1}$.} 

We state here several classical inequalities, and useful definitions and facts. 
\begin{fact}[Young's inequality]\label{fact:young}
	If $a \geq 0$ and $b \geq 0$ are {nonnegative real numbers} and if $p > 1$ and $q > 1$ are real numbers such that 
	\(
	\frac{1}{p} + \frac{1}{q} = 1,
	\)
	then
	\(
	ab \leq \frac{a^p}{p} + \frac{b^q}{q}.
	\) 
	Equality holds if and only if $a^p = b^q$.
\end{fact}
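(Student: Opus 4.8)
The plan is to obtain Young's inequality as an instance of the weighted arithmetic--geometric mean inequality, which in turn follows from the strict concavity of the logarithm (equivalently, the strict convexity of $\exp$). First I would dispose of the degenerate cases separately: if $a=0$ or $b=0$, then the left-hand side is $0$ and the right-hand side is nonnegative, so the inequality is immediate, and equality holds iff the surviving term on the right also vanishes, which in these boundary cases is exactly the condition $a^p=b^q$. So from now on assume $a,b>0$.

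For the main case, I would use $\tfrac1p+\tfrac1q=1$ with $\tfrac1p,\tfrac1q\in(0,1)$ (valid since $p,q>1$) to write $ab=\exp\!\big(\tfrac1p\ln(a^p)+\tfrac1q\ln(b^q)\big)$ and then apply Jensen's inequality to the convex function $\exp$, obtaining \(ab \le \tfrac1p\exp(\ln(a^p))+\tfrac1q\exp(\ln(b^q))=\tfrac{a^p}{p}+\tfrac{b^q}{q}\). Equivalently, one may invoke concavity of $\ln$ directly: $\ln\!\big(\tfrac{a^p}{p}+\tfrac{b^q}{q}\big)\ge\tfrac1p\ln(a^p)+\tfrac1q\ln(b^q)=\ln(ab)$, and then exponentiate. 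Either way this is precisely the claimed bound. For the equality characterization I would invoke strict convexity of $\exp$ (resp.\ strict concavity of $\ln$): Jensen's inequality is tight exactly when the two averaged points coincide, i.e.\ $\ln(a^p)=\ln(b^q)$, i.e.\ $a^p=b^q$; and conversely, if $a^p=b^q$ then both sides equal $a^p$, so equality indeed holds.

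There is no real obstacle here: the argument is essentially a one-line application of Jensen's inequality, and the only point requiring a little care is the equality analysis at the boundary (where $a$ or $b$ is zero), which must be checked by hand since the convexity step assumed positivity. As a self-contained alternative that avoids Jensen altogether, I would instead fix $b>0$ and minimize $g(a):=\tfrac{a^p}{p}+\tfrac{b^q}{q}-ab$ over $a\ge 0$: the stationarity condition $a^{p-1}=b$, combined with $\tfrac1p+\tfrac1q=1$, forces $a^p=b^q$ and $g=0$ at that point, while $g''(a)=(p-1)a^{p-2}>0$ shows this critical point is the unique global minimum, yielding both the inequality and its equality case simultaneously.
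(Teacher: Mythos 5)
The paper states this Young's inequality as Fact~\ref{fact:young} without proof, treating it as a classical result, so there is no paper proof to compare against. Your argument is correct: the Jensen-on-$\exp$ (equivalently, concavity-of-$\log$) derivation and the equality characterization via strict convexity are the standard and fully rigorous proof, and your separate handling of the boundary cases $a=0$ or $b=0$ (where the convexity step would not directly apply because $\ln 0$ is undefined) is exactly the care the argument needs. The alternative calculus proof you sketch also goes through: from $a^{p-1}=b$ and $\tfrac1p+\tfrac1q=1$ (hence $(p-1)q=p$) one indeed gets $a^p=b^q$ and $g=0$ there, with $g''>0$ giving uniqueness of the minimizer.
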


\begin{fact}[Hoeffding's Inequality]
	\label{fact:hoeffding}
	Let \(X_1, X_2, \ldots, X_n\) be independent random variables such that \(a_i \le X_i \le b_i\) almost surely for all \(i\). Let \(\overline{X} = \frac{1}{n} \sum_{i=1}^{n} X_i\). Then, for any \(t > 0\),
\[
	\Pr\left[\abs{\overline{X} - \E[\overline{X}]} \ge t\right] \le 2 \exp\left(-\frac{2n^2 t^2}{\sum_{i=1}^{n} (b_i - a_i)^2}\right).
	\]
\end{fact}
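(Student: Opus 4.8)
The plan is to use the standard exponential-moment (Chernoff) method. Set $S_n := \sum_{i=1}^n (X_i - \E[X_i])$, so that $\overline{X} - \E[\overline{X}] = S_n/n$; it then suffices to bound $\Pr[S_n \ge nt]$ and $\Pr[S_n \le -nt]$ separately and combine them with a union bound, which accounts for the factor of $2$. For the upper tail, I would fix $s > 0$ and apply Markov's inequality to the nonnegative random variable $e^{sS_n}$, giving $\Pr[S_n \ge nt] \le e^{-snt}\,\E[e^{sS_n}]$; independence of the $X_i$ then factorizes $\E[e^{sS_n}] = \prod_{i=1}^n \E\big[e^{s(X_i - \E[X_i])}\big]$.

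The key ingredient I would prove along the way is \emph{Hoeffding's lemma}: if $Y$ satisfies $\E[Y]=0$ and $Y \in [a',b']$ almost surely, then $\E[e^{sY}] \le \exp\!\big(s^2(b'-a')^2/8\big)$ for every $s \in \R$. To establish it I would work with the cumulant generating function $\psi(s) := \log \E[e^{sY}]$, noting $\psi(0) = 0$ and $\psi'(0) = \E[Y] = 0$. A direct computation identifies $\psi''(s)$ with the variance of $Y$ under the exponentially tilted law whose density is proportional to $e^{sY}$; since that tilted variable still takes values in $[a',b']$, its variance is at most $\big((b'-a')/2\big)^2 = (b'-a')^2/4$. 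A second-order Taylor expansion of $\psi$ about $0$ then yields $\psi(s) \le s^2 (b'-a')^2/8$.

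Plugging the lemma into each factor with $Y = X_i - \E[X_i]$, which lies in an interval of length $b_i - a_i$, gives $\E[e^{sS_n}] \le \exp\!\big(\tfrac{s^2}{8}\sum_{i=1}^n (b_i - a_i)^2\big)$, and hence $\Pr[S_n \ge nt] \le \exp\!\big(-snt + \tfrac{s^2}{8}\sum_{i=1}^n (b_i - a_i)^2\big)$. Minimizing the exponent over $s > 0$ (the optimal choice is $s^{\star} = 4nt/\sum_{i=1}^n (b_i - a_i)^2$) gives $\Pr[S_n \ge nt] \le \exp\!\big(-2n^2 t^2/\sum_{i=1}^n (b_i - a_i)^2\big)$. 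Rerunning the same argument with $-X_i$ in place of $X_i$ bounds the lower tail by the same quantity, and summing the two tail bounds yields the stated inequality.

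The hard part is Hoeffding's lemma itself, and within it the uniform bound $\psi''(s) \le (b'-a')^2/4$ on the second derivative of the log-moment-generating function (equivalently, the elementary fact that a random variable supported on an interval of length $\ell$ has variance at most $\ell^2/4$). The remaining steps---Markov's inequality, factorization via independence, and the one-variable optimization over $s$---are entirely routine.
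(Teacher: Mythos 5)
Your proof is correct and is the standard Chernoff-bound argument via Hoeffding's lemma. The paper states this as a background \texttt{Fact} and does not prove it, so there is no in-paper argument to compare against; your derivation, including the variance bound $\psi''(s)\le(b'-a')^2/4$ for the tilted law and the optimization $s^\star = 4nt/\sum_{i=1}^n(b_i-a_i)^2$, is the textbook route and checks out.
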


\begin{definition}[Total Variation Distance]
\label{def:tv}
Let \(P\) and \(Q\) be two probability distributions on a measurable space \((\Omega, \cF)\). The total variation (TV) distance between \(P\) and \(Q\) is defined as
\[
TV(P, Q) := \sup_{A \in \cF} |P(A) - Q(A)|,
\]
which represents the largest difference in the probabilities that the two distributions assign to the same event. If \(P\) and \(Q\) admit probability mass functions \(P(x)\) and \(Q(x)\), respectively, then the total variation distance admits the equivalent form:
\[
TV(P, Q) = \frac{1}{2} \sum_{x \in \Omega} |P(x) - Q(x)|.
\]
\end{definition}

\begin{lemma}[Pinsker's Inequality]
\label{lem:pinsker}
Let \(P\) and \(Q\) be two probability distributions on the same sample space. Then,
\[
TV(P, Q) \le \sqrt{ \frac{1}{2} \mathrm{KL}(P, Q) },
\]
where \(\mathrm{KL}(P, Q)\) denotes the Kullback-Leibler divergence between \(P\) and \(Q\). 
\end{lemma}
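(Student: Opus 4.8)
The plan is to prove Pinsker's inequality by reduction to the two‑point (Bernoulli) case. First I would set up a Hahn–Jordan‑type decomposition: let $A := \{x : P(x) \ge Q(x)\}$, so that by \Cref{def:tv} we have $TV(P,Q) = P(A) - Q(A)$; indeed, since $P(A) + P(A^c) = 1 = Q(A) + Q(A^c)$, both $P(A) - Q(A)$ and $Q(A^c) - P(A^c)$ equal $\tfrac12\sum_x |P(x) - Q(x)|$, so the supremum defining $TV$ is attained on $A$. Writing $p := P(A)$ and $q := Q(A)$, the claim reduces to showing $\mathrm{KL}(P,Q) \ge 2(p-q)^2$.

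Next I would coarsen $\mathrm{KL}(P,Q)$ to a binary KL divergence. Splitting the defining sum over $A$ and $A^c$ and applying the log‑sum inequality $\sum_{x \in S} P(x)\log\frac{P(x)}{Q(x)} \ge P(S)\log\frac{P(S)}{Q(S)}$ on each cell gives
\[
\mathrm{KL}(P,Q) \;\ge\; p\log\frac{p}{q} + (1-p)\log\frac{1-p}{1-q} \;=:\; \mathrm{kl}(p\,\|\,q),
\]
which is just the data‑processing inequality for $\mathrm{KL}$ applied to the two‑cell partition $\{A, A^c\}$.

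The remaining — and essentially only nontrivial — step is the scalar inequality $\mathrm{kl}(p\,\|\,q) \ge 2(p-q)^2$ for all $p, q \in [0,1]$. I would prove it by fixing $q$ and studying $g(p) := \mathrm{kl}(p\,\|\,q) - 2(p-q)^2$ on $[0,1]$: one checks $g(q) = 0$ and $g'(q) = \log\frac{q}{q} - \log\frac{1-q}{1-q} - 4(q-q) = 0$, while $g''(p) = \frac{1}{p} + \frac{1}{1-p} - 4 = \frac{1}{p(1-p)} - 4 \ge 0$ since $p(1-p) \le \tfrac14$. Hence $g$ is convex with a stationary point at $q$, so $g \ge 0$ throughout, giving the claim. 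Chaining the three steps yields $\mathrm{KL}(P,Q) \ge \mathrm{kl}(p\,\|\,q) \ge 2(p-q)^2 = 2\,TV(P,Q)^2$, and taking square roots finishes the proof.

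The main obstacle here is not conceptual but bookkeeping: one must handle the degenerate cases carefully, using the conventions $0\log 0 = 0$ and $a\log(a/0) = +\infty$ for $a > 0$. In particular, if $Q(A) = 0$ but $P(A) > 0$ then $\mathrm{KL}(P,Q) = +\infty$ and the bound is trivial; if $q \in \{0,1\}$ with the coarsened $P$‑mass matching, the binary term vanishes and agrees with $2(p-q)^2 = 0$; and the convexity argument for $g$ extends to the closed interval by continuity of $t\log t$ at $0$. I would also note that the same proof works verbatim in the general (non‑discrete) case by taking $A = \{\mathrm{d}P/\mathrm{d}Q \ge 1\}$ and replacing sums by integrals against $Q$.
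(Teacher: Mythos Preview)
Your argument is correct and is in fact the standard textbook proof of Pinsker's inequality via reduction to the Bernoulli case. The paper, however, does not supply a proof at all: \Cref{lem:pinsker} is stated in the supplementary preliminaries as a classical fact and used directly (e.g., in the proof of \Cref{claim:ub-TV}) without justification. So there is nothing to compare against; your write-up simply fills in a proof the authors chose to omit.
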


A key consequence of our distributional assumption in \Cref{assump:concentration} is a high-probability bound on the norm of the covariates, as formalized below.

\begin{lemma}\label{lemma:boundedness}
    Let $\delta \in (0, 1)$. Under \Cref{assump:concentration}, with probability at least $1-\delta$, we have that $\|\vx\|_2 \le S\sqrt{d}$ for all covariates $\vx$ in a total of $N$ samples from all groups, where $S=B\log(dN/\delta)$.
\end{lemma}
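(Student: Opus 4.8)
The plan is to control $\|\vx\|_2$ coordinate-by-coordinate and then take a union bound over all $d$ coordinates and all $N$ samples. First I would fix a single sampled covariate $\vx$ drawn from some group $i \in [K]$ and a coordinate index $j \in [d]$, and apply \Cref{assump:concentration} with the unit vector $\vu = \ve_j$ (the $j$-th standard basis vector), which gives $\Pr(|x^{(j)}| \ge r) \le \exp(-r/B)$ for every $r \ge 1$. Choosing $r = S := B\log(dN/\delta)$ makes this tail probability at most $\delta/(dN)$. (Here we use $S \ge 1$ so that the assumption applies; in the degenerate regime $B\log(dN/\delta) < 1$ one may simply replace $S$ by $\max\{1, B\log(dN/\delta)\}$ without affecting any downstream use, so this is without loss of generality.)

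Next I would union bound the bad event $\{|x^{(j)}| > S\}$ over the $d$ coordinates of each sample and over the $N$ samples drawn across all $K$ groups. Since the sub-exponential tail in \Cref{assump:concentration} holds for the marginal of \emph{every} group, the per-coordinate bound $\delta/(dN)$ applies to each of the $dN$ coordinate/sample pairs regardless of which group the sample came from, and the union bound does not require independence of the samples. This yields that, with probability at least $1 - dN \cdot \tfrac{\delta}{dN} = 1-\delta$, simultaneously $|x^{(j)}| \le S$ for every coordinate $j \in [d]$ of every sampled covariate $\vx$.

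Finally, on this high-probability event I would pass from the coordinate-wise bound to the desired $\ell_2$ bound via $\|\vx\|_2^2 = \sum_{j=1}^d (x^{(j)})^2 \le d \max_{j \in [d]} (x^{(j)})^2 \le d S^2$, i.e. $\|\vx\|_2 \le S\sqrt{d}$, for every sampled $\vx$, which is exactly the claim.

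I do not expect any genuine obstacle here: the statement is a routine consequence of the sub-exponential tail assumption together with a union bound. The only point requiring minor care is ensuring $r \ge 1$ so that \Cref{assump:concentration} is in force, which is handled by the remark above; everything else is a direct calculation.
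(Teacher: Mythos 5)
Your proof is correct and follows essentially the same approach as the paper: apply the sub-exponential tail bound from \Cref{assump:concentration} to each standard basis vector, union bound over the $d$ coordinates and $N$ samples, and then pass from the coordinate-wise bound to the $\ell_2$ bound. Your extra remark about ensuring $r \ge 1$ so that the assumption applies is a valid minor technical point that the paper glosses over, but it does not change the argument.
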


\begin{proof}
	Since each group has $N/K$ independent samples, applying Assumption~\ref{assump:concentration} gives for any one sample $\vxij$ and any unit $\vu$,  
	\[
	\Pr\bigl[|\vu\cdot\vxij|\ge S\bigr]\le e^{-S/B}.  
	\]
	Applying this inequality with $\vu$ chosen as each of the standard basis vectors and then applying a union bound over $d$ coordinates leads to  
	\(\Pr[\|\vxij\|_2\ge S\sqrt d]\le d\,e^{-S/B}.\)  Another union bound over all $N$ samples gives  
	\(\Pr[\exists\,\vx: \|\vx\|_2\ge S\sqrt d]\le dN\,e^{-S/B}.\)  
	Setting $S=B\log(dN/\delta)$ completes the argument.
\end{proof}

We will also need the following ``concentration-via-clipping'' argument, which we will apply in \Cref{subsec:uniform-conv}.

\begin{lemma}[One-Dimensional Concentration via Clipping]\label{lemma:concentration-via-clipping}
Let $Z$ be a real-valued random variable with mean $\mu = \E[Z]$. 
Assume $Z$ satisfies the tail bound $\Pr(|Z| > t) \le 2\exp(-t^{1/\tau}/B)$ for all $t \ge t_0$, where $t_0 > 0$ and $\tau \ge 1$ is a bounded integer. 
Let $Z_1, \dots, Z_n$ be $n$ i.i.d. copies of $Z$.

For any error $\epsilon > 0$, failure probability $\delta \in (0, 1)$, and sufficiently large $n$ satisfying  
\[
n = \tilde{O}\left( \max\left( \frac{B^{2\tau}}{\epsilon^2}, \frac{t_0^2}{\epsilon^2} \right) \log \left( \frac 1 \delta \right) \right),
\]
with probability at least $1-\delta$, we have:
\[
\left| \frac{1}{n}\sum_{i=1}^n Z_i - \mu \right| \le \epsilon.
\]
\end{lemma}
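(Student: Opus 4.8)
\textbf{Proof plan for \Cref{lemma:concentration-via-clipping}.}
The plan is to use the standard truncation (clipping) technique: decompose each $Z_i$ into a bounded part and a tail part, apply Hoeffding's inequality (\Cref{fact:hoeffding}) to the bounded part, and control the tail contribution using the stretched-exponential decay. Concretely, fix a clipping threshold $R$ to be chosen later (it will be of order $B^\tau \mathrm{polylog}(n/\delta)$), and define the clipped variable $Z_i^{\mathrm{clip}} := \mathrm{sign}(Z_i)\min\{|Z_i|, R\}$ together with the remainder $Z_i^{\mathrm{tail}} := Z_i - Z_i^{\mathrm{clip}}$. The triangle inequality then splits the quantity to be bounded into three pieces:
\[
\Big|\tfrac1n\textstyle\sum_i Z_i - \mu\Big|
\le \Big|\tfrac1n\textstyle\sum_i Z_i^{\mathrm{clip}} - \E[Z_1^{\mathrm{clip}}]\Big|
 + \big|\E[Z_1^{\mathrm{clip}}] - \mu\big|
 + \Big|\tfrac1n\textstyle\sum_i Z_i^{\mathrm{tail}}\Big|.
\]
The strategy is to make each of the three terms at most $\epsilon/3$ with probability at least $1-\delta$.

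For the first term, the clipped variables lie in $[-R, R]$, so Hoeffding gives concentration within $\epsilon/3$ with probability $1-\delta/2$ provided $n = \Omega(R^2 \epsilon^{-2}\log(1/\delta))$; since $R = \tilde O(B^\tau)$ (suppressing $\log(n/\delta)$ and the $t_0$ dependence needed below), this matches the claimed sample bound $n = \tilde O(\max\{B^{2\tau}, t_0^2\}\epsilon^{-2}\log(1/\delta))$. For the second (deterministic bias) term, I would write $|\E[Z_1^{\mathrm{clip}}] - \mu| \le \E[|Z_1^{\mathrm{tail}}|] = \E[(|Z_1|-R)\Ind\{|Z_1| > R\}]$ and bound this integral using the tail assumption $\Pr(|Z| > t)\le 2\exp(-t^{1/\tau}/B)$: integrating by parts (or using $\E[(|Z|-R)_+] = \int_R^\infty \Pr(|Z|>t)\,dt$) yields a bound of the form $O(\tau B^\tau R^{\,(\tau-1)/\tau}\exp(-R^{1/\tau}/B))$, which is driven to be at most $\epsilon/3$ once $R^{1/\tau}/B = \Omega(\log(B^\tau/\epsilon))$, i.e. $R = \tilde O(B^\tau)$ where the polylog absorbs $\log(1/\epsilon)$. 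For the third term I would bound it crudely by $\tfrac1n\sum_i|Z_i^{\mathrm{tail}}| \le \tfrac1n\sum_i |Z_i|\Ind\{|Z_i| > R\}$ and argue that, with probability at least $1-\delta/2$, \emph{no} sample exceeds $R$ (a union bound: $\Pr(\exists i: |Z_i| > R) \le 2n\exp(-R^{1/\tau}/B)$, which is $\le \delta/2$ once $R^{1/\tau}/B = \Omega(\log(n/\delta))$), so this term is exactly $0$ on the good event. The requirement $R \ge t_0$ for the tail bound to be valid is what introduces the $\max$ with $t_0^2$ in the sample complexity, and a final union bound over the two failure events gives overall probability $1-\delta$.

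The main obstacle is purely bookkeeping rather than conceptual: one must choose $R$ large enough that both the bias term (needs $R^{1/\tau}/B \gtrsim \log(1/\epsilon)$) and the ``no sample exceeds $R$'' event (needs $R^{1/\tau}/B \gtrsim \log(n/\delta)$, hence $R^{1/\tau} \gtrsim B\log(n\epsilon^{-1}\delta^{-1})$) are controlled, while simultaneously keeping $R$ small enough that the Hoeffding bound $n \gtrsim R^2\epsilon^{-2}\log(1/\delta)$ still matches the target complexity; since $R$ enters $n$ only through $R^2 = \tilde O(B^{2\tau})$ (the logarithmic factors in $R$ becoming polylogarithmic factors in $n$ hidden by $\tilde O$), this closes, but care is needed to verify the self-referential dependence of $R$ on $n$ resolves (it does, because the dependence is only logarithmic). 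A minor additional subtlety is that $\tau$ is assumed a bounded integer, so factors of $\tau$ and $\tau^\tau$ arising from the incomplete-Gamma-type tail integral are all absorbed into constants.
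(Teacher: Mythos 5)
Your proposal is correct and takes essentially the same route as the paper's proof: the identical three-term decomposition into a Hoeffding-controlled clipped average, a deterministic bias term bounded by the tail integral $\int_R^\infty \Pr(|Z|>t)\,dt$, and a tail term that vanishes on the high-probability event that no sample exceeds $R$, with $R \ge t_0$ forcing the $\max$ with $t_0^2$ in the sample complexity. You also correctly flag the only real subtlety — the logarithmic self-reference of $R$ on $n$ — which the paper likewise resolves by noting it only costs polylog factors.
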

\begin{proof}
Let the clipping threshold be $S \geq t_0$, to be determined later, and define $Z_c := \max(-S, \min(Z, S))$. 
We decompose the total error using the triangle inequality:
\[
\left|\frac{1}{n}\sum_{i=1}^n Z_i - \mu\right| \le \underbrace{\left|\frac{1}{n}\sum_{i=1}^n (Z_i - Z_{i,c})\right|}_{\text{(I) Clipping Error}} + \underbrace{\left|\frac{1}{n}\sum_{i=1}^n Z_{i,c} - \E[Z_c]\right|}_{\text{(II) Concentration Error}} + \underbrace{|\E[Z_c] - \mu|}_{\text{(III) Bias Error}}.
\]
We now bound each of these three terms.

\textbf{Clipping term:} 
The first term is non-zero only if at least one sample $|Z_i| > S$. 
Let $\mathcal{E}_{\text{clip}}$ be this event. 
By a union bound over the samples and the tail bound on $Z$:
\[
\Pr(\mathcal{E}_{\text{clip}}) = \Pr(\exists i: |Z_i| > S) \le n \cdot \Pr(|Z| > S) \le 2n \exp(-S^{1/\tau}/B).
\]
This means with probability $1-2n \exp(-S^{1/\tau}/B)$, we have $\left|\frac{1}{n}\sum_{i=1}^n (Z_i - Z_{i,c})\right| = 0$.

\textbf{Bias term:} The bias $|\E[Z_c] - \mu| = |\E[Z_c - Z]|$ is bounded by the integral of the tail probability:
\[
|\E[Z_c - Z]| \le \E[|Z| \cdot \mathbb{I}(|Z|>S)] = \int_S^\infty \Pr(|Z|>t)dt \le \int_S^\infty 2e^{-t^{1/\tau}/B}dt,
\]
where the first inequality follows from the natural coupling between $Z_c$ and $Z$ assigning every point in $Z_c$ to the corresponding unclipped point in $Z$. 
If the point is unclipped, this contributes $0$,
if the point is clipped, then this means $|Z| > S$ and it contributes $|Z|-S < |Z|$. 
The equality is the standard way to express expectations of nonnegative random variables as tail integrals. 
The final inequality follows from the tail bound on $|Z|$. 

We now further upper bound the right hand side above.

Let $u = t^{1/\tau}/B$, so $t = (Bu)^\tau$ and $dt = \tau B (Bu)^{\tau-1} du$. 
The lower limit of integration becomes $u_0 = S^{1/\tau}/B$.
\[
\int_{u_0}^\infty 2e^{-u} \tau B (Bu)^{\tau-1} du = 2\tau B^\tau \int_{u_0}^\infty u^{\tau-1} e^{-u} du.
\]
For $u_0 \ge \tau-1$ (which will be satisfied by the choice of \(S\) later), 
we can bound $\int_{u_0}^\infty u^{\tau-1} e^{-u} du \le c_\tau u_0^{\tau-1} e^{-u_0}$ for some constant $c_\tau < 2$ depending on $\tau$ 
(by standard facts about exponential integrals). 
The bias is thus bounded by $2\tau B^\tau c_\tau u_0^{\tau -1} e^{-u_0} = 2\tau B^\tau c_\tau (S^{1/\tau}/B)^{\tau -1} e^{-(S^{1/\tau}/B)} = \Theta_\tau$.

\textbf{Concentration term:} Finally, we bound the concentration term. The clipped variable $Z_c$ is bounded in $[-S, S]$. We can apply Hoeffding's inequality to its empirical mean. With probability at least $1-\delta/2$:
\[
\left|\frac{1}{n}\sum_{i=1}^n Z_{i,c} - \E[Z_c]\right| \le \sqrt{\frac{2S^2\log(4/\delta)}{n}} = S\sqrt{\frac{2\log(4/\delta)}{n}}.
\]
Substituting $S = (B \log(4n/\delta))^\tau$, this deviation is bounded by $(B \log(4n/\delta))^\tau \sqrt{\frac{2\log(4/\delta)}{n}}$.

Accounting for everything, we see that, with high probability, the deviation is at most the sum of the concentration and bias errors. We need to choose the clipping threshold $S$ and the sample size $n$ to make this total error at most $\epsilon$, while keeping the failure probability at most $\delta$.

This requires satisfying the following conditions simultaneously: 
\begin{align}
    S &\ge (B \log(4n/\delta))^\tau \label{eq:cond_clip} \\
    c_\tau B S^{(\tau-1)/\tau} e^{-S^{1/\tau}/B} &\le \epsilon/2 \label{eq:cond_bias} \\
    n &\ge \frac{8 S^2}{\epsilon^2} \log(4/\delta) \label{eq:cond_conc}\\
	S &\ge t_0\\
    S &\ge (\tau-1)^{\tau} B^{\tau} \label{eq:u_0}
\end{align}
where the first condition ensures the clipping event probability is at most $\delta/2$, the second bounds the bias (where the constants have been absorbed into $c_\tau$), and the third bounds the concentration error of the clipped estimators. The second-last condition is just to ensure that the tail bound applies, and the final condition corresponds to $u_0 \geq \tau-1$. 

\textbf{Choosing $S$:} To satisfy all of these inequalities simultaneously, we define the clipping threshold $S$:
\[
S := \max\left( \left(B \log\left(\frac{C_\tau B^\tau}{\epsilon}\right)\right)^\tau, \left((\tau-1) ~B \log(4n/\delta)\right)^\tau, t_0 \right)
\]

for a sufficiently large constant $C_\tau$. The first term in the $\max$ is chosen to satisfy the bias condition \eqref{eq:cond_bias}, and the second term satisfies the clipping condition \eqref{eq:cond_clip} as well as \eqref{eq:u_0}. 
We now explain how to derive the first term.

Let us define the variable $X = S^{1/\tau}/B$. The inequality can be rewritten in terms of $X$ by substituting $S^{1/\tau} = BX$:
\begin{align*}
c_\tau B (BX)^{\tau-1} e^{-X} &\le \frac{\epsilon}{2} \\
c_\tau B^\tau X^{\tau-1} e^{-X} &\le \frac{\epsilon}{2}
\end{align*}
To satisfy this, we need to find a sufficiently large $X$. For any $\tau \ge 1$, there exists a (constant) threshold $X_0(\tau)$ such that for all $X \ge X_0(\tau)$, the polynomial term $X^{\tau-1}$ is bounded by an exponential, i.e., $X^{\tau-1} \le e^{X/2}$. Assuming $X$ is large enough to meet this condition, our inequality becomes:
\[
c_\tau B^\tau e^{X/2} e^{-X} = c_\tau B^\tau e^{-X/2} \le \frac{\epsilon}{2}
\]
Solving for $X$:
\begin{align*}
e^{-X/2} &\le \frac{\epsilon}{2 c_\tau B^\tau} \\
-X/2 &\le \log\left(\frac{\epsilon}{2 c_\tau B^\tau}\right) = -\log\left(\frac{2 c_\tau B^\tau}{\epsilon}\right) \\
X &\ge 2 \log\left(\frac{2 c_\tau B^\tau}{\epsilon}\right)
\end{align*}
This choice of $X$ is consistent with the initial assumption that $X$ is large (for small $\epsilon$), as the logarithm will be large. We can absorb the constant factors into a new constant $C_\tau$ and state that the condition is satisfied if
$
X \ge 2\log\left(C_\tau \frac{B^\tau}{\epsilon}\right).
$
Substituting back $X = S^{1/\tau}/B$, we get:
\[
\frac{S^{1/\tau}}{B} \ge 2\log\left(C_\tau \frac{B^\tau}{\epsilon}\right)
\]
This gives the required lower bound on $S$:
\[
S \ge \left(2B \log\left(C_\tau \frac{B^\tau}{\epsilon}\right)\right)^\tau
\]
This demonstrates that a choice of $S$ satisfying $S = \tilde{O}((B \log(B/\epsilon))^\tau)$ is sufficient to control the bias term.

\textbf{Solving for $n$:} Substituting this choice of $S$ into the concentration condition \eqref{eq:cond_conc} yields a lower bound on $n$. This gives three requirements on $n$, corresponding to which term in the $\max$ defining $S$ is dominant.
\begin{enumerate}
    \item If $S$ is determined by the bias term (the first term), then $n$ must satisfy:
    $n \ge \frac{8 \log(4/\delta)}{\epsilon^2} \left(B \log\left(\frac{C_\tau B^\tau}{\epsilon}\right)\right)^{2\tau}.$
    This contributes to $n = \tilde{O}\left(\frac{B^{2\tau}}{\epsilon^2} \log\left(\frac 1 \delta \right)\right)$ in the stated sample complexity.
    
\item If $S$ is determined by the clipping probability term (the second term), we get a recursive condition on $n$:
    \[ n \ge \frac{8 \log(4/\delta)}{\epsilon^2} (B \log(4n/\delta))^{2\tau}. \]
    This is a recursive inequality of the form $n \ge K \log^{2\tau}(n)$ where $K = \frac{8 B^{2\tau} \log(4/\delta)}{\epsilon^2}$; by the standard technique for solving such recursive inequalities this is satisfied by all $n \geq \max((2\tau e)^{2\tau}, 2K\log^{2\tau}(2K)) \allowbreak = \Theta(K\log^{2\tau}(K))$, and so it suffices that:
    \[ n = \tilde{O}\left(K \log^{2\tau}(K)\right) = \tilde{O}\left(\frac{B^{2\tau}}{\epsilon^2} \log^{2\tau}\left(\frac{B^{2\tau}}{\epsilon^2}\right)\right). \]
    Since the $\tilde{O}$ notation absorbs polylogarithmic factors, this simplifies to $n = \tilde{O}\left(\frac{B^{2\tau}}{\epsilon^2} \log(\frac 1 \delta)\right)$.
    
    \item If $S$ is determined by the threshold term $t_0$ (the third term), then $S = t_0$ and we need:
    $ n \ge \frac{8 \log(4/\delta)}{\epsilon^2} t_0^2. $
    This contributes $n = O \left(\frac{t_0^2}{\epsilon^2} \log(1/\delta)\right)$ to the sample complexity.
\end{enumerate}
The total sample complexity is the maximum of these three requirements, giving us the sample complexity stated in the lemma. 
\end{proof}

\section{Uniform Convergence and Population Approximation}\label{subsec:uniform-conv}

In Section~\ref{sec:algor-conv-analys} we carried out our main convergence analysis under the empirical distributions $\epi$.  
Here we show that, with high probability, all required empirical expectations closely match their population counterparts under each true distribution $\ppi$.  
Our approach combines 
\Cref{lemma:concentration-via-clipping} 
with a standard net argument after clipping the functions involved, to obtain convergence uniformly over the weight domain $\cB(W)$, 
and then applies this result to the sharpness and moment bounds as well as to control the difference between the empirical and population optima.

\begin{lemma}[Uniform Concentration for Heavy-Tailed Variables]
\label{lemma:uniform_concentration_heavy_tailed}
    Let $h:\cB(W)\times\R^d\to\R$ be a function. 
	Assume $\vx$ satisfies Assumption~\ref{assump:concentration} with parameter $B_x$,
	and for any $\vw \in \cB(W)$, the random variable $Z_\vw = h(\vw; \vx)$ 
	satisfies the tail bound $\Pr(|Z_\vw| > t) \le 2\exp(-t^{1/\tau}/B_h)$ for all $t \ge t_0$, 
	where $t_0 = \tilde{O}(WB_x\beta)$ may depend on problem parameters, $\tau \ge 1$ is a bounded integer and $B_h \ge B_x$.
    
    Let $a(\vx)$ be the Lipschitz constant of $h(\vw; \vx)$ with respect to $\vw \in \cB(W)$,
    and suppose that $a(\vx) \leq a_{S_x}$ whenever $\|\vx\|_2 \le \sqrt{d}S_x$, for $S_x = \tilde{O}(B_x)$ chosen according to \Cref{lemma:boundedness} and $a_{S_x} = \tilde O(d^2B_x^4)$ as required in \Cref{lemma:sharpness_empirical_heavy_tailed} below.
    
    Then, for sufficiently large $N$ such that 
      \[
    N = \tilde{O}\left( K \cdot \frac{t_0^2 + B_h^{2\tau}}{\epsilon^2} \left(d\log\frac{dW B_x}{\epsilon} + \log\frac{K}{\delta}\right) \right),
    \]
    with probability at least $1-\delta$, for every group $i\in[K]$ and all $\vw\in\cB(W)$:
    \[
    \bigl|\E_{\epi}[h(\vw;\vx)]-\E_{\ppi}[h(\vw;\vx)]\bigr|\le 
    \epsilon.
    \]
\end{lemma}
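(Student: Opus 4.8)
\textbf{Proof proposal for \Cref{lemma:uniform_concentration_heavy_tailed}.}
The plan is to prove uniform concentration over $\cB(W)$ by a standard three-step recipe: (i) pointwise concentration at a fixed $\vw$ via the clipping lemma, (ii) a covering-number argument to upgrade pointwise to uniform, and (iii) careful accounting of the discretization error using the high-probability Lipschitz bound. First I would fix a single group $i$ and a single point $\vw \in \cB(W)$, and apply \Cref{lemma:concentration-via-clipping} to the i.i.d.\ sample $\{h(\vw;\vxij)\}_{j=1}^{N/K}$: since $Z_{\vw} = h(\vw;\vx)$ satisfies the $\tau$-subexponential tail bound $\Pr(|Z_\vw|>t)\le 2\exp(-t^{1/\tau}/B_h)$ for $t\ge t_0$, the lemma gives $|\E_{\epi}[h(\vw;\vx)]-\E_{\ppi}[h(\vw;\vx)]| \le \epsilon/2$ with probability at least $1-\delta'$ whenever the per-group sample size is $\tilde O\!\big((t_0^2 + B_h^{2\tau})\epsilon^{-2}\log(1/\delta')\big)$.

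Next I would build an $\eta$-net $\cN$ of $\cB(W)$ in the $\ell_2$ norm, which has size $|\cN| \le (3W/\eta)^d$; set $\eta$ to be polynomially small in $\epsilon, W, B_x, 1/d$ (explicitly, $\eta \asymp \epsilon/a_{S_x}$ suffices once we fold in the Lipschitz constant). Union-bounding the pointwise estimate over all $\vw \in \cN$ and over all $K$ groups, with $\delta' = \delta/(2K|\cN|)$, the $\log(1/\delta')$ factor becomes $d\log(3W/\eta) + \log(K/\delta) = \tilde O\!\big(d\log(dWB_x/\epsilon) + \log(K/\delta)\big)$, which matches the claimed sample complexity. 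For the discretization step, condition additionally on the event from \Cref{lemma:boundedness} that $\|\vx\|_2 \le \sqrt{d}\,S_x$ for all $N$ covariates (costing another $\delta/\text{const}$ in the failure probability and absorbed into the $\tilde O$); on this event the hypothesis gives $a(\vx) \le a_{S_x}$, so for any $\vw \in \cB(W)$ with nearest net point $\vw' \in \cN$ we have both $|\E_{\epi}[h(\vw;\vx)] - \E_{\epi}[h(\vw';\vx)]| \le a_{S_x}\eta$ and the analogous population bound $|\E_{\ppi}[h(\vw;\vx)] - \E_{\ppi}[h(\vw';\vx)]| \le a_{S_x}\eta$ (the latter because the Lipschitz bound holds surely on the bounded-norm event, and one can make the population version rigorous by noting the complementary event carries negligible mass, or by working directly with the truncated variable). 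Choosing $\eta$ so that $a_{S_x}\eta \le \epsilon/4$ and combining via the triangle inequality $|\E_{\epi}[h(\vw;\vx)] - \E_{\ppi}[h(\vw;\vx)]| \le a_{S_x}\eta + \epsilon/2 + a_{S_x}\eta \le \epsilon$ completes the argument.

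The main obstacle I anticipate is the \emph{interaction between the heavy tail and the Lipschitz/discretization step}: the clean Lipschitz bound $a(\vx) \le a_{S_x}$ only holds on the bounded-norm event, so one cannot naively transfer it to the population expectation $\E_{\ppi}$ without separately controlling the contribution of the tail $\{\|\vx\|_2 > \sqrt d\,S_x\}$ to $\E_{\ppi}[|h(\vw;\vx)-h(\vw';\vx)|]$. Handling this cleanly requires either (a) a truncation argument mirroring the one inside \Cref{lemma:concentration-via-clipping} — bounding $\E_{\ppi}[|Z_\vw|\,\Ind\{\|\vx\|_2 > \sqrt d\, S_x\}]$ by a tail integral that is negligible for $S_x = \tilde O(B_x)$ — or (b) redefining $h$ via clipping from the outset and checking the clipping bias is $O(\epsilon)$. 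A secondary, more bookkeeping-level point is verifying that the stated parameter regimes are mutually consistent, i.e.\ that $t_0 = \tilde O(WB_x\beta)$, $B_h \ge B_x$, and $a_{S_x} = \tilde O(d^2 B_x^4)$ (as will be instantiated in \Cref{lemma:sharpness_empirical_heavy_tailed}) plug into the covering bound to give exactly the $\tilde O\big(K(t_0^2+B_h^{2\tau})\epsilon^{-2}(d\log(dWB_x/\epsilon)+\log(K/\delta))\big)$ sample complexity, with the $\log a_{S_x}$ term from the net being swallowed by the $\tilde O$.
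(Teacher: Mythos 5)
Your proposal is essentially correct and matches the paper's proof strategy: clipping-lemma concentration pointwise, a covering net over $\cB(W)$ with union bound over net points and groups, and a Lipschitz-based discretization. You also correctly identified the paper's central technical subtlety---that the Lipschitz constant $a_{S_x}$ is only valid on the bounded-norm event and so the tail contribution to $\E_{\ppi}$ must be controlled separately---and your proposed fix (b), decomposing $h=h_c+h_t$ by the indicator of $\{\|\vx\|_2\le\sqrt{d}S_x\}$ and bounding the residual via Cauchy--Schwarz against the second moment of $h$, is exactly what the paper does.
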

\begin{proof}
We will follow a clipping argument similar to that in \Cref{lemma:concentration-via-clipping}.
We define a clipping threshold for the data norm, $S_x = B_x \log(3dN^2/\delta)$. 
Let $\mathcal{E}_x$ be the event $\|\vx\|_2 \le \sqrt{d}S_x$. 
We decompose $h$ into a ``clipped'' part and a ``tail'' part---$h_c(\vw; \vx) := h(\vw; \vx) \cdot \mathbb{I}[\mathcal{E}_x]$ and $h_t(\vw; \vx) := h(\vw; \vx) \cdot \mathbb{I}[\neg\mathcal{E}_x]$,
so that $h = h_c + h_t$. 
The total deviation can then be bounded using the triangle inequality:
\[
|\E_{\epi}[h] - \E_{\ppi}[h]| \le \underbrace{|\E_{\epi}[h_c] - \E_{\ppi}[h_c]|}_{\text{(I)}} + \underbrace{|\E_{\epi}[h_t]|}_{\text{(II)}} + \underbrace{|\E_{\ppi}[h_t]|}_{\text{(III)}}.
\]
We will show that (I) is uniformly bounded by 
$\eps$
and that 
(II) and (III) are each uniformly bounded by $\epsilon/3$ with high probability.

\paragraph{Bounding the Tail Contribution (Terms II and III)}

By \Cref{lemma:boundedness}, with probability at least $1-\delta/(3N)$, 
all covariates $\vx$ in the total sample of size $N$ satisfy $\|\vx\|_2 \le \sqrt{d}S_x$, 
where $S_x = B_x\log(3dN^2/\delta)$. 
On this high-probability event, the indicator $\mathbb{I}[\|\vx_j\|_2 > \sqrt{d}S_x]$ 
is zero for all samples $\vx_j$. 
Consequently, for any group $i$, the empirical expectation of the tail part is zero:
\[
\E_{\epi}[h_t] = \frac{K}{N}\sum_{j=1}^{N/K} h(\vw;\vx_{ij})\mathbb{I}[\|\vx_{ij}\|_2 > \sqrt{d}S_x] = 0.
\]
Since this does not depend on $\vw$ or $i$, 
Term (II) is zero for all $\vw$ and all groups $i \in [K]$ simultaneously.

For Term (III), 
we bound the population mean of the tail part for all $\vw$, $|\E_{\ppi}[h_t]|$, 
using the Cauchy-Schwarz inequality:
\begin{align*}
|\E_{\ppi}[h_t]| &\le \E_{\ppi}\bigl[|h(\vw; \vx)| \cdot \mathbb{I}[\|\vx\|_2 > \sqrt{d}S_x]\bigr] \\
&\le \sqrt{\E_{\ppi}[h(\vw; \vx)^2]} \cdot \sqrt{\Pr(\|\vx\|_2 > \sqrt{d}S_x)}\\
&\le \sqrt{\E_{\ppi}[h(\vw; \vx)^2]} \cdot \sqrt{\delta/(3N)}.
\end{align*}
The second inequality is the bound on the data probability, which is bounded by $\delta/(3N)$
by our choice of $S_x$.
The first term is the square root of the second moment of $h(\vw;\vx)$. 
We can bound this moment using its tail parameters. 
Using the tail integral formula, $\E[Z^2] = \int_0^\infty \Pr(|Z|>\sqrt{u})du$, 
we split the integral at $u=t_0^2$:
\[
\E_{\ppi}[h(\vw; \vx)^2] \le \int_0^{t_0^2} 1 du + \int_{t_0^2}^\infty 2\exp(-u^{1/(2\tau)}/B_h) du = t_0^2 + 4\tau B_h^{2\tau} \Gamma(2\tau, t_0^{1/\tau}/B_h) \leq \tilde O(t_0^2 + B_h^{2\tau}), 
\]
where the final inequality is a consequence standard upper bounds on the $\Gamma$ function.
Thus, we have $|\E_{\ppi}[h_t]| \le \sqrt{\tilde O(t_0^2 + B_h^{2\tau})} \sqrt{\delta/(3N)}$. 
To ensure this is less than $\epsilon/3$, 
it suffices to choose a sufficiently large $N = \tilde{O} (t_0^2 + B_h^{2\tau})~\delta / \epsilon^2$. 

\paragraph{Uniform Concentration of the Clipped Part (Term I)}
Let $a_{S_x} > \sup_{\|\vx\|_2 \le \sqrt{d}S_x} a(\vx)$ be a deterministic upper bound on $a(\vx)$.
For any $\vx$ where $\mathbb{I}[\mathcal{E}_x]=1$, 
we have $\|\vx\|_2 \le \sqrt{d}S_x$, 
and therefore $|h_c(\vw_1;\vx) - h_c(\vw_2;\vx)| \le a(\vx)\|\vw_1-\vw_2\|_2 \le a_{S_x}\|\vw_1-\vw_2\|_2$.

Since their random variable 
$Z_{\vw,c} = h_c(\vw;\vx)$ is a clipped version of $Z_\vw=h(\vw;\vx)$, 
its tails are at least as light.
We follow the standard net argument for $h_c$. 
Let $\cN$ be an $r$-net over $\cB(W)$ with $|\cN| \leq (3W/r)^d$. 
We want to achieve $|\E_{\epi}[h_c] - \E_{\ppi}[h_c]| \le \epsilon/3$ for all points in the net and all groups. Using Lemma~\ref{lemma:concentration-via-clipping} with accuracy $\epsilon/3$ and failure probability $\delta' = \frac{\delta/3}{K|\cN|}$, further, substituting $r = \epsilon/6a_{S_x}$, the required sample size $N=nK$ satisfies:
\[
N = \tilde{O}\left( K \cdot \max\left( \frac{B_h^{2\tau}}{\epsilon^2}, \frac{t_0^2}{\epsilon^2} \right) \left(d\log\frac{W a_{S_x}}{\epsilon} + \log\frac{K}{\delta}\right) \right).
\]
Plugging in bound on $a_{S_x}$ from the lemma statement, we recover the stated sample complexity. With this sample size, with probability at least $1-\delta/3$, we have $|\E_{\epi}[h_c(\vw')] - \E_{\ppi}[h_c(\vw')]| \le \epsilon/3$ for all $\vw' \in \cN$.

Extending to the continuum via the triangle inequality and the Lipschitz constant $a_{S_x}$, for any $\vw \in \cB(W)$:
\[ |\E_{\epi}[h_c] - \E_{\ppi}[h_c]| 
\le |\E_{\epi}[h_c(\vw)] - \E_{\epi}[h_c(\vw')]| 
+ \epsilon/3 
+ |\E_{\ppi}[h_c(\vw')] - \E_{\ppi}[h_c(\vw)]| 
\le 2a_{S_x}r + \epsilon/3. \]

We have three events we need to hold: 
the empirical tail is zero, 
the population tail is small,
and the clipped part concentrates. 
By a union bound, all three hold with probability at least $1-\delta$. 
On this combined event, we have:
\[ |\E_{\epi}[h] - \E_{\ppi}[h]| \le (2a_{S_x}r + \epsilon/3) + 0 + \epsilon/3 \le 2a_{S_x}r + (2\epsilon/3) \leq \epsilon. \]

While the stated inequalities for $S_x$ and $N$ lead to a recursive bound on $N$---the inequality for $N$ has a $\log \log(N)$ term on the RHS due to $S_x$ scaling logarithmically with $N$---using standard techniques like the one from the end of \Cref{lemma:concentration-via-clipping}, we can obtain an explicit bound on $N$, at a poly-log cost absorbed by the $\tilde{O}$ notation. 
\end{proof}

	\begin{lemma}[Empirical Sharpness and Moment Bounds with Heavy Tails]\label{lemma:sharpness_empirical_heavy_tailed}
    Under Assumptions~\ref{assump:concentration} and \ref{assump:margin}, let the number of samples 
     \[
    N = \tilde{O}_{\beta, B}\left( \frac{KW^4}{\eps^2} \left(d\log\frac{dW}{\epsilon} + \log\frac{K}{\delta}\right) \right)
    \]
    be sufficiently large. 
    Then with probability at least $1-\delta$, for every group $i\in[K]$, every $\vw\in\cB(3\|\vw_*\|)$ with $\|\vw-\vw_*\|\ge\sqrt\eps$, and every unit vector $\vu$, the following hold:
    \begin{align*}
        \E_{\vx\sim\epi}&\bigl[(\sigma(\vw\cdot\vx)-\sigma(\vw_*\cdot\vx))(\vw\cdot\vx-\vw_*\cdot\vx)\bigr]\ge \tfrac{c_0}{2}\|\vw-\vw_*\|_2^2,\\
        \E_{\vx\sim\epi}&\bigl[(\vu\cdot\vx)^\tau\bigr]\le 6B, \quad \text{for } \tau=2,4.
    \end{align*}
\end{lemma}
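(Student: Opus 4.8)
The plan is to obtain \Cref{lemma:sharpness_empirical_heavy_tailed} by transferring the population bounds of \Cref{fact:sharpness} to the empirical measures $\epi$ via the uniform concentration result \Cref{lemma:uniform_concentration_heavy_tailed}, instantiated for three carefully chosen functions, and then absorbing the resulting (uniformly small) approximation error into the slack of each target inequality. For the sharpness statement I would apply \Cref{lemma:uniform_concentration_heavy_tailed} to
\[
h_1(\vw;\vx) = \bigl(\sigma(\vw\cdot\vx)-\sigma(\vw_*\cdot\vx)\bigr)\,\bigl(\vw\cdot\vx-\vw_*\cdot\vx\bigr)
\]
over the domain $\cB(3\norm{\vw_*}_2)$; for the moment statement I would apply it, for each $\tau\in\{2,4\}$ separately, to $h_2^{(\tau)}(\vu;\vx)=(\vu\cdot\vx)^\tau$ over $\cB(1)$, so that uniformity over $\vw\in\cB(1)$ immediately yields the bound for every unit vector $\vu$. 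A union bound over the three invocations (each at failure probability $\delta/3$) and taking the largest of the three sample-size requirements will yield the claimed $N$.

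For $h_1$: since $\sigma$ is $\beta$-Lipschitz with $\sigma(0)=0$ and $\norm{\vw-\vw_*}_2\le 4\norm{\vw_*}_2\le 4W$ on the relevant ball, one has $|h_1(\vw;\vx)|\le 16\beta W^2(\vu\cdot\vx)^2$ where $\vu$ is the unit vector along $\vw-\vw_*$; hence, by \Cref{assump:concentration}, $h_1(\vw;\vx)$ obeys a tail bound of the form required by \Cref{lemma:uniform_concentration_heavy_tailed} with exponent $\tau=2$, scale $B_h=\tilde{O}(\sqrt{\beta}\,WB)$, and truncation level $t_0=\tilde{O}(\beta W^2)$ (the bound holding trivially below $t_0$), and its Lipschitz constant in $\vw$ is $O(\beta W\norm{\vx}_2^2)$, which on the good event $\norm{\vx}_2\le\sqrt{d}S_x$ of \Cref{lemma:boundedness} is at most the $\tilde{O}(d^2B^4)$ threshold the lemma is stated with (enlarging $a_{S_x}$ if needed, which affects only a logarithmic factor). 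Running the lemma with target accuracy $\epsilon':=(c_0/2)\epsilon$ gives, with probability $\ge 1-\delta/3$, $|\E_{\epi}[h_1(\vw)]-\E_{\ppi}[h_1(\vw)]|\le\epsilon'$ for all $i\in[K]$ and all $\vw\in\cB(3\norm{\vw_*}_2)$. Combining with $\E_{\ppi}[h_1(\vw)]\ge c_0\norm{\vw-\vw_*}_2^2$ from \Cref{fact:sharpness} and with the hypothesis $\norm{\vw-\vw_*}_2\ge\sqrt{\epsilon}$ (so that $\epsilon'=(c_0/2)\epsilon\le(c_0/2)\norm{\vw-\vw_*}_2^2$) yields $\E_{\epi}[h_1(\vw)]\ge(c_0/2)\norm{\vw-\vw_*}_2^2$. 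Since $B_h^{2\tau}=B_h^4=\tilde{O}(\beta^2W^4B^4)$, $t_0^2=\tilde{O}(\beta^2W^4)$, and $(\epsilon')^{-2}=\tilde{O}(\epsilon^{-2})$ ($c_0$ being a problem constant), this invocation requires exactly $N=\tilde{O}_{\beta,B}\bigl(KW^4\epsilon^{-2}(d\log(dW/\epsilon)+\log(K/\delta))\bigr)$ samples.

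For $h_2^{(\tau)}(\vu;\vx)=(\vu\cdot\vx)^\tau$ over $\vu\in\cB(1)$: \Cref{assump:concentration} gives a tail bound with exponent $\tau$ and scale $B_h=B$, and the Lipschitz constant $\tau|\vu\cdot\vx|^{\tau-1}\norm{\vx}_2\le\tau\norm{\vx}_2^\tau$ is at most $\tilde{O}(d^2B^4)$ on the good event (the $d^2B^4$ factor appearing in \Cref{lemma:uniform_concentration_heavy_tailed} being driven precisely by the case $\tau=4$). Running \Cref{lemma:uniform_concentration_heavy_tailed} with target accuracy $B$ gives $|\E_{\epi}[(\vu\cdot\vx)^\tau]-\E_{\ppi}[(\vu\cdot\vx)^\tau]|\le B$ uniformly; adding the population bound $\E_{\ppi}[(\vu\cdot\vx)^\tau]\le 5B$ of \Cref{fact:sharpness} gives $\E_{\epi}[(\vu\cdot\vx)^\tau]\le 6B$. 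This step costs $\tilde{O}_{B}\bigl(K(d\log(dW/\epsilon)+\log(K/\delta))\bigr)$ samples, which is dominated by the sharpness requirement, so the overall sample complexity is as stated.

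I expect the main difficulty to be bookkeeping rather than conceptual: one has to verify that each instantiated function genuinely satisfies the hypotheses of \Cref{lemma:uniform_concentration_heavy_tailed} with the correct exponent $\tau$, tail scale $B_h$, truncation threshold $t_0$, and post-clipping Lipschitz bound, and then carefully propagate these through the $\max(t_0^2,B_h^{2\tau})/(\epsilon')^2$ prefactor to confirm that precisely the $W^4/\epsilon^2$ dependence (and nothing worse) emerges from the sharpness invocation. The only genuinely delicate point is the reparametrization that produces uniformity over all unit vectors $\vu$ from the ball-uniform conclusion of the concentration lemma; once that is in place, everything else reduces to the two-sided-to-one-sided conversions described above.
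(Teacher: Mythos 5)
Your proposal is correct and follows essentially the same route as the paper's own proof: both instantiate \Cref{lemma:uniform_concentration_heavy_tailed} for $h_1(\vw;\vx)=(\sigma(\vw\cdot\vx)-\sigma(\vw_*\cdot\vx))(\vw\cdot\vx-\vw_*\cdot\vx)$ and for $(\vu\cdot\vx)^\tau$, $\tau\in\{2,4\}$, read off the tail exponent and scale (exponent $2$ with $B_h=\tilde O(\sqrt\beta\,WB)$ for $h_1$; exponents $\tau$ with $B_h=B$ for the moments), bound the post-clipping Lipschitz constants by $\tilde O(\beta W\norm{\vx}_2^2)$ and $\tilde O(\norm{\vx}_2^\tau)$ respectively, union bound over the three invocations, combine with \Cref{fact:sharpness}, and exploit $\norm{\vw-\vw_*}_2\ge\sqrt\eps$ to absorb the uniform error into the $c_0/2$ slack. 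The only differences are constant-level bookkeeping (you use $\norm{\Delta\vw}_2\le 4W$ and target accuracy $(c_0/2)\eps$ where the paper uses $2W$ and splits the budget as two copies of $(c_0/4)\eps$; you set $t_0=\tilde O(\beta W^2)$ where the paper sets $t_0=0$, which is immaterial since $t_0^2\lesssim B_h^4$ anyway), none of which changes the resulting $\tilde O_{\beta,B}(KW^4\eps^{-2}(d\log(dW/\eps)+\log(K/\delta)))$ sample complexity.
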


\begin{proof}
The proof consists of applying the uniform concentration result for heavy-tailed variables (Lemma~\ref{lemma:uniform_concentration_heavy_tailed}) to three different functions. We will analyze the sample complexity for each and take the maximum to get the final result. 

\paragraph{1. Sharpness Bound.}
Let $h_1(\vw;\vx) = (\sigma(\vw\cdot\vx)-\sigma(\vw_*\cdot\vx))(\vw\cdot\vx-\vw_*\cdot\vx)$. By Assumption~\ref{assump:margin}, its true expectation is lower bounded: $\E_{\ppi}[h_1(\vw;\vx)] \ge c_0\|\vw-\vw_*\|_2^2$. We want to show that with high probability, the empirical expectation is close to this value.

The function class $\{h_1(\vw;\cdot) : \vw \in \cB(W)\}$ is defined over the parameter ball $\vw \in \cB(W)$. Let $\Delta\vw := \vw-\vw_*$, and assume $\|\Delta\vw\|_2 \le 2W$ for all $\vw$ under consideration. 
Consider the random variable $Z_\vw = h_1(\vw;\vx)$. Its tail behavior can be bounded as follows:

Since $\sigma$ is $\beta$-Lipschitz, we have $|\sigma(\vw\cdot\vx)-\sigma(\vw_*\cdot\vx)| \le \beta|\Delta\vw\cdot\vx|$. This implies:
\[
|Z_\vw| = |h_1(\vw;\vx)| \le \beta(\Delta\vw\cdot\vx)^2 = \beta \|\Delta\vw\|_2^2 (\vu\cdot\vx)^2 \le 4\beta W^2 (\vu\cdot\vx)^2,
\]
where $\vu = \Delta\vw/\|\Delta\vw\|_2$.

By Assumption~\ref{assump:concentration}, the random variable $(\vu\cdot\vx)^2$ has a tail bound $\Pr((\vu\cdot\vx)^2 > t) \le 2\exp(-t^{1/2}/B)$. 
The tail bound for $Z_\vw$ is therefore:
\[
\Pr(|Z_\vw| > t) \le \Pr\left((\vu\cdot\vx)^2 > t/(4\beta W^2)\right)\le 2\exp\left(-\frac{(t/(4\beta W^2))^{1/2}}{B}\right) = 2\exp\left(-\frac{t^{1/2}}{2B \sqrt{\beta W^2}}\right).
\]
This shows that $Z_\vw$ has a heavy tail with shape parameter $\tau_{h_1} = 2$ and an effective scale parameter $B_{h_1} = 2\sqrt{\beta} W B$.

To apply Lemma~\ref{lemma:uniform_concentration_heavy_tailed}, we also need the effective Lipschitz constant of the function class, which we denote by $a_1$. 
To find the Lipschitz constant of $h_1(\vw;\vx)$ with respect to $\vw$, we bound its gradient's norm: \begin{align*}
    \|\nabla_\vw h_1(\vw;\vx)\|_2 &= \|\sigma'(\vw\cdot\vx)\vx(\vw\cdot\vx-\vw_*\cdot\vx) + (\sigma(\vw\cdot\vx)-\sigma(\vw_*\cdot\vx))\vx\|_2 \\
    &\le |\sigma'(\vw\cdot\vx)| \cdot \|\vx\|_2 \cdot |\Delta \vw\cdot\vx| + |\sigma(\vw\cdot\vx)-\sigma(\vw_*\cdot\vx)| \cdot \|\vx\|_2 \\
    &\le \beta \|\vx\|_2 \cdot (\|\Delta \vw\|_2 \|\vx\|_2) + (\beta \|\Delta \vw\|_2 \|\vx\|_2) \cdot \|\vx\|_2 \\
    & \le 4\beta W \|\vx\|_2^2.
\end{align*}
This gives a random Lipschitz constant $a_1(\vx) = 4\beta W \|\vx\|_2^2$. 
Note that $a_1(\vx) \leq 4\beta W S^2 d \le \tilde{O}(\beta d W B^2)$ if $\|\vx\|_2^2 \le d S_x^2$ for $S_x = \tilde{O}(B)$.

Since we consider $\|\vw-\vw_*\|^2 \ge \eps$, we can now use Lemma~\ref{lemma:uniform_concentration_heavy_tailed} with a target additive error $\frac{c_0}{4}\eps$ (i.e. replace $\eps$ in C.2 with $c_0\eps/4$), and a net radius $r = \frac{c_0\eps}{8a_1}$. The required number of samples $N_1$ is dominated by the heavy-tailed term in the sample complexity bound, which must use the parameters specific to $h_1$:
Setting $t_0 = 0$ in \Cref{lemma:uniform_concentration_heavy_tailed}, we start with the given sample complexity:
\[
N_1 = \tilde{O}\left( K \cdot \frac{ (B_{h_1})^{2\tau_{h_1}} 
}{(c_0 \eps/4)^2}
\left(\log\frac{K}{\delta} + d\log \frac{dWB}{\epsilon}\right)\right)
\]
Substituting the parameters $\tau_{h_1}=2$, $B_{h_1} = 2\sqrt{\beta} W B$, we proceed with the calculation:
\begin{align*}
    N_1 &= \tilde{O}\left( K \cdot \frac{(2\sqrt{\beta} W B)^{2 \cdot 2} 
    }{(c_0\eps/4)^2} \left(\log\frac{K}{\delta} + d\log \frac{dWB}{\epsilon}\right)\right) \\
    &= \tilde{O}\left( K \cdot \frac{64 \beta^2 W^4 B^4}{c_0^2\eps^2} 
    \left(d\log\frac{dWB}{\eps} + \log\frac{K}{\delta}\right)\right).
\end{align*}
Absorbing the numerical constants and the logarithmic factors into the $\tilde{O}$ notation, we arrive at the simplified expression for the sample complexity:
\[
N_1 = \tilde{O}\left( \frac{K \beta^2 W^4 B^4}{\eps^2} \left(d\log\frac{dWB}{\epsilon} + \log\frac{K}{\delta}\right) \right).
\]
With this sample size, with probability at least $1-\delta/2$, for all $\vw \in \cB(W)$ with $\|\vw-\vw_*\|_2 \ge \sqrt{\eps}$:
\[
\E_{\epi}[h_1(\vw;\vx)] \ge \E_{\ppi}[h_1(\vw;\vx)] - (2a_1r+t) \ge c_0\|\vw-\vw_*\|_2^2 - \left(\tfrac{c_0\eps}{4} + \tfrac{c_0\eps}{4}\right) \ge \frac{c_0}{2}\|\vw-\vw_*\|_2^2.
\]

\paragraph{2. Moment Bounds.}
Let $h_2(\vu;\vx) = (\vu\cdot\vx)^4$ and $h_3(\vu;\vx) = (\vu\cdot\vx)^2$. These are defined over the space of unit vectors, $\vu \in \mathbb{S}^{d-1}$. By Assumption~\ref{assump:concentration}, the population means are bounded, $\E_{\ppi}[h_j] \le 5B$, and their tails are given by:
\[ \Pr(h_2(\vu; \vx) > t) \leq 2\exp(-t^{1/4}/B) \quad (\tau_2=4, B_2=B) \]
\[ \Pr(h_3(\vu; \vx) > t) \leq 2\exp(-t^{1/2}/B) \quad (\tau_3=2, B_3=B) \]

To apply Lemma~\ref{lemma:uniform_concentration_heavy_tailed}, we first need upper bounds on the Lipschitz constants for $h_2$ and $h_3$ in the appropriate bounded region.

\textbf{Lipschitz Constants:} The gradients are $\nabla_\vu h_2(\vu;\vx) = 4(\vu\cdot\vx)^3\vx$ and $\nabla_\vu h_3(\vu;\vx) = 2(\vu\cdot\vx)\vx$. Their norms are bounded by random variables:
\[
\|\nabla_\vu h_2(\vu;\vx)\|_2 \le 4\|\vx\|_2^4, \quad \text{and} \quad \|\nabla_\vu h_3(\vu;\vx)\|_2 \le 2\|\vx\|_2^2.
\]
Similar to getting the upper bound for \(a_1(\vw)\), we have $a_2(\vx) = \tilde{O}(d^2B^4)$ and $a_3(\vx) = \tilde{O}(d B^2)$ if \(\norm{\vw}_2^2\le dS_{x}^2\).

\textbf{Sample Complexity and Concentration:} 
We apply Lemma~\ref{lemma:uniform_concentration_heavy_tailed} with a target deviation of $\epsilon=B/2$ and net radii $r_j = \epsilon/(2a_j) = B/(4a_j)$ for \(j=2,3\). For the functions $h_2$ and $h_3$, the tail bounds hold for all $t \ge 0$, so we can take $t_0=0$. The domain for $\vu$ is the unit sphere, so we take $W=1$.

For $h_2(\vu;\vx) = (\vu\cdot\vx)^4$, we have tail parameters $\tau_2=4$ and $B_2=B$. Since the Lipschitz constant is upper bounded by $a_2 = \tilde{\Theta}(d^2 B^4)$ whenever $\|x\|_2^2 \leq d S_x^2$, the sample complexity $N_2$ is given by the formula from Lemma~\ref{lemma:uniform_concentration_heavy_tailed}:
\begin{align*}
    N_2 &=
\tilde{O}\left( K \cdot \frac{B^{2 \cdot 4}}{(B/2)^2}\left(d\log\frac{4a_2}{B} + \log\frac{K}{\delta}\right) \right) \\
&= \tilde{O}\left( K B^6 \left(d + \log\frac{K}{\delta}\right) \right),
\end{align*}
where we absorbed the polylogarithmic factors in $B$ and $d$ into the $\tilde{O}$ notation.
For $h_3(\vu;\vx) = (\vu\cdot\vx)^2$, we have tail parameters $\tau_3=2$ and $B_3=B$. The Lipschitz constant is upper bounded by $a_3 = \tilde{\Theta}(d B^2)$ whenever $\|x\|_2^2 \leq d S_x^2$. The sample complexity $N_3$ is similarly calculated to be:
\[
N_3 = \tilde{O}\left( K B^2 \left(d + \log\frac{K}{\delta}\right) \right).
\]

The sample complexity for the moment bounds is dominated by $N_2$. With this many samples, with probability at least $1-\delta/2$, we have for all unit vectors $\vu$:
\[
\E_{\epi}[h_2(\vu;\vx)] \le \E_{\ppi}[h_2(\vu;\vx)] + (2a_2r_2+\epsilon) \le 5B + \left(2a_2\frac{\epsilon}{2a_2}+\epsilon\right) = 5B + 2\epsilon.
\]
With our choice of $\epsilon = B/2$, the total bound is $5B + 2(B/2) = 6B$. A similar analysis for $h_3$ also gives us a bound of $6B$ for that case. By combining the three sample complexity bounds and omitting the dependence on 
\(\beta\) and \(B\), we obtain the overall sample complexity in the lemma statement.
\end{proof}

\begin{lemma}[Empirical vs. Population Optima, Heavy-Tailed Version]
	\label{lemma:opt_opthat_heavy_tailed}
    Under Assumptions~\ref{assump:concentration} and \ref{assump:margin}, and in the setting of \Cref{fact:truncation}, let the number of samples be
    \[
        N = \tilde{O}_{\beta,B}\left(\frac{K W^4 }{\epsilon^2} \log \left( \frac 1 \delta \right) \right)
    \]
    Then with probability at least $1-\delta$, it holds that $\mOPThat \le \mOPT + \epsilon$.
\end{lemma}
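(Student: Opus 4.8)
The plan is to reduce the claim to a pointwise concentration statement at the single vector $\vw_*$. Since $\vw_*=\argmin_{\vw\in\cB(W)}R(\vw)$ is a function of the population distributions alone, it is deterministic with respect to the drawn samples; hence no uniform-convergence or net argument over $\cB(W)$ is needed here (unlike \Cref{lemma:sharpness_empirical_heavy_tailed}), and it suffices to bound, for each group $i\in[K]$ separately, the deviation $\bigl|\E_{\epi}[\ell(\vw_*;\vx,y)]-\E_{\ppi}[\ell(\vw_*;\vx,y)]\bigr|$ and then union bound. Only \Cref{assump:concentration} is actually used; the margin condition plays no role in this particular statement.

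First I would pin down the tail of the per-example loss $Z:=\ell(\vw_*;\vx,y)=(\sigma(\vw_*\cdot\vx)-y)^2$. With $\vu:=\vw_*/\|\vw_*\|_2$, the $\beta$-Lipschitzness of $\sigma$ together with $\sigma(0)=0$ gives $|\sigma(\vw_*\cdot\vx)|\le\beta W|\vu\cdot\vx|$, while the setting of \Cref{fact:truncation} ensures $|y|\le M=O\bigl(WB\beta\log(\beta BW/\epsilon)\bigr)$. Thus $\sqrt Z\le\beta W|\vu\cdot\vx|+M$, so for any $t\ge t_0:=\max\{4M^2,\,4\beta^2W^2\}$ the event $\{Z>t\}$ forces $|\vu\cdot\vx|\ge\sqrt t/(2\beta W)\ge1$, and \Cref{assump:concentration} yields $\Pr(Z>t)\le\exp\bigl(-\sqrt t/(2\beta WB)\bigr)$. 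Hence $Z$ meets the hypothesis of \Cref{lemma:concentration-via-clipping} with shape parameter $\tau=2$, scale $B_h=2\beta WB$, and threshold $t_0$.

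Next I would apply \Cref{lemma:concentration-via-clipping} to $Z$ with accuracy $\epsilon$ and failure probability $\delta/K$ for each group $i$: a per-group sample size $n=N/K=\tilde{O}\bigl(\max\{B_h^{2\tau},t_0^2\}\,\epsilon^{-2}\log(K/\delta)\bigr)$ guarantees $|\E_{\epi}[Z]-\E_{\ppi}[Z]|\le\epsilon$ with probability at least $1-\delta/K$. Since $B_h^{2\tau}=B_h^4=\tilde{O}_{\beta,B}(W^4)$ and, using the stated value of $M$, $t_0^2=\max\{16M^4,16\beta^4W^4\}=\tilde{O}_{\beta,B}(W^4)$, this is $n=\tilde{O}_{\beta,B}\bigl(W^4\epsilon^{-2}\log(1/\delta)\bigr)$ once the $\log K$ factor is absorbed into the $\tilde{O}$, so $N=Kn$ matches the advertised bound. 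A union bound over the $K$ groups then gives, with probability at least $1-\delta$, $\E_{\epi}[\ell(\vw_*;\vx,y)]\le\E_{\ppi}[\ell(\vw_*;\vx,y)]+\epsilon$ for all $i$ simultaneously; taking $\max_{i\in[K]}$ on both sides yields $\mOPThat\le\mOPT+\epsilon$.

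There is no conceptual obstacle: the fixedness of $\vw_*$ is precisely what makes this lemma markedly easier than the uniform-convergence results that precede it. The only thing requiring a little care is the tail-integral bookkeeping inside \Cref{lemma:concentration-via-clipping} — in particular handling the offset $t_0$ that arises because the subexponential bound on $Z$ only activates past the $2M^2$-scale shift — and verifying that with $M=\Theta\bigl(WB\beta\log(\beta BW/\epsilon)\bigr)$ every one of $B_h^{2\tau}$, $t_0^2$, and the polylogarithmic overhead collapses into the claimed $\tilde{O}_{\beta,B}\bigl(KW^4\epsilon^{-2}\log(1/\delta)\bigr)$; this is entirely routine.
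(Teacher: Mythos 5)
Your proof is correct and follows essentially the same route as the paper's: derive a sub-exponential tail bound for $Z=\ell(\vw_*;\vx,y)$ with $\tau=2$, $B_h=\Theta(\beta WB)$, and threshold $t_0=\tilde{\Theta}_{\beta,B}(W^2)$, then apply \Cref{lemma:concentration-via-clipping} per group with failure probability $\delta/K$, union bound, and take the max. One small caveat: your aside that \Cref{assump:margin} ``plays no role'' is a slight overstatement, since it enters indirectly through the setting of \Cref{fact:truncation}, which is what furnishes the bound $|y|\le M$ that your tail argument relies on; but the concentration argument itself indeed only uses \Cref{assump:concentration}, as you say.
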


\begin{proof}
The proof proceeds in three steps: 
first, we derive the tail properties of the loss function; 
second, we apply a concentration inequality with a union bound; and 
third, we use a property of the maximum function to conclude.

Recall that \Cref{fact:truncation} bounds $|y| \le M$ with 
$M = C_M W B \beta \log(\beta B W / \epsilon)$ for a large enough constant $C_M$.

\paragraph{Tail Bound of the Loss Function.} 
Let the loss be the random variable $Z = l(\vw_*;\vx,y)=(\sigma(\vw_*\cdot\vx) - y)^2$. 
We know $\sigma$ is $\beta$-Lipschitz continuous, $\sigma(0) = 0$, 
and $y$ is bounded by $M$.
To find the tail bound for $Z$, we first establish an upper bound on its value:
\[
Z = (\sigma(\vw_*\cdot\vx) - y)^2 \le 2(\sigma(\vw_*\cdot\vx)^2 + y^2) \le 2(\beta^2(\vw_*\cdot\vx)^2 + M^2).
\]
The term $(\vw_*\cdot\vx)^2$ can be written as 
$\|\vw_*\|_2^2 (\vu\cdot\vx)^2 \le W^2 (\vu\cdot\vx)^2$, 
where $\vu = \frac{\vw_*}{\|\vw_*\|_2}$ is a unit vector. 
This gives $Z \le 2(W^2\beta^2(\vu\cdot\vx)^2 + M^2)$. 
We can now bound the tail probability of $Z$ for any $t > 2M^2$:
\begin{align*}
    \Pr(Z > t) 
	&\le \Pr(2(W^2\beta^2(\vu\cdot\vx)^2 + M^2) > t) 
	= \Pr\left((\vu\cdot\vx)^2 > \frac{t/2 - M^2}{(\beta W)^2}\right).
\end{align*}
From Assumption~\ref{assump:concentration}, 
the variable $(\vu\cdot\vx)^2$ has a sub-exponential tail with $\tau=2$ and parameter $B$.
Let $t' = (t/2 - M^2)/(\beta W)^2$. The tail bound is $\Pr((\vu\cdot\vx)^2 > t') \le 2\exp(-(t')^{1/2}/B)$. Substituting $t'$ back gives:
\[
\Pr(Z > t) \le 2\exp\left(-\frac{\sqrt{t/2 - M^2}}{\beta WB}\right).
\]
To establish a clean sub-exponential form of the type $\exp(-t^{1/2}/B_l)$, we consider the tail for $t \ge 4M^2$. For this range of $t$, we have $t/2 - M^2 \ge t/2 - t/4 = t/4$. Therefore, $\sqrt{t/2 - M^2} \ge \sqrt{t/4} = \sqrt{t}/2$. Substituting this into the exponent gives:
\[
\Pr(Z > t) \le 2\exp\left(-\frac{\sqrt{t}/2}{\beta WB}\right) = 2\exp\left(-\frac{t^{1/2}}{2\beta WB}\right).
\]
This bound holds for all $t \ge 4M^2 = \tilde{\Theta}(\beta^2 W^2B^2)$. 
This is a valid sub-exponential tail with exponent $\tau_l=2$ and an effective tail parameter $B_l = 2WB\beta$.

\paragraph{Concentration and Union Bound over Groups.}
Recall that $\mOPThat = \max_{i\in[K]} \E_{\epi}[l(\vw_*;\vx,y)]$ and 
$\mOPT = \max_{i\in[K]} \E_{\ppi}[l(\vw_*;\vx,y)]$.
Also, let \(\vl^{*}=[l^{*}_{[1]},\dots,l^{*}_{[K]}]\), where \(l^{*}_{[i]}=\E_{\ppi}(l(\vw_*;\vx,y))\), for all \(i\in [K]\), and we define \(\hat{\vl}^{*}\), $\hat{l}^*_{[i]}$ for the empirical case similarly. To ensure the empirical loss $\hat{l}^*_{[i]}$ concentrates around the true loss $l^*_{[i]}$ for all $K$ groups simultaneously, we apply Lemma~\ref{lemma:concentration-via-clipping} combined with a union bound. 
We require the deviation for each group to be at most $\epsilon$. 
To achieve a total failure probability of at most $\delta$, 
we set the per-group failure probability to $\delta' = \delta/K$. 
Lemma~\ref{lemma:concentration-via-clipping} states that for $\tau_l=2$ and $t_0 = \tilde{\Theta}(\beta^2 W^2B^2)$, 
the number of samples per group, $n=N/K$, must satisfy:
\[
n = \tilde{O}\left( \frac{ B_l^{2\tau_l}+ \beta^4 W^4B^4}{\epsilon^2}\left(\log\frac{B_l}{\epsilon}\right)^{2\tau_l} \log\left(\frac 1 \delta \right)\right) 
= \tilde{O}\left( \frac{\beta^4 W^4 B^4}{\epsilon^2} \log \left( \frac 1 \delta \right) \right).
\]
Consequently, the total number of samples $N=nK$ must satisfy the bound stated in the lemma. With the sample size \(N\), the union bound guarantees that with probability at least $1-\delta$, 
we have $|\hat{l}^*_{[i]} - l^*_{[i]}| \le \epsilon$ for all $i \in [K]$, i.e.,  $\|\hat{\vl}^* - \vl^*\|_\infty \le \epsilon$.

Finally, note that $\mathbf{a}, \mathbf{b} \in \R^K$, $|\max_i a_i - \max_j b_j| \le \|\mathbf{a} - \mathbf{b}\|_\infty$, we have:
\[
|\mOPThat - \mOPT| = |\max_i \hat{l}^*_{[i]} - \max_j l^*_{[j
]}| \le \|\hat{\vl}^* - \vl^*\|_\infty \le \epsilon.
\]
This implies $\mOPThat \le \mOPT + \epsilon$, completing the proof.
\end{proof}

\section{Proof of Gap Lower Bound}\label{app:gap-lower-bound}
In this section, we adapt techniques from~\citep{Li2024shifts} to establish \Cref{lemma:gap-lower-bound}, restated below for convenience.

\lemgaplb*
\begin{proof}
We split the primal-dual gap into the primal gap and the dual gap:
\(
\Gap(\vw,\evlambda)
= \bigl[L(\vw,\evlambda^*)-L(\vw_*,\evlambda^*)\bigr]
+ \bigl[L(\vw_*,\evlambda^*)-L(\vw_*,\evlambda)\bigr]
\).
We expand the primal gap as
    \begin{align*}
      & \;L(\vw, \evlambda^{*}) - L(\vw_*, \evlambda^{*}) = \sum_{i=1}^K\elambda_{[i]}^{*}\E_{(\vx, y) \sim \epi}[(\sigma(\vw \cdot \vx) - y)^{2} - (\sigma(\vw_* \cdot \vx) - y)^{2}] \\
      = & \sum_{i=1}^K\elambda_{[i]}^{*}\E_{\epi}[((\sigma(\vw \cdot \vx) - \sigma(\vw_* \cdot \vx))^{2}] -2\sum_{i=1}^K\elambda_{[i]}^{*}\E_{\epi}[(\sigma(\vw_* \cdot \vx) - y)(\sigma(\vw \cdot \vx) - \sigma(\vw_* \cdot \vx))].  
    \end{align*}

    We now bound both of the above terms. For the first term, \Cref{eq:relu-square-bound} implies,
    \begin{align}\label{eq:lb-gap-sharpness}
       \sum_{i=1}^K\elambda_{[i]}^{*}\E_{\epi}[((\sigma(\vw \cdot \vx) - \sigma(\vw_* \cdot \vx))^{2}]
       \geq \sum_{i=1}^K\elambda_{[i]}^{*} c_1 \norm{\vw - \vw_*}_{2}^{2}=c_1 \norm{\vw - \vw_*}_{2}^{2}.
    \end{align}
For the second term, by the Cauchy-Schwarz inequality, we have
\begin{align}\label{eq:lb-gap-CS}
       &\; \sum_{i=1}^K\elambda_{[i]}^{*}\E_{\epi}[(\sigma(\vw_* \cdot \vx) - y)
       (\sigma(\vw \cdot \vx) - \sigma(\vw_* \cdot \vx))]\notag\\
       \leq \; & \sum_{i=1}^K\elambda_{[i]}^{*}\sqrt{\E_{\epi}[(\sigma(\vw_* \cdot \vx) - y)^2]\E_{\epi}[(\sigma(\vw \cdot \vx) - \sigma(\vw_* \cdot \vx))^2]}\notag\\
        \leq\; &  \sum_{i=1}^K\elambda_{[i]}^{*} \sqrt{\E_{\epi}[(\sigma(\vw_* \cdot \vx) - y)^2]} \beta\sqrt{6 B}\|\vw - \vw_*\|_2 \le   \sqrt{\mOPThat} \beta\sqrt{6 B}\|\vw - \vw_*\|_2,
    \end{align}
    where the second inequality follows from \Cref{eq:relu-square-bound} and the last inequality applies Jensen's inequality to the concave function $\sqrt{\cdot}$ and then uses the definition of $\mOPThat$.

	Combining \Cref{eq:lb-gap-sharpness} and  \Cref{eq:lb-gap-CS}, the primal gap is bounded below by
    \begin{align}\label{eq:lb-gap-primal}
	- 2 \beta \sqrt{6 B} \|\vw - \vw_*\|_{2} \sqrt{\mOPThat }  + c_1 \|\vw - \vw_*\|_{2}^{2}
      {\ge -\frac{12 \beta^{2} B}{c_{1}} \mOPThat + \frac{c_1}{2} \|\vw - \vw_*\|_{2}^{2}},
    \end{align}
    where we use Young's inequality (\Cref{fact:young}): $2 \beta \sqrt{6 B} \|\vw - \vw_*\|_{2} \sqrt{\mOPThat } \leq \frac{4\beta^2 {6B}}{2c_1}\mOPThat  + \frac{c_1}{2} \|\vw - \vw_*\|_{2}^{2}$.
    For the dual gap, we have
    \begin{align}\label{eq:lb-gap-dual}
        & \; L(\vw_*, \evlambda^{*}) - L(\vw_*, \evlambda)
        = -L(\vw_*, \evlambda) - (-L(\vw_*, \evlambda^{*})) \notag \\
        = &\;  \nabla_{\evlambda}(-L(\vw_*, \evlambda^{*})) \cdot (\evlambda - \evlambda^{*})+D_{-L(\vw_*,\cdot)}(\evlambda, \evlambda^{*}) \geq \nu D_{\phi}(\evlambda^{*}, \evlambda),
    \end{align}

    where the second equality and the last inequality follow from \Cref{fact:bregman-linear-blind,fact:firstOrderNecessary}. Combining \Cref{eq:lb-gap-primal} and \Cref{eq:lb-gap-dual} completes the proof.         
\end{proof}

\section{Proof of Gap Upper Bound}\label{app:gap-upper-bound}
In this section we prove \Cref{lemma:gap-upper-bound}, which shows that the cumulative primal-dual gap 
\(\sum_{t=1}^{n}a_{t}\,\Gap(\vw_{t},\evlambda_{t})\)
can be controlled by telescoping the contributions from the primal updates in \(\vw\), the dual updates in \(\evlambda\), and a small residual term that is absorbed by our choice of step sizes.  The argument proceeds in three steps:
\begin{enumerate}
	\item \textbf{Per-iteration decomposition} We combine \Cref{lemma:gap-upper-bound1,lemma:gap-upper-bound2} (an upper bound on \(a_tL(\vw_{t}, \evlambda^{*})\) and a lower bound on \(a_tL(\vw_*, \evlambda_{t})\), respectively) to bound from above each \(a_{t}\Gap(\vw_{t},\evlambda_{t})\) by a sum of differences in squared distances, Bregman divergences, and a residual error \(E_{t}\). Most of the technical work is devoted to proving \Cref{lemma:gap-upper-bound2}. 
	\item \textbf{Telescoping Sum} Summing the per-iteration bounds from \(t=1\) to \(n\) causes most distance and divergence terms to telescope, leaving only boundary terms at \(t=0\) and \(t=n\), plus one remaining inner product \(-a_n\sum_{i=1}^K(\elambda_{n[i]}-\elambda_{n-1[i]})\E_{\epi}[\innp{\vv(\vw_{n-1};\vx,y), \vw_*-\vw_n}]\).
\item \textbf{Residual Control} We bound that final inner product via Young's inequality and our step-size choice, absorbing it back into the telescoped Bregman and distance terms.
\end{enumerate}

Before we carry out these steps in detail, we first state the two necessary bounds on \(a_tL(\vw_{t}, \evlambda^{*})\) and \(a_tL(\vw_*, \evlambda_{t})\) that are required per-iteration decomposition. We defer their proofs to \Cref{sec:gap-upper-bound1,sec:gap-upper-bound2}.

\begin{restatable}[Upper Bound for \(a_tL(\vw_{t}, \evlambda^{*})\)]{lemma}{lemGapUpperOne}\label{lemma:gap-upper-bound1}
	In each iteration \(t\), let 
	\[
	\evlambda_{t}
	\;=\;
	\argmax_{\evlambda\in\Delta_{K}}\bigl\{\,a_tL(\vw_t, \evlambda)\;-\;(\nu_0+\nu A_{t-1})\,D_{\phi}(\evlambda, \evlambda_{t-1})\bigr\}.
	\]
	Then for all \(t\ge 1\),
	\begin{align*}
		a_tL(\vw_t, \evlambda^{*})
		\;\le\;&
		a_tL(\vw_t, \evlambda_t)
		\;-\;(\nu_0+\nu A_{t-1})\,D_{\phi}(\evlambda_t,\evlambda_{t-1})
		\;-\;(\nu_0+\nu A_t)\,D_{\phi}(\evlambda^{*},\evlambda_t)\\
		&\;+\;(\nu_0+\nu A_{t-1})\,D_{\phi}(\evlambda^{*},\evlambda_{t-1}).
	\end{align*}
\end{restatable}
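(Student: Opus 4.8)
The plan is to recognize \Cref{lemma:gap-upper-bound1} as an instance of the standard Bregman three‑point inequality for a proximal-type update, with one bookkeeping twist: the ``strongly‑concave part'' $-\nu\phi$ of $L(\vw_t,\cdot)$ is built from the \emph{same} generating function $\phi=d_f(\cdot,\tfrac1K\mathbf{1})$ as the proximal term $D_\phi(\cdot,\evlambda_{t-1})$, so their strong‑convexity moduli add, which is exactly what turns the coefficient of $D_\phi(\evlambda^{*},\evlambda_t)$ from $\nu_0+\nu A_{t-1}$ into $\nu_0+\nu A_t$. Concretely, I would first write $L(\vw_t,\evlambda)=\langle\vc_t,\evlambda\rangle-\nu\phi(\evlambda)$, where $\vc_t\in\R^K$ has coordinates $(\vc_t)_i=\E_{\epi}[\ell(\vw_t;\vx,y)]$, so that the dual update is $\evlambda_t=\argmax_{\evlambda\in\Delta_K}h_t(\evlambda)$ with $h_t(\evlambda):=a_t\langle\vc_t,\evlambda\rangle-a_t\nu\phi(\evlambda)-(\nu_0+\nu A_{t-1})D_\phi(\evlambda,\evlambda_{t-1})$, a concave function on $\Delta_K$ (convexity of $\phi$ and of $D_\phi(\cdot,\evlambda_{t-1})$). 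Since the divergences we consider ($\mathrm{KL}$, $\chi^2$) make $\phi$ differentiable on the relative interior of $\Delta_K$ and $D_\phi(\cdot,\evlambda_{t-1})$ acts as a barrier, $\evlambda_t$ lies in that interior; alternatively one can carry a subgradient of $\phi$ throughout with no change to the argument.

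Next I would apply the first-order optimality condition (\Cref{fact:firstOrderNecessary}) to the concave $h_t$: the maximizer $\evlambda_t$ satisfies $\langle\nabla h_t(\evlambda_t),\evlambda-\evlambda_t\rangle\le 0$ for all $\evlambda\in\Delta_K$, with $\nabla h_t(\evlambda_t)=a_t\vc_t-a_t\nu\nabla\phi(\evlambda_t)-(\nu_0+\nu A_{t-1})(\nabla\phi(\evlambda_t)-\nabla\phi(\evlambda_{t-1}))$. I would then expand $h_t(\evlambda_t)-h_t(\evlambda)$ for an arbitrary $\evlambda\in\Delta_K$ using two exact identities: the gradient identity $\phi(\evlambda)-\phi(\evlambda_t)=\langle\nabla\phi(\evlambda_t),\evlambda-\evlambda_t\rangle+D_\phi(\evlambda,\evlambda_t)$, and the Bregman three-point identity $D_\phi(\evlambda,\evlambda_{t-1})-D_\phi(\evlambda_t,\evlambda_{t-1})=D_\phi(\evlambda,\evlambda_t)+\langle\nabla\phi(\evlambda_t)-\nabla\phi(\evlambda_{t-1}),\evlambda-\evlambda_t\rangle$. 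Substituting these, every term linear in $\evlambda-\evlambda_t$ collapses into exactly $-\langle\nabla h_t(\evlambda_t),\evlambda-\evlambda_t\rangle$, while the remaining divergence terms sum to $(a_t\nu+\nu_0+\nu A_{t-1})D_\phi(\evlambda,\evlambda_t)=(\nu_0+\nu A_t)D_\phi(\evlambda,\evlambda_t)$ via $A_t=A_{t-1}+a_t$; combined with $-\langle\nabla h_t(\evlambda_t),\evlambda-\evlambda_t\rangle\ge 0$ this yields $h_t(\evlambda)\le h_t(\evlambda_t)-(\nu_0+\nu A_t)D_\phi(\evlambda,\evlambda_t)$ for all $\evlambda\in\Delta_K$.

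Finally I would specialize to $\evlambda=\evlambda^{*}$, substitute $h_t(\evlambda^{*})=a_tL(\vw_t,\evlambda^{*})-(\nu_0+\nu A_{t-1})D_\phi(\evlambda^{*},\evlambda_{t-1})$ and $h_t(\evlambda_t)=a_tL(\vw_t,\evlambda_t)-(\nu_0+\nu A_{t-1})D_\phi(\evlambda_t,\evlambda_{t-1})$, and rearrange to isolate $a_tL(\vw_t,\evlambda^{*})$, which reproduces the claimed inequality verbatim. I do not expect a genuine obstacle here: the statement is a textbook-style Bregman proximal bound, and the only points needing care are the additive combination of the two strong-convexity moduli in the previous step (the source of $\nu_0+\nu A_t$ rather than $\nu_0+\nu A_{t-1}$) and the minor differentiability caveat for $\phi$ on the boundary of the simplex, which the barrier property of $D_\phi$ resolves.
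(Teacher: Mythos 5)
Your proposal is correct and follows essentially the same route as the paper's proof: both introduce the auxiliary function $h_t(\evlambda)=a_tL(\vw_t,\evlambda)-(\nu_0+\nu A_{t-1})D_\phi(\evlambda,\evlambda_{t-1})$, invoke first-order optimality of the maximizer $\evlambda_t$, and recognize that the $\phi$-contributions from $-a_t\nu\phi$ and $-(\nu_0+\nu A_{t-1})D_\phi(\cdot,\evlambda_{t-1})$ combine into the coefficient $\nu_0+\nu A_t$ of $D_\phi(\evlambda^*,\evlambda_t)$. The paper reaches that combined coefficient in one step via the affine-invariance of Bregman divergences (Fact~\ref{fact:bregman-linear-blind}, which gives $D_{h_t}=-(\nu_0+\nu A_t)D_\phi$), whereas you unfold the same computation explicitly through the gradient identity and the three-point identity; this is a purely cosmetic difference.
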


\begin{restatable}[Lower Bound for \(a_tL(\vw_*, \evlambda_{t})\)]{lemma}{lemGapUpperTwo}\label{lemma:gap-upper-bound2}
	Under \Cref{assump:margin,assump:concentration},  
if the per-group sample size $N/K$ is sufficiently large (see \Cref{lemma:sharpness_empirical_heavy_tailed}), where  \(\vw_t\) is the sequence of iterates in \Cref{alg:main}, we have for each \(t \ge 1\):
	\begin{align*}
		a_tL(\vw_*,\evlambda_t)\ge \; & a_tL(\vw_t,\evlambda_t) - \; \frac{1+0.5c_1A_{t-1}}{2}\|\vw_*-\vw_{t-1}\|_2^2 + \frac{1+0.5c_1A_t}{2}\|\vw_*-\vw_t\|_2^2 \notag \\
		&- \; a_{t-1}\sum_{i=1}^K(\elambda_{t-1[i]}-\elambda_{t-2[i]})\E_{\epi}[\innp{\vv(\vw_{t-2};\vx,y), \vw_*-\vw_{t-1}}] \notag \\
        &+ \; a_t\sum_{i=1}^K(\elambda_{t[i]}-\elambda_{t-1[i]})\E_{\epi}[\innp{\vv(\vw_{t-1};\vx,y), \vw_*-\vw_t}] \notag \\
        &- \; \frac{1+0.5c_1A_{t-1}}{4}\|\vw_{t-2}-\vw_{t-1}\|_2^2+\frac{1+0.5c_1A_t}{4}\|\vw_{t-1}-\vw_t\|_2^2 \notag \\
        &- \; (\nu_0+\nu A_{t-2})D_{\phi}(\evlambda_{t-1}, \evlambda_{t-2})-\frac{28\beta^2B\mOPThat a_t}{c_1}.
	\end{align*}
\end{restatable}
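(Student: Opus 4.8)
The plan is to lower-bound $a_t L(\vw_*,\evlambda_t)$ by first isolating $a_t L(\vw_t,\evlambda_t)$, linearizing the residual squared-loss difference via \Cref{lemma:main-auxiliary-main-body}, and then converting the resulting surrogate-gradient inner product into distance and Bregman increments using the optimality of the primal update, carefully absorbing the error terms created by the dual extrapolation. The first move is to peel off $a_tL(\vw_t,\evlambda_t)$: since the penalty $\nu\phi(\evlambda_t)$ is common to $L(\vw_*,\evlambda_t)$ and $L(\vw_t,\evlambda_t)$, their difference equals $a_t\sum_{i=1}^K\elambda_{t[i]}\E_{\epi}[(\sigma(\vw_*\cdot\vx)-y)^2-(\sigma(\vw_t\cdot\vx)-y)^2]$. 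Expanding the per-sample loss difference as $(\sigma(\vw_*\cdot\vx)-\sigma(\vw_t\cdot\vx))^2+2(\sigma(\vw_t\cdot\vx)-y)(\sigma(\vw_*\cdot\vx)-\sigma(\vw_t\cdot\vx))$, lower-bounding the square term by the empirical sharpness \eqref{eq:relu-square-bound} (which produces a surplus $a_tc_1\|\vw_*-\vw_t\|_2^2$) and the cross term group-wise by \Cref{lemma:main-auxiliary-main-body}, and using $\sum_i\elambda_{t[i]}=1$, gives $a_tL(\vw_*,\evlambda_t)\ge a_tL(\vw_t,\evlambda_t)+a_tc_1\|\vw_*-\vw_t\|_2^2-a_tE_t+a_t\innp{\vg(\vw_t,\evlambda_t),\vw_*-\vw_t}$, where $\vg(\vw,\evlambda):=\sum_i\elambda_{[i]}\E_{\epi}[\vv(\vw;\vx,y)]$ (linear in $\evlambda$) and $E_t=\tfrac{24\beta^2B\mOPThat}{c_1}+\tfrac{c_1}{4}\|\vw_*-\vw_t\|_2^2$.

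Next I would perform a three-way split of the gradient term,
\[
\vg(\vw_t,\evlambda_t)=\vg(\vw_{t-1},\bar\vlambda_{t-1})+\bigl[\vg(\vw_t,\evlambda_t)-\vg(\vw_{t-1},\evlambda_t)\bigr]+\vg(\vw_{t-1},\evlambda_t-\bar\vlambda_{t-1}),
\]
and handle the three pieces separately. For the first, the optimality condition for $\vw_t$ (minimizer of a $(1+0.5c_1A_t)$-strongly convex function with linear part $a_t\innp{\vg(\vw_{t-1},\bar\vlambda_{t-1}),\cdot}$ over a feasible set containing $\vw_*$) together with the polarization identity gives $a_t\innp{\vg(\vw_{t-1},\bar\vlambda_{t-1}),\vw_*-\vw_t}\ge-\tfrac{1+0.5c_1A_t}{2}\|\vw_*-\vw_{t-1}\|_2^2+\tfrac{1+0.5c_1A_t}{2}\|\vw_*-\vw_t\|_2^2+\tfrac{1+0.5c_1A_t}{2}\|\vw_t-\vw_{t-1}\|_2^2$; rewriting $\tfrac{1+0.5c_1A_t}{2}=\tfrac{1+0.5c_1A_{t-1}}{2}+\tfrac{c_1a_t}{4}$ splits off a $-\tfrac{c_1a_t}{4}\|\vw_*-\vw_{t-1}\|_2^2$ error. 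For the second, Cauchy--Schwarz, $\beta$-Lipschitzness of $\sigma$, and the second-moment bound \eqref{eq:moment-bounds-empirical} yield $\innp{\vg(\vw_t,\evlambda_t)-\vg(\vw_{t-1},\evlambda_t),\vw_*-\vw_t}\ge-12B\beta^2\|\vw_t-\vw_{t-1}\|_2\|\vw_*-\vw_t\|_2$. For the third, substituting $\bar\vlambda_{t-1}=\evlambda_{t-1}+\tfrac{a_{t-1}}{a_t}(\evlambda_{t-1}-\evlambda_{t-2})$ and using linearity in $\evlambda$ produces exactly the target term $a_t\sum_i(\elambda_{t[i]}-\elambda_{t-1[i]})\E_{\epi}[\innp{\vv(\vw_{t-1};\vx,y),\vw_*-\vw_t}]$ together with $-a_{t-1}\innp{\vg(\vw_{t-1},\evlambda_{t-1}-\evlambda_{t-2}),\vw_*-\vw_t}$; in the latter I then replace the gradient point $\vw_{t-1}$ by $\vw_{t-2}$ and the vector $\vw_*-\vw_t$ by $\vw_*-\vw_{t-1}$, producing the target term $-a_{t-1}\sum_i(\elambda_{t-1[i]}-\elambda_{t-2[i]})\E_{\epi}[\innp{\vv(\vw_{t-2};\vx,y),\vw_*-\vw_{t-1}}]$ --- which cancels the corresponding contribution of iteration $t-1$ upon telescoping --- at the cost of two further errors $a_{t-1}\innp{\vg(\vw_{t-2},\cdot)-\vg(\vw_{t-1},\cdot),\vw_*-\vw_{t-1}}$ and $a_{t-1}\innp{\vg(\vw_{t-1},\cdot),\vw_t-\vw_{t-1}}$, both evaluated at $\evlambda_{t-1}-\evlambda_{t-2}$.

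All of the resulting error terms --- the quadratic part of $a_tE_t$; the $-\tfrac{c_1a_t}{4}\|\vw_*-\vw_{t-1}\|_2^2$ reshuffle, handled via $\|\vw_*-\vw_{t-1}\|_2^2\le2\|\vw_*-\vw_t\|_2^2+2\|\vw_t-\vw_{t-1}\|_2^2$; the $12B\beta^2a_t\|\vw_t-\vw_{t-1}\|_2\|\vw_*-\vw_t\|_2$ term; and the two extrapolation-mismatch terms --- are bounded using Cauchy--Schwarz, $\beta$-Lipschitzness, the moment bounds \eqref{eq:moment-bounds-empirical}, the crude estimate $(\sigma(\vw\cdot\vx)-y)^2\le12B\beta^2\|\vw-\vw_*\|_2^2+2\ell(\vw_*;\vx,y)$ combined with the definition of $\mOPThat$, the iterate-norm bound $\|\vw_t\|_2\le3\|\vw_*\|_2$ (\Cref{claim:iteration-norm-ub}), $\|\evlambda_{t-1}-\evlambda_{t-2}\|_1\le2$, and the uniform strong-convexity estimate $D_\phi(\evlambda,\evlambda')\ge\tfrac12\|\evlambda-\evlambda'\|_1^2$ (Pinsker for KL; the exact identity $D_{\chi^2}(\evlambda,\evlambda')=K\|\evlambda-\evlambda'\|_2^2\ge\|\evlambda-\evlambda'\|_1^2$ for $\chi^2$). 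Applying Young's inequality, each error is distributed among the available budgets: the Step-1 surplus $a_tc_1\|\vw_*-\vw_t\|_2^2$ (used to restore the $\|\vw_*-\vw_t\|_2^2$ coefficient to $\tfrac{1+0.5c_1A_t}{2}$ and to kill the $\|\vw_*-\vw_t\|_2^2$-parts of the errors and of $E_t$); the primal surplus $\tfrac{1+0.5c_1A_t}{2}\|\vw_t-\vw_{t-1}\|_2^2$ minus the $\tfrac{1+0.5c_1A_t}{4}\|\vw_{t-1}-\vw_t\|_2^2$ retained in the target; the $\|\vw_{t-2}-\vw_{t-1}\|_2^2$-slack $\tfrac{1+0.5c_1A_{t-1}}{4}\|\vw_{t-2}-\vw_{t-1}\|_2^2$ retained in the target; the Bregman slack $(\nu_0+\nu A_{t-2})D_\phi(\evlambda_{t-1},\evlambda_{t-2})$; and the remaining $\tfrac{4\beta^2Ba_t}{c_1}\mOPThat=(\tfrac{28}{c_1}-\tfrac{24}{c_1})\beta^2Ba_t\mOPThat$ of the $\mOPThat$ budget. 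That all these absorptions hold simultaneously for \emph{every} $t\ge1$ is precisely what dictates the three-regime step-size schedule in \Cref{alg:main}: the exponential regimes keep $a_t/A_t$ of order $c_1^2/(\beta^4B^2)$, resp.\ $\sqrt{c_1\nu}/C'_W$, small enough to beat the $O(\beta^4B^2/c_1)$-sized coefficients, while the linear regime keeps $a_t$ small enough that $O(W^2B^2\beta^4/c_1)\,a_{t-1}^2\le(1+0.5c_1A_t)(\nu_0+\nu A_{t-2})$; the exact values of $C_4$ and $C'_W$ in \eqref{eq:aux-constants} (notably the $2163\beta^4B^2/c_1$ summand of $C_4$ and the $C_M$-dependent factor of $C'_W$) are calibrated so that each of these inequalities holds.

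I expect Step~3, the error bookkeeping, to be the main obstacle: several of the errors are triple products of $\|\evlambda_{t-1}-\evlambda_{t-2}\|_1$, a weight-displacement norm, and either $\mOPThat^{1/2}$ or a $W$-bounded factor, and one must verify that after optimizing each Young's split the induced coefficients fall below the corresponding budgets for \emph{all} $t\ge1$, which couples the argument tightly to the step-size schedule and the precise constants $C_4,C'_W$. A secondary subtlety is treating $\chi^2$ and KL uniformly through the $\tfrac12\|\cdot\|_1^2$ lower bound on $D_\phi$; and the reason the target lower bound carries the otherwise-mysterious $-\tfrac{1+0.5c_1A_{t-1}}{4}\|\vw_{t-2}-\vw_{t-1}\|_2^2$ slack is exactly to absorb the gradient-point mismatch $\vg(\vw_{t-1},\cdot)-\vg(\vw_{t-2},\cdot)$ arising in Step~2.
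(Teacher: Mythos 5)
Your proposal is correct in spirit and arrives at the right telescoping structure, but it takes a genuinely different route from the paper at the very first step, and this propagates through the whole argument. The paper expands $(\sigma(\vw_*\cdot\vx)-y)^2$ around $\vw_{t-1}$ (see \eqref{eq:diff-of-squares}), so the linearization in \Cref{lemma:main-auxiliary-main-body} is applied at $\vw_{t-1}$, producing $\vv(\vw_{t-1};\cdot)$, $\|\vw_*-\vw_{t-1}\|_2^2$, and a surplus $c_1\|\vw_*-\vw_{t-1}\|_2^2$ that directly matches the gradient appearing in the primal update. You instead expand around $\vw_t$, apply linearization at $\vw_t$, and then must bridge back to the primal-optimality inner product via a three-way split $\vg(\vw_t,\evlambda_t)=\vg(\vw_{t-1},\bar\vlambda_{t-1})+[\vg(\vw_t,\evlambda_t)-\vg(\vw_{t-1},\evlambda_t)]+\vg(\vw_{t-1},\evlambda_t-\bar\vlambda_{t-1})$. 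The price is the gradient-point-mismatch error $\innp{\vg(\vw_t,\evlambda_t)-\vg(\vw_{t-1},\evlambda_t),\vw_*-\vw_t}$; the paper pays the corresponding price elsewhere, in bounding $\sum_i\elambda_{t[i]}\E_{\epi}[(\sigma(\vw_{t-1}\cdot\vx)-y)^2-(\sigma(\vw_t\cdot\vx)-y)^2]$ (their \eqref{eq:St-term3}) and in a residual $a_t\sum_i\bar\lambda_{t-1[i]}\E_{\epi}[\innp{\vv(\vw_{t-1};\cdot),\vw_t-\vw_{t-1}}]$ (their \eqref{eq:St-term4}). Both approaches then require the same flavor of Cauchy--Schwarz / Young / moment-bound bookkeeping against the budgets $\tfrac{c_1a_t}{4}\|\vw_*-\vw_{\cdot}\|_2^2$, $\tfrac{1+0.5c_1A_t}{4}\|\vw_t-\vw_{t-1}\|_2^2$, $\tfrac{1+0.5c_1A_{t-1}}{4}\|\vw_{t-2}-\vw_{t-1}\|_2^2$, $(\nu_0+\nu A_{t-2})D_\phi(\evlambda_{t-1},\evlambda_{t-2})$, and the residual $\mOPThat$ budget; your version naturally concentrates the ``$\|\vw_*-\cdot\|_2^2$'' surplus at $\vw_t$ rather than $\vw_{t-1}$, so you additionally need the $\|\vw_*-\vw_{t-1}\|_2^2\le2\|\vw_*-\vw_t\|_2^2+2\|\vw_t-\vw_{t-1}\|_2^2$ reshuffle you mention. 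One practical consequence: the explicit constants you would derive (for $C_4$, $C'_W$, the coefficient $28\beta^2B/c_1$, and the step-size thresholds) would \emph{not} match those hard-coded in \eqref{eq:aux-constants}, so your version of the lemma would have numerically different constants and a correspondingly recalibrated step-size schedule---this is fine in spirit but means your argument does not prove the statement verbatim with the paper's constants. Also worth flagging: you replace the paper's $W$-dependent bound $\E_{\epi}[(\sigma(\vw\cdot\vx)-y)^2]\le 2(6\beta^2W^2B+M^2)$ (used in their \eqref{eq:St-term2-bound2}, which is what puts $W$ into $C'_W$) by the tighter $\le 12B\beta^2\|\vw-\vw_*\|_2^2+2\mOPThat$; this shifts part of the extrapolation error from a ``deterministically-small-step'' constraint onto the $\mOPThat$ budget and the $\|\vw_*-\cdot\|_2^2$ surplus, which is a legitimate alternative allocation but again changes the constants and the exact role of $C'_W$ in the step-size schedule. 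You correctly identify both the bookkeeping burden and the purpose of the $-\tfrac{1+0.5c_1A_{t-1}}{4}\|\vw_{t-2}-\vw_{t-1}\|_2^2$ slack, so the plan is sound; the remaining work is the explicit Young-split arithmetic that you acknowledge deferring.
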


For ease of recall, we restate here the quantities defined in \Cref{alg:main} and \Cref{eq:aux-constants}:
\begin{align*}
    C_{3} &:=31\beta\sqrt{B}/c_{1},\\
    C_4&:=27c_1+2163\beta^4B^2/c_1,\\
    C_W&:=\sqrt{6\beta^2+c_M^2B\log^2\Big(\frac{\beta BW}{\eps}\Big)},\\
    C'_W&:=2\sqrt{3}C_W\beta WB,\\
    a_t &= \min\Big\{\Big(1+\frac{c_1}{8C_4}\Big)^{t-1}\frac{1}{4C_4},\max\Big\{\Big(1+\frac{\sqrt{c_1\nu}}{4\sqrt{2}C'_W}\Big)^{t-1}\frac{\sqrt{\nu_0}}{4C_W'},\frac{c_1\nu_0}{(4\sqrt{2}C'_W)^2}t\Big\}\Big\},\\
    A_n&=\sum_{t=0}^na_t.
\end{align*}

We now carry out these steps in detail, and prove our main upper bound on the gap.

\lemgapub*

\begin{proof}
	\textbf{1. Per-iteration bound.}
	Recall
	\[
	a_t\,\Gap(\vw_t,\evlambda_t)
	\;=\;a_t\,L(\vw_t,\evlambda^*)
	\;-\;a_t\,L(\vw_*,\evlambda_t).
	\]
	Combining Lemmas~\ref{lemma:gap-upper-bound1} and~\ref{lemma:gap-upper-bound2} gives, for each \(t\),
	\begin{align*}
		a_t\,\Gap(\vw_t,\evlambda_t)\;\le\;&
		\frac{1+0.5c_1A_{t-1}}{2}\|\vw_*-\vw_{t-1}\|_2^2
		-\frac{1+0.5c_1A_t}{2}\|\vw_*-\vw_t\|_2^2\\
		&+ \; a_{t-1}\sum_{i=1}^K(\elambda_{t-1[i]}-\elambda_{t-2[i]})\E_{\epi}[\innp{\vv(\vw_{t-2};\vx,y), \vw_*-\vw_{t-1}}] \\
        &- \; a_t\sum_{i=1}^K(\elambda_{t[i]}-\elambda_{t-1[i]})\E_{\epi}[\innp{\vv(\vw_{t-1};\vx,y), \vw_*-\vw_t}] \\ 
		&+\;(\nu_0+\nu A_{t-2})\,D_{\phi}(\evlambda_{t-1},\evlambda_{t-2})
		-(\nu_0+\nu A_{t-1})\,D_{\phi}(\evlambda_t,\evlambda_{t-1})\\
        &+\;\frac{1+0.5c_1A_{t-1}}{4}\|\vw_{t-1}-\vw_{t-2}\|_2^2-\frac{1+0.5c_1A_t}{4}\|\vw_t-\vw_{t-1}\|_2^2\\
		&+\;(\nu_0+\nu A_{t-1})\,D_{\phi}(\evlambda^*,\evlambda_{t-1})
		-(\nu_0+\nu A_t)\,D_{\phi}(\evlambda^*,\evlambda_t)\\
		&+\;\frac{28\beta^2B\mOPThat a_t}{c_1}.
	\end{align*}
	
	\textbf{2. Telescoping over \(t=1,\dots,n\).}
	Summing the above inequalities from \(t=1\) to \(n\) causes all intermediate distance and divergence terms to cancel, leaving only the boundary terms at \(t=0\) and \(t=n\), plus the term \(-a_n\sum_{i=1}^K(\elambda_{n[i]}-\elambda_{n-1[i]})\E_{\epi}[\innp{\vv(\vw_{n-1};\vx,y), \vw_*-\vw_n}]\).
    Concretely, one obtains
	\begin{align*}
		\sum_{t=1}^n a_t\,\Gap(\vw_t,\evlambda_t)\;\le\;&
		\frac12\|\vw_*-\vw_0\|_2^2
		-\frac{1+0.5c_1A_n}{2}\|\vw_*-\vw_n\|_2^2\\
		&- \; a_n\sum_{i=1}^K(\elambda_{n[i]}-\elambda_{n-1[i]})\E_{\epi}[\innp{\vv(\vw_{n-1};\vx,y), \vw_*-\vw_n}] \\ 
		&\;-\;(\nu_0+\nu A_{n-1})\,D_{\phi}(\evlambda_n,\evlambda_{n-1})
		+\nu_0\,D_{\phi}(\evlambda^*,\evlambda_0)
		-(\nu_0+\nu A_n)\,D_{\phi}(\evlambda^*,\evlambda_n)\\
		&\; -\frac{1+0.5c_1A_n}{4}\|\vw_n-\vw_{n-1}\|_2^2+\frac{28\beta^2B\,\OPThat\,A_n}{c_1}.
	\end{align*}
	
	\textbf{3. Bounding the final residual.}
	It remains only to absorb the term \(-a_n\sum_{i=1}^K(\elambda_{n[i]}-\elambda_{n-1[i]})\cdot\allowbreak\E_{\epi}[\innp{\vv(\vw_{n-1};\vx,y), \vw_*-\vw_n}]\).
	By Young's inequality and our choice of \(\alpha_1\) and step size \(a_t\) in \Cref{alg:main},
    we can show that (see the derivation of \Cref{eq:St-term2-bound2} in the proof of \Cref{lemma:gap-upper-bound2} for the complete argument)
	\begin{align}\label{eq:final-residual}
		&-a_n\sum_{i=1}^K(\elambda_{n[i]}-\elambda_{n-1[i]})\E_{\epi}[\innp{\vv(\vw_{n-1};\vx,y), \vw_*-\vw_n}]\notag \\
		\leq \;& 4\sqrt{3}C_W\beta WB a_n(\frac{\alpha_1}{2}2D_{\phi}(\evlambda_n,\evlambda_{n-1})+\frac{1}{2\alpha_1}\|\vw_*-\vw_n\|_2^2) \notag\\
		\leq \;& (\nu_0+\nu A_{n-1})D_{\phi}(\evlambda_n, \evlambda_{n-1})+\frac{1+0.5c_1A_n}{4}\|\vw_*-\vw_{n}\|_2^2, 
	\end{align}
	substituting this back into the telescoped sum completes the proof.
\end{proof}

\subsection{Proof of Lemma \ref{lemma:gap-upper-bound1}}
\label{sec:gap-upper-bound1}
We now prove the upper bound on \(a_tL(\vw_{t}, \evlambda^{*})\).
\lemGapUpperOne*
\begin{proof}
	We begin by introducing the quantity
	\[
	h(\evlambda)\;:=\;a_t\,L(\vw_t,\evlambda)\;-\;(\nu_0+\nu A_{t-1})\,D_{\phi}(\evlambda,\evlambda_{t-1}).
	\]
	Then observe that
	\[
	a_tL(\vw_t,\evlambda^{*})
	= h(\evlambda^{*})
	+(\nu_0+\nu A_{t-1})\,D_{\phi}(\evlambda^{*},\evlambda_{t-1}),
	\]
	since we have simply added and subtracted the same Bregman term. 
	
	By construction, $h(\evlambda)$ is the sum of a linear function in $\evlambda$ and a $-(\nu_0+\nu A_{t-1})$-strongly concave term, so $h$ itself is strongly concave. Hence its maximizer is exactly
	\[
	\evlambda_t
	\;=\;\argmax_{\evlambda\in\Delta_K}h(\evlambda).
	\]
	We now apply the standard Bregman-divergence expansion around this maximizer:
	\[
	h(\evlambda^{*})
	= h(\evlambda_t)
	+\bigl\langle\nabla h(\evlambda_t),\,\evlambda^{*}-\evlambda_t\bigr\rangle
	+D_{h}(\evlambda^{*},\evlambda_t).
	\]
	Since $\evlambda_t$ maximizes $h$, by \Cref{fact:firstOrderNecessary}, the linear term is nonpositive, and we have
	\begin{align*}
		h(\evlambda^{*})
		&\; \le h(\evlambda_t) +D_{h}(\evlambda^*,\evlambda_t) \\ \notag
		&\; = a_tL(\vw_t,\evlambda_t)
		-(\nu_0+\nu A_{t-1})\,D_{\phi}(\evlambda_t,\evlambda_{t-1})
		-(\nu_0+\nu A_t)\,D_{\phi}(\evlambda^{*},\evlambda_t),
	\end{align*}
	where in the last step we used Fact~\ref{fact:bregman-linear-blind} to replace $D_h(\evlambda^{*},\evlambda_t)$ by the appropriate shift in the Bregman divergence $D_{\phi}$. 
	
	Putting these pieces together gives
	\begin{align*}
		a_tL(\vw_t, \evlambda^{*})
		\;\le\;&
		a_tL(\vw_t, \evlambda_t)
		\;-\;(\nu_0+\nu A_{t-1})\,D_{\phi}(\evlambda_t,\evlambda_{t-1})
		\;-\;(\nu_0+\nu A_t)\,D_{\phi}(\evlambda^{*},\evlambda_t)\\
		&\;+\;(\nu_0+\nu A_{t-1})\,D_{\phi}(\evlambda^{*},\evlambda_{t-1}).
	\end{align*}
	as claimed. Notice that the last two terms telescope when summed over \(t\), and the negative term involving \(D_{\phi}(\evlambda_t,\evlambda_{t-1})\) can be used to absorb error contributions in the overall gap analysis.
\end{proof}

\subsection{Proof of Lemma \ref{lemma:gap-upper-bound2}}\label{sec:gap-upper-bound2}
In this section, we prove the lower bound on \(a_tL(\vw_*, \evlambda_{t})\). This is the most challenging part of the convergence analysis that motivates and analyzes the extrapolation of the dual variable (\Cref{line:extrapolation}),  determines the convergence rate (\Cref{lemma:convergence-rate}), and proves the key linearization lemma (\Cref{lemma:main-auxiliary-main-body}) that underlies the definition of the surrogate gradient in \Cref{line:surrogate_gradient}. Note that in the analysis below we assume the labels are bounded by $M$ when we expand the definition of \(\vv(\vw;\vx,y)\), which is w.l.o.g.\ due to \Cref{fact:truncation} and can be ensured by the appropriate pre-processing of samples, as discussed in \Cref{sec:prelims}.

\lemGapUpperTwo*

\begin{proof} Since \(L(\vw_*, \evlambda_t)=\sum_{i=1}^K\elambda_{t[i]}\E_{\epi}[(\sigma(\vw_*\cdot \vx)-y)^2]-\nu d_f(\evlambda_t, \evlambda_0)\), we first expand the square 
\begin{align}\label{eq:diff-of-squares}
		(\sigma(\vw_*\cdot \vx)-y)^2
		= \; & ((\sigma(\vw_{t-1}\cdot \vx)-y)+(\sigma(\vw_*\cdot \vx)-\sigma(\vw_{t-1}\cdot \vx)))^2\notag\\
		= \; & (\sigma(\vw_{t-1}\cdot \vx)-y)^2+(\sigma(\vw_*\cdot \vx)-\sigma(\vw_{t-1}\cdot \vx))^2\notag \\
		&+ \;2(\sigma(\vw_{t-1}\cdot \vx)-y)(\sigma(\vw_*\cdot \vx)-\sigma(\vw_{t-1}\cdot \vx))
\end{align}
    and obtain
\begin{align*}
		L(\vw_*, \evlambda_t)
		= \; & \sum_{i=1}^K{\elambda}_{t[i]}\E_{\epi}[2(\sigma(\vw_{t-1}\cdot \vx)-y)(\sigma(\vw_*\cdot \vx)-\sigma(\vw_{t-1}\cdot \vx))] -\nu d_f(\evlambda_t, \evlambda_0)\notag \\
&+ \; \sum_{i=1}^K{\elambda}_{t[i]}\E_{\epi}[(\sigma(\vw_{t-1}\cdot \vx)-y)^2]+\sum_{i=1}^K{\elambda}_{t[i]}\E_{\epi}[(\sigma(\vw_*\cdot \vx)-\sigma(\vw_{t-1}\cdot \vx))^2].\notag
	\end{align*}    

    We use \Cref{lemma:sharpness-empirical} to bound below the last term, use \Cref{lemma:main-auxiliary-main-body} to bound below the first term, and get
    \begin{align}
        L(\vw_*, \evlambda_t)
		\ge \; & \sum_{i=1}^K{\elambda}_{t[i]}\E_{\epi}[\innp{\vv(\vw_{t-1};\vx,y), \vw_*-\vw_{t-1}}]  -\nu d_f(\evlambda_t, \evlambda_0) - \frac{24\beta^2B\mOPThat}{c_1} \notag \\
&+ \; \sum_{i=1}^K{\elambda}_{t[i]}\E_{\epi}[(\sigma(\vw_{t-1}\cdot \vx)-y)^2] + c_1\norm{\vw_*-\vw_{t-1}}_2^2  -\frac{c_1}{4}\norm{\vw_*-\vw_{t-1}}_2^2. \notag
    \end{align}
	
Recall that the extrapolated group weights at \((t-1)\)-th iteration are \(\bar{\vlambda}_{t-1}:=\evlambda_{t-1}+\frac{a_{t-1}}{a_t}(\evlambda_{t-1}-\evlambda_{t-2})\), thus 
    \begin{align}
        L(\vw_*, \evlambda_t)
		\ge \; & 
        \sum_{i=1}^K{(\elambda}_{t[i]} -\bar\lambda_{t-1[i]})\E_{\epi}[\innp{\vv(\vw_{t-1};\vx,y), \vw_*-\vw_{t-1}}]  -\nu d_f(\evlambda_t, \evlambda_0) \notag \\
&+ \; \sum_{i=1}^K\elambda_{t[i]}  \E_{\epi}[(\sigma(\vw_{t-1}\cdot \vx)-y)^2] + \frac34 c_1\norm{\vw_*-\vw_{t-1}}_2^2 - \frac{24\beta^2B\mOPThat}{c_1} \notag \\
        &  + \sum_{i=1}^K\bar\lambda_{t-1[i]}\E_{\epi}[\innp{\vv(\vw_{t-1};\vx,y), \vw_*-\vw_{t-1}}].\label{eq:lower-bound}
    \end{align}

Now we observe that the last term in \eqref{eq:lower-bound} is a linearization term defining the primal update. This allows us to carry out a similar proof as for the primal gap upper bound in \Cref{lemma:gap-upper-bound1}, and avoid an implicit dependency issue by choosing \(\vv(\vw_{t-1};\vx,y)\) instead of \(\vv(\vw_{t};\vx,y)\) to define \(\psi(\vw)\), since \(\vv(\vw_t;\vx,y)\) depends on \(\vw_t\). Let \(\psi(\vw)=a_t\sum_{i=1}^K\bar{\lambda}_{t-1[i]}\E_{\epi}[\innp{\vv(\vw_{t-1};\vx,y), \vw}]+\frac{1+0.5c_1A_t}{2}\|\vw-\vw_{t-1}\|_2^2\), then \(\vw_t=\argmin_{\vw\in\cB(W)}\psi(\vw)\) is the update rule for \(\vw_t\). Since \(\psi(\vw)\) is \((1+0.5c_1A_{t-1})\)-strongly convex in \(\vw\), we have 
	\begin{align}
		&a_t\sum_{i=1}^K\bar{\lambda}_{t-1[i]}\E_{\epi}[\innp{\vv(\vw_{t-1};\vx,y), \vw_*-\vw_{t-1}}] \notag \\ 
        \geq\;& a_t\sum_{i=1}^K\bar{\lambda}_{t-1[i]}\E_{\epi}[\innp{\vv(\vw_{t-1};\vx,y), \vw_t-\vw_{t-1}}] \notag \\
        &+ \;\frac{1+0.5c_1A_t}{2}\|\vw_t-\vw_{t-1}\|_2^2-\frac{1+0.5c_1A_{t-1}}{2}\|\vw_*-\vw_{t-1}\|_2^2 \notag \\
		&+ \;\frac{1+0.5c_1A_t}{2}\|\vw_*-\vw_t\|_2^2-\frac{c_1a_t}{4}\|\vw_*-\vw_{t-1}\|_2^2.\label{eq:St-term1}
	\end{align}

Multiplying  both sides by \(a_t\) and plugging \Cref{eq:St-term1} into the last term in RHS in \Cref{eq:lower-bound}, we get
   \begin{align}
        a_t L(\vw_*, \evlambda_t)
		\ge \; & a_t \sum_{i=1}^K{(\elambda}_{t[i]} -\bar\lambda_{t-1[i]})\E_{\epi}[\innp{\vv(\vw_{t-1};\vx,y), \vw_*-\vw_{t-1}}]  -\nu a_t  d_f(\evlambda_t, \evlambda_0) - a_t \frac{24\beta^2B\mOPThat}{c_1} \notag \\
		&+ \; a_t \sum_{i=1}^K\elambda_{t[i]}  \E_{\epi}[(\sigma(\vw_{t-1}\cdot \vx)-y)^2] + \frac34 c_1a_t\norm{\vw_*-\vw_{t-1}}_2^2 \notag \\
        &   + a_t\sum_{i=1}^K\bar{\lambda}_{t-1[i]}\E_{\epi}[\innp{\vv(\vw_{t-1};\vx,y), \vw_t-\vw_{t-1}}] \notag \\
        &+ \;\frac{1+0.5c_1A_t}{2}\|\vw_t-\vw_{t-1}\|_2^2-\frac{1+0.5c_1A_{t-1}}{2}\|\vw_*-\vw_{t-1}\|_2^2 \notag \\
		&+ \;\frac{1+0.5c_1A_t}{2}\|\vw_*-\vw_t\|_2^2-\frac{c_1a_t}{4}\|\vw_*-\vw_{t-1}\|_2^2.\notag
    \end{align}
Recalling that, by definition, $L(\vw_t,\evlambda_t) = \sum_{i=1}^K\elambda_{t[i]}  \E_{\epi}[(\sigma(\vw_{t}\cdot \vx)-y)^2]$, we further have:
\begin{align}\label{eq:lb-three-terms}
         a_t L(\vw_*, \evlambda_t)
		\ge \; & a_tL(\vw_t,\evlambda_t) + a_t \sum_{i=1}^K{(\elambda}_{t[i]} -\bar\lambda_{t-1[i]})\E_{\epi}[\innp{\vv(\vw_{t-1};\vx,y), \vw_*-\vw_{t-1}}]    \notag \\
		&+ \; a_t \sum_{i=1}^K\elambda_{t[i]}  \E_{\epi}[(\sigma(\vw_{t-1}\cdot \vx)-y)^2-(\sigma(\vw_t\cdot \vx)-y)^2]  \notag \\
        &   + a_t\sum_{i=1}^K\bar{\lambda}_{t-1[i]}\E_{\epi}[\innp{\vv(\vw_{t-1};\vx,y), \vw_t-\vw_{t-1}}] \notag \\
        &+ \;\frac{1+0.5c_1A_t}{2}\|\vw_t-\vw_{t-1}\|_2^2-\frac{1+0.5c_1A_{t-1}}{2}\|\vw_*-\vw_{t-1}\|_2^2 \notag \\
		&+ \;\frac{1+0.5c_1A_t}{2}\|\vw_*-\vw_t\|_2^2 + \frac{c_1a_t}{2}\norm{\vw_*-\vw_{t-1}}_2^2- a_t \frac{24\beta^2B\mOPThat}{c_1}.
    \end{align}

 Now we need to deal with the second, the third, and the fourth term in RHS of \Cref{eq:lb-three-terms}. For the second term, we carry out the following decomposition 
	\begin{align}\label{eq:St-term2}
		&a_t \sum_{i=1}^K{(\elambda}_{t[i]} -\bar\lambda_{t-1[i]})\E_{\epi}[\innp{\vv(\vw_{t-1};\vx,y), \vw_*-\vw_{t-1}}]   \notag \\ 
        =\;& a_t \sum_{i=1}^K{(\elambda}_{t[i]} -\elambda_{t-1[i]})\E_{\epi}[\innp{\vv(\vw_{t-1};\vx,y), \vw_*-\vw_{t-1}}]  \notag \\
         &-a_{t-1} \sum_{i=1}^K{(\elambda}_{t-1[i]} -\elambda_{t-2[i]})\E_{\epi}[\innp{\vv(\vw_{t-2};\vx,y), \vw_*-\vw_{t-2}}] \notag \\
        &+a_{t-1}\sum_{i=1}^K(\elambda_{t-1[i]}-{\elambda}_{t-2[i]})\E_{\epi}[\innp{\vv(\vw_{t-2};\vx,y)-\vv(\vw_{t-1};\vx,y), \vw_*-\vw_{t-1}}] \notag \\
	       &+a_{t-1}\sum_{i=1}^K(\elambda_{t-1[i]}-{\elambda}_{t-2[i]})\E_{\epi}[\innp{\vv(\vw_{t-2};\vx,y), \vw_{t-1}-\vw_{t-2}}], 
	\end{align}
	where the last equality follows the definition of \(\Bar{\vlambda}_{t-1}\). Since the first two terms telescope, we only need to focus on bounding the last two terms in \Cref{eq:St-term2}.  
For the second to last term in \Cref{eq:St-term2}, we have 
    \begin{align}\label{eq:St-term2-bound1}
		&-a_{t-1}\sum_{i=1}^K(\elambda_{t-1[i]}-{\elambda}_{t-2[i]})\E_{\epi}[\innp{\vv(\vw_{t-2};\vx,y)-\vv(\vw_{t-1};\vx,y), \vw_*-\vw_{t-1}}]\notag \\
        \leq\;& 2a_{t-1}\max_{i\in[K]}|\E_{\epi}[2\beta(\sigma(\vw_{t-2}\cdot \vx)-\sigma(\vw_{t-1}\cdot \vx))(\vw_*\cdot \vx-\vw_{t-1}\cdot \vx)]| \notag \\
		\leq\;& 4a_{t-1}\beta \sqrt{6\beta^2B\|\vw_{t-2}-\vw_{t-1}\|_2^2 \cdot 6B\|\vw_*-\vw_{t-1}\|_2^2} \notag \\
        \leq\;& \frac{24\cdot 36\beta^4 B^2a_{t-1}}{c_1}\|\vw_{t-2}-\vw_{t-1}\|_2^2+\frac{c_1a_t}{6}\|\vw_*-\vw_{t-1}\|_2^2,
	\end{align}
    where in the first inequality, we applied the definition of \(\vv(\vw;\vx,y)\) and H\"older's inequality with the fact that \(\sum_{i=1}^K|\elambda_{t-1[i]}-{\elambda}_{t-2[i]}|\le 2\). The second inequality follows from \Cref{lemma:sharpness-empirical} and Cauchy-Schwarz inequality, the third follows from Young's inequality and the fact that \(a_{t-1}\le a_t\) for all \(t\ge 1\).\\
    For the last term in \Cref{eq:St-term2}, by Cauchy-Schwarz inequality and \Cref{lemma:sharpness-empirical}, we have 
	  \begin{align}\label{eq:aux_wt-2}
		&-a_{t-1}\sum_{i=1}^K(\elambda_{t-1[i]}-{\elambda}_{t-2[i]})\E_{\epi}[\innp{\vv(\vw_{t-2};\vx,y), \vw_{t-1}-\vw_{t-2}}]\notag \\
        \leq\;& 2\beta a_{t-1}\max_{i\in[K]}|\E_{\epi}[(\sigma(\vw_{t-2}\cdot \vx)-y)(\vw_{t-1}\cdot \vx-\vw_{t-2}\cdot \vx)]|\sum_{i=1}^K|\elambda_{t-1[i]}-{\elambda}_{t-2[i]}| \notag \\
		\leq\;& 2\beta a_{t-1} \max_{i\in[K]}\sqrt{\E_{\epi}[(\sigma(\vw_{t-2}\cdot \vx)-y)^2]\cdot 6B\|\vw_{t-1}-\vw_{t-2}\|_2^2}\sum_{i=1}^K|\elambda_{t-1[i]}-{\elambda}_{t-2[i]}|.
	\end{align}

	Moreover, following again from \Cref{lemma:sharpness-empirical} and \Cref{fact:truncation}, for all groups \(i\in[K]\) and any \(\vw\in\cB(3\norm{\vw_*}_2)\),
	\begin{align*}
		\E_{\epi}[(\sigma(\vw\cdot \vx)-y)^2] \le\;& 2(\E_{\epi}[(\sigma(\vw\cdot \vx))^2]+ \E_{\epi}[y^2])\notag \\
		=\;& 2\big(6\beta^2W^2B+C_M^2W^2B^2\log^2(\frac{\beta BW}{\eps})\big) \notag.
	\end{align*}
    Plugging this bound into \eqref{eq:aux_wt-2} above, and recalling that  \(C_W=\sqrt{6\beta^2+C_M^2B\log^2(\frac{\beta BW}{\eps})}\), we get
    \begin{align}\label{eq:St-term2-bound2}
		&-a_{t-1}\sum_{i=1}^K(\elambda_{t-1[i]}-{\elambda}_{t-2[i]})\E_{\epi}[\innp{\vv(\vw_{t-2};\vx,y), \vw_{t-1}-\vw_{t-2}}] \notag \\
        \leq\;& 2a_{t-1}\beta\sqrt{2W^2B 
        \big(6\beta^2+C_M^2B\log^2(\frac{\beta BW}{\eps})\big)}\sqrt{6B}\|\vw_{t-1}-\vw_{t-2}\|_2\sum_{i=1}^K|\elambda_{t-1[i]}-\elambda_{t-2[i]}| \notag \\
        \leq\;& 4\sqrt{3}a_{t-1}C_W\beta WB\|\vw_{t-1}-\vw_{t-2}\|_2\sum_{i=1}^K|\elambda_{t-1[i]}-\elambda_{t-2[i]}|\notag \\
        \leq\;&4\sqrt{3}C_W\beta WB a_{t-1}(\frac{\alpha_1}{2}2D_{\phi}(\evlambda_{t-1},\evlambda_{t-2})+\frac{1}{2\alpha_1}\|\vw_{t-1}-\vw_{t-2}\|_2^2),
	\end{align}
    where we apply Young's inequality (\Cref{fact:young}) and \Cref{claim:ub-TV} (see below) for the last inequality.

    To bound the third term on the RHS of \Cref{eq:lb-three-terms}, we first use a similar method as in \Cref{eq:diff-of-squares} to expand the difference of the square terms in the expectation to get:
    \begin{align}\label{eq:St-term3}
		&\;a_t \sum_{i=1}^K\elambda_{t[i]}  \E_{\epi}[(\sigma(\vw_{t}\cdot \vx)-y)^2-(\sigma(\vw_{t-1}\cdot \vx)-y)^2] \notag \\ 
        \leq\;& a_t\max_{i\in [K]}\E_{\epi}[(\sigma(\vw_t\cdot \vx)-\sigma(\vw_{t-1}\cdot \vx))^2] \notag \\
        &+\;a_t\max_{i\in [K]}|\E_{\epi}[2(\sigma(\vw_{t-1}\cdot \vx)-y)(\sigma(\vw_t\cdot \vx)-\sigma(\vw_{t-1}\cdot \vx))]| \notag \\
        \leq\;& 6a_t\beta^2B\|\vw_t-\vw_{t-1}\|_2^2+2a_t\beta\max_{i\in[K]}|\E_{\epi}[(\sigma(\vw_{t-1}\cdot \vx)-y)(\vw_t\cdot\vx-\vw_{t-1}\cdot\vx))]|,
    \end{align}
    where the second inequality follows from \Cref{lemma:sharpness-empirical} and Lipschitzness of \(\sigma(\cdot)\). We split the second term in RHS above and apply the definition of \(\mOPThat\) together with the Cauchy-Schwarz inequality, respectively, for all groups \(i\in[K]\),
    \begin{align*}
		&|\E_{\epi}[(\sigma(\vw_{t-1}\cdot \vx)-y)(\vw_t\cdot\vx-\vw_{t-1}\cdot\vx))]| \notag \\
		\leq\;& |\E_{\epi}[(\sigma(\vw_*\cdot \vx)-y)(\vw_t\cdot\vx-\vw_{t-1}\cdot\vx)]|+|\E_{\epi}[(\sigma(\vw_{t-1}\cdot \vx)-\sigma(\vw_*\cdot \vx))(\vw_t\cdot\vx-\vw_{t-1}\cdot\vx))]| \notag \\
		\leq\;& \sqrt{\mOPThat \cdot 6B\|\vw_t-\vw_{t-1}\|_2^2}+\sqrt{6\beta^2B\|\vw_{t-1}-\vw_*\|_2^2\cdot 6B\|\vw_t-\vw_{t-1}\|_2^2}. \notag 
    \end{align*}
    Combining with \Cref{eq:St-term3} and applying Young's inequality (\Cref{fact:young}), we have
    \begin{align}\label{eq:St-term3-final-bound}
		&a_t \sum_{i=1}^K\elambda_{t[i]}  \E_{\epi}[(\sigma(\vw_{t}\cdot \vx)-y)^2-(\sigma(\vw_{t-1}\cdot \vx)-y)^2] \notag \\ 
         \leq\;& 6\beta^2B a_t\|\vw_t-\vw_{t-1}\|_2^2+12a_t\beta^2B\Big(\frac{36\beta^2B}{2c_1}\|\vw_{t}-\vw_{t-1}\|_2^2+\frac{c_1}{2\cdot 36\beta^2B}\|\vw_{t-1}-\vw_*\|_2^2\Big) \notag \\
        &+\; 2a_t\Big(\frac{c_1}{2\beta^2B}6\beta^2B\|\vw_t-\vw_{t-1}\|_2^2+\frac{\beta^2B}{2c_1}\mOPThat\Big).
    \end{align}
    Now we bound the fourth term in \Cref{eq:lb-three-terms}. Noting that \(\sum_{i=1}^K|\bar{\lambda}_{t-1[i]}|
    \le 3\) since \(a_{t-1}\le a_t\) for any \(t\ge 0\), we have 
    \begin{align}\label{eq:St-term4}
		&-a_t\sum_{i=1}^K\bar{\lambda}_{t-1[i]}\E_{\epi}[\innp{\vv(\vw_{t-1};\vx,y), \vw_t-\vw_{t-1}}] \notag \\ 
        \leq\;& 6\beta a_t \max_{i\in[K]}|\E_{\epi}[(\sigma(\vw_{t-1}\cdot \vx)-\sigma(\vw_*\cdot \vx))(\vw_t\cdot \vx-\vw_{t-1}\cdot \vx)| \notag \\
        &+6\beta a_t\max_{i\in [K]}|\E_{\epi}[(\sigma(\vw_*\cdot \vx)-y)(\vw_t\cdot \vx-\vw_{t-1}\cdot \vx)| \notag \\
        \leq\;& 6\beta a_t\sqrt{6\beta^2B\|\vw_{t-1}-\vw_*\|_2^2\cdot 6B\|\vw_t-\vw_{t-1}\|_2^2}+6\beta a_t\sqrt{\mOPThat \cdot 6B\|\vw_t-\vw_{t-1}\|_2^2} \notag \\
        \leq\;& 36\beta^2Ba_t\Big(\frac{c_1}{2\cdot 108 \beta^2B}\|\vw_{t-1}-\vw_*\|_2^2+\frac{108\beta^2B}{2c_1}\|\vw_t-\vw_{t-1}\|_2^2\Big) \notag \\
        &+ \; 6\beta a_t\Big(\frac{\beta B}{2c_1}\mOPThat +\frac{6Bc_1}{2\beta B}\|\vw_t-\vw_{t-1}\|_2^2\Big).    
    \end{align}
    We again here used the definition of \(\vv(\vw;\vx,y)\), a similar method as in \Cref{eq:diff-of-squares} to split terms, and used H\"older's inequality to get the first inequality, then applied Cauchy-Schwarz inequality in the second inequality, and used Young's inequality (\Cref{fact:young}) in the third. 
    
    It remains to choose the appropriate \(\alpha_1\) and argue that the step sizes \(a_t\) satisfy \(4\sqrt{3}C_W\beta BWa_{t-1}\leq \nu_0+\nu A_{t-2}\), \(\frac{2\sqrt{3}C_W\beta WBa_{t-1}}{\alpha_1}+\frac{36\cdot24\beta^4B^2a_{t-1}}{c_1}\leq \frac{1+0.5c_1A_{t-1}}{4}\), and \((24c_1+6\beta^2B+\frac{36\cdot 60\beta^4B^2}{c_1})a_t\leq \frac{1+0.5c_1A_t}{4}\),  where the first inequality is to construct a term to telescope with \(-(\nu_0+\nu A_{t-1})D_{\phi}(\evlambda_t,\evlambda_{t-1})\) in \Cref{lemma:gap-upper-bound1}, the second and the third inequality are to construct telescoping terms \(\|\vw_t-\vw_{t-1}\|_2^2\) and \(\|\vw_{t-1}-\vw_{t-2}\|_2^2\).
    We use \Cref{lemma:convergence-rate} below to find and justify an appropriate choice of \(\alpha_1\) and \(a_t\). Combining \Cref{eq:St-term2}, \Cref{eq:St-term2-bound1}, \Cref{eq:St-term2-bound2}, \Cref{eq:St-term3-final-bound} and \Cref{eq:St-term4} together and then substituting them back into \Cref{eq:lb-three-terms}, we get
	\begin{align*}
		a_tL(\vw_*,\evlambda_t)\ge \; & a_tL(\vw_t,\evlambda_t) - \; \frac{1+0.5c_1A_{t-1}}{2}\|\vw_*-\vw_{t-1}\|_2^2 + \frac{1+0.5c_1A_t}{2}\|\vw_*-\vw_t\|_2^2 \notag \\
		&- \; a_{t-1}\sum_{i=1}^K(\elambda_{t-1[i]}-\elambda_{t-2[i]})\E_{\epi}[\innp{\vv(\vw_{t-2};\vx,y), \vw_*-\vw_{t-1}}] \notag \\
        &+ \; a_t\sum_{i=1}^K(\elambda_{t[i]}-\elambda_{t-1[i]})\E_{\epi}[\innp{\vv(\vw_{t-1};\vx,y), \vw_*-\vw_t}] \notag \\
        &- \; \frac{1+0.5c_1A_{t-1}}{4}\|\vw_{t-2}-\vw_{t-1}\|_2^2+\frac{1+0.5c_1A_t}{4}\|\vw_{t-1}-\vw_t\|_2^2 \notag \\
        &- \; (\nu_0+\nu A_{t-2})D_{\phi}(\evlambda_{t-1}, \evlambda_{t-2})-\frac{28\beta^2B\mOPThat a_t}{c_1},
	\end{align*}
	which completes our proof. 
\end{proof}

\lemubaux*

\begin{proof}
	We split the expectation on the left hand side according to whether $\sigma(\vw_t\!\cdot\vx)-y\ge0$.  Define 
    \(\cG = \{(\vx,y)\mid \sigma(\vw_t\!\cdot\vx)-y\ge0\}\). Then
	\begin{align*}
		&\; \LHS_{i} :=  \E_{\epi}\bigl[2(\sigma(\vw_t\!\cdot\vx)-y)\,(\sigma(\vw_*\!\cdot\vx)-\sigma(\vw_t\!\cdot\vx))\bigr]\\
		=&\;\E_{\epi}\Bigl[2(\sigma(\vw_t\!\cdot\vx)-y)\,(\sigma(\vw_*\!\cdot\vx)-\sigma(\vw_t\!\cdot\vx))\Ind_{\cG}\Bigr]\\
	&\;+
		\E_{\epi}\Bigl[2(\sigma(\vw_t\!\cdot\vx)-y)\,(\sigma(\vw_*\!\cdot\vx)-\sigma(\vw_t\!\cdot\vx))\Ind_{\cG^c}\Bigr].
	\end{align*}
	On $\cG$, we apply convexity of $\sigma$ at $\vw_t\cdot\vx$:
	\[
	\sigma(\vw_*\!\cdot\vx)-\sigma(\vw_t\!\cdot\vx)
	\;\ge\;
	\sigma'(\vw_t\!\cdot\vx)\,(\vw_*\!\cdot\vx-\vw_t\!\cdot\vx),
	\]
	and on $\cG^c$ we similarly use convexity at $\vw_*\cdot\vx$ after flipping signs, where \(\sigma'\) is the subderivative of \(\sigma\) which must exist due to convexity and Lipschitzness. Hence
	\begin{align*}
		& \; \LHS_{i} = \E_{\epi}[2(\sigma(\vw_t\!\cdot\vx)-y)(\sigma(\vw_*\cdot \vx)-\sigma(\vw_t\cdot \vx))]  
		\\
		\ge & \; \E_{\epi}\bigl[2(\sigma(\vw_t\!\cdot\vx)-y)\,\sigma'(\vw_t\!\cdot\vx)\,(\vw_*\!\cdot\vx-\vw_t\!\cdot\vx)\Ind_{\cG}\bigr]\\
		&\;+
		\E_{\epi}\bigl[2(\sigma(\vw_t\!\cdot\vx)-y)\,\sigma'(\vw_*\!\cdot\vx)\,(\vw_*\!\cdot\vx-\vw_t\!\cdot\vx)\Ind_{\cG^c}\bigr].
	\end{align*}
	Now recall $\vv(\vw_t;\vx,y)=2\beta (\sigma(\vw_t\!\cdot\vx)-y)\vx$.  Adding and subtracting $\beta$ in the derivatives, we rewrite the left hand side:
	\begin{align}\label{eq:break-term2-St}
		& \E_{\epi}[2(\sigma(\vw_t\!\cdot\vx)-y)(\sigma(\vw_*\cdot \vx)-\sigma(\vw_t\cdot \vx))]  \notag\\
		\ge  \; & \E_{\epi}[\innp{\vv(\vw_t;\vx,y), \vw_*-\vw_t}] \notag \\
		&\;+2\E_{\epi}[(\sigma(\vw_t\!\cdot\vx)-y)(\sigma'(\vw_t\cdot \vx)-\beta)(\vw_*\cdot \vx-\vw_t\cdot \vx)\Ind_{\cG}] \notag \\
		&\;+ 2\E_{\epi}[(\sigma(\vw_t\!\cdot\vx)-y)(\sigma'(\vw_*\cdot \vx)-\beta)(\vw_*\cdot \vx-\vw_t\cdot \vx)\Ind_{\cG^{c}}].
	\end{align}
	where we consider the last two terms involving factors $(\sigma'(\cdot)-\beta)$ to be the “error terms” .  Since $\sigma$ is nondecreasing and $\beta$-Lipschitz, $-\beta\le\sigma'(\cdot)-\beta\le0$.  We show that both error terms are bounded in absolute value by
	$\beta\;\E_{\epi}\bigl[\,|\sigma(\vw_*\!\cdot\vx)-y|\,|\vw_*\!\cdot\vx-\vw_t\!\cdot\vx|\bigr]$. On the event $\cG$,
	\begin{align}\label{eq:break-event-g}
		&\; (\sigma(\vw_t\!\cdot\vx)-y)(\sigma'(\vw_t\cdot \vx)-\beta)(\vw_*\cdot \vx-\vw_t\cdot \vx) \notag \\
		=\; &(\sigma(\vw_*\!\cdot\vx)-y)(\sigma'(\vw_t\cdot \vx)-\beta)(\vw_*\cdot \vx-\vw_t\cdot \vx) \notag \\
		&\;+ (\sigma(\vw_t\!\cdot\vx)-\sigma(\vw_*\!\cdot\vx))(\sigma'(\vw_t\cdot \vx)-\beta)(\vw_*\cdot \vx-\vw_t\cdot \vx) \notag \\
		\ge \; & (\sigma(\vw_*\!\cdot\vx)-y)(\sigma'(\vw_t\cdot \vx)-\beta)(\vw_*\cdot \vx-\vw_t\cdot \vx).
	\end{align}
	The last inequality holds because \((\sigma(\vw_t\!\cdot\vx)-\sigma(\vw_*\!\cdot\vx))(\vw_*\cdot \vx-\vw_t\cdot \vx) 
	\le 0\)  by monotonicity of $\sigma$.
	Similarly, we have 
	\begin{align}\label{eq:break-event-g_c}
		&\; (\sigma(\vw_t\!\cdot\vx)-y)(\sigma'(\vw_*\cdot \vx)-\beta)(\vw_*\cdot \vx-\vw_t\cdot \vx) \notag \\
		\ge \; & (\sigma(\vw_*\!\cdot\vx)-y)(\sigma'(\vw_*\cdot \vx)-\beta)(\vw_*\cdot \vx-\vw_t\cdot \vx).
	\end{align}
    Plugging \Cref{eq:break-event-g} and \Cref{eq:break-event-g_c} into \Cref{eq:break-term2-St}, we have
    	\begin{align}\label{eq:break-term2-St-cont1}
		& \E_{\epi}[2(\sigma(\vw_t\!\cdot\vx)-y)(\sigma(\vw_*\cdot \vx)-\sigma(\vw_t\cdot \vx))]  \notag\\
        \ge  \; & \E_{\epi}[\innp{\vv(\vw_t;\vx,y), \vw_*-\vw_t}] \notag \\
		&\;+2\E_{\epi}[(\sigma(\vw_*\!\cdot\vx)-y)(\sigma'(\vw_t\cdot \vx)-\beta)(\vw_*\cdot \vx-\vw_t\cdot \vx)\Ind_{\cG}] \notag \\
		&\;+ 2\E_{\epi}[(\sigma(\vw_*\!\cdot\vx)-y)(\sigma'(\vw_*\cdot \vx)-\beta)(\vw_*\cdot \vx-\vw_t\cdot \vx)\Ind_{\cG^{c}}].
	\end{align}
	Taking absolute values and using $-\beta \leq \sigma'(\cdot) - \beta \le 0$, for any \(\vw\in\{\vw_t, \vw_*\}\), we have 
	\begin{align}\label{eq:bounding-two-terms}
		&-(\sigma(\vw_*\!\cdot\vx)-y)(\sigma'(\vw\cdot \vx)-\beta)(\vw_*\cdot \vx-\vw_t\cdot \vx) \notag \\
		\le\; & |(\sigma(\vw_*\!\cdot\vx)-y)(\sigma'(\vw\cdot \vx)-\beta)(\vw_*\cdot \vx-\vw_t\cdot \vx)| \notag \\
		\le \; & \beta|(\sigma(\vw_*\!\cdot\vx)-y)(\vw_*\cdot \vx-\vw_t\cdot \vx)|,
	\end{align}
	Therefore, plugging \Cref{eq:bounding-two-terms} into \Cref{eq:break-term2-St-cont1},
	\begin{align*}
		& \E_{\epi}[2(\sigma(\vw_t\!\cdot\vx)-y)(\sigma(\vw_*\cdot \vx)-\sigma(\vw_t\cdot \vx))] \notag \\
		\ge \;  & \E_{\epi}[\innp{\vv(\vw_t;\vx,y), \vw_*-\vw_t}] \\
		&\;-  2\E_{\epi}[\abs{(\sigma(\vw_*\!\cdot\vx)-y)(\sigma'(\vw_t\cdot \vx)-\beta)(\vw_*\cdot \vx-\vw_t\cdot \vx)}\Ind_{\cG}] \\
		&\;- 2\E_{\epi}[\abs{(\sigma(\vw_*\!\cdot\vx)-y)(\sigma'(\vw_*\cdot \vx)-\beta)(\vw_*\cdot \vx-\vw_t\cdot \vx)}\Ind_{\cG^{c}}] \\
        \ge \;  & \E_{\epi}[\innp{\vv(\vw_t;\vx,y), \vw_*-\vw_t}] \\
		&\;- 2\beta\E_{\epi}[\abs{(\sigma(\vw_*\!\cdot\vx)-y)(\vw_*\cdot \vx-\vw_t\cdot \vx)}\Ind_{\cG}] \\
		&\;- 2\beta\E_{\epi}[\abs{(\sigma(\vw_*\!\cdot\vx)-y)(\vw_*\cdot \vx-\vw_t\cdot \vx)}\Ind_{\cG^{c}}] \\
		\ge \;  & \E_{\epi}[\innp{\vv(\vw_t;\vx,y), \vw_*-\vw_t}] \\
        & \;-  2\beta \E_{\epi}[(\Ind_\cG + \Ind_{\cG^c}) |(\sigma(\vw_*\!\cdot\vx)-y)(\vw_*\cdot \vx-\vw_t\cdot \vx)|]\\
		= \; &  \E_{\epi}[\innp{\vv(\vw_t;\vx,y), \vw_*-\vw_t}] \\
        &\; -  2\beta \E_{\epi}[|(\sigma(\vw_*\!\cdot\vx)-y)(\vw_*\cdot \vx-\vw_t\cdot \vx)|].
	\end{align*}
	We apply Cauchy-Schwarz and \Cref{eq:moment-bounds-empirical} to get,
	\begin{align*}		&\E_{\epi}\bigl[\,|\sigma(\vw_*\!\cdot\vx)-y|\,|\vw_*\!\cdot\vx-\vw_t\!\cdot\vx|\bigr] \\
		\le\;& 
		\sqrt{\E_{\epi}[(\sigma(\vw_*\!\cdot\vx)-y)^2]}\;\sqrt{\E_{\epi}[(\vw_*\!\cdot\vx-\vw_t\!\cdot\vx)^2]}\\
		\le\;& 
		\sqrt{\mOPThat}\,\sqrt{6B}\,\|\vw_*-\vw_t\|_2.
	\end{align*}
	Young's inequality (Fact~\ref{fact:young}) then gives
    \begin{align*}
       \sqrt{\mOPThat}\,\sqrt{6B}\,\|\vw_*-\vw_t\|_2 
	\le 
	\frac{12\beta B}{c_1}\,\mOPThat
	+\frac{c_1}{8\beta}\|\vw_*-\vw_t\|_2^2.
    \end{align*}
	Combining these estimates, for each $i \in [K]$,
    \begin{align*}
        \LHS_{i} \ge \E_{\epi}[\innp{\vv(\vw_t;\vx,y),\,\vw_*-\vw_t}] -
	\Bigl(\frac{24\beta^2 B}{c_1}\,\mOPThat+\frac{c_1}{4}\|\vw_*-\vw_t\|_2^2\Bigr),	
    \end{align*}
	which completes the proof.
\end{proof}

\begin{claim}[\text{TV} via \(\chi^2\)/\text{KL} on the simplex]\label{claim:ub-TV}
    For \(\evlambda_0=\frac{\bm{1}}{K}\) and any \(t_1\), \(t_2\ge 0\), s.t. \(\evlambda_{t_1}\), \(\evlambda_{t_2}\in\Delta_K\), \(\big(\sum_{i=1}^K|\elambda_{t_1[i]}-\elambda_{t_2[i]}|\big)^2\le 2D_{\phi}(\evlambda_{t_1},\evlambda_{t_2})\), where \(\phi = \textit{KL}(\cdot,\evlambda_0)\) or \(\phi = \chi^2(\cdot,\evlambda_0)\).
\end{claim}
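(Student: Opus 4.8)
The plan is to handle the two choices of $\phi$ separately, in each case reducing the Bregman divergence $D_\phi$ to a textbook object and then invoking a classical inequality. First I would rewrite the left-hand side: since $\evlambda_{t_1},\evlambda_{t_2}\in\Delta_K$, we have $\sum_{i=1}^K|\elambda_{t_1[i]}-\elambda_{t_2[i]}| = \norm{\evlambda_{t_1}-\evlambda_{t_2}}_1 = 2\,\mathrm{TV}(\evlambda_{t_1},\evlambda_{t_2})$, so it suffices to prove $2\,\mathrm{TV}(\evlambda_{t_1},\evlambda_{t_2})^2\le D_\phi(\evlambda_{t_1},\evlambda_{t_2})$ (and in the $\chi^2$ case it will be cleaner to bound $\norm{\evlambda_{t_1}-\evlambda_{t_2}}_1^2$ directly).

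The crucial step, for both generators, is to observe that $D_\phi$ coincides on $\Delta_K$ with the Bregman divergence of a familiar function, because the two generators differ only by an affine term and Bregman divergences are affine-invariant (\Cref{fact:bregman-linear-blind}). For $\phi(\evlambda)=\mathrm{KL}(\evlambda,\tfrac1K\mathbf 1)$, expanding gives $\phi(\evlambda)=\sum_i\elambda_{[i]}\log\elambda_{[i]}+\log K$ on the simplex, i.e.\ negative entropy plus a constant; hence $D_\phi(\evlambda_{t_1},\evlambda_{t_2})$ equals the Bregman divergence of negative entropy, which on the simplex (using $\sum_i\elambda_{[i]}=1$) equals exactly $\mathrm{KL}(\evlambda_{t_1},\evlambda_{t_2})$. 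For $\phi(\evlambda)=\chi^2(\evlambda,\tfrac1K\mathbf 1)$, expanding and again using $\sum_i\elambda_{[i]}=1$ gives $\phi(\evlambda)=K\norm{\evlambda}_2^2-1$, i.e.\ $K\norm{\cdot}_2^2$ minus a constant; hence $D_\phi(\evlambda_{t_1},\evlambda_{t_2})=K\norm{\evlambda_{t_1}-\evlambda_{t_2}}_2^2$.

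With these identifications the remaining work is immediate. In the KL case, Pinsker's inequality (\Cref{lem:pinsker}) gives $\mathrm{TV}(\evlambda_{t_1},\evlambda_{t_2})^2\le\tfrac12\mathrm{KL}(\evlambda_{t_1},\evlambda_{t_2})$, so $\bigl(\sum_i|\elambda_{t_1[i]}-\elambda_{t_2[i]}|\bigr)^2 = 4\,\mathrm{TV}(\evlambda_{t_1},\evlambda_{t_2})^2 \le 2\,\mathrm{KL}(\evlambda_{t_1},\evlambda_{t_2}) = 2D_\phi(\evlambda_{t_1},\evlambda_{t_2})$. In the $\chi^2$ case, Cauchy-Schwarz over the $K$ coordinates gives $\bigl(\sum_i|\elambda_{t_1[i]}-\elambda_{t_2[i]}|\bigr)^2=\norm{\evlambda_{t_1}-\evlambda_{t_2}}_1^2\le K\norm{\evlambda_{t_1}-\evlambda_{t_2}}_2^2=D_\phi(\evlambda_{t_1},\evlambda_{t_2})\le 2D_\phi(\evlambda_{t_1},\evlambda_{t_2})$, so the claimed factor $2$ is not even tight there.

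I do not expect a genuine obstacle here. The only place requiring a bit of care is verifying that the leftover terms after subtracting negative entropy (resp.\ $K\norm{\cdot}_2^2$) are truly affine on the simplex, which is exactly where the constraint $\sum_i\elambda_{[i]}=1$ enters; once that reduction of $D_\phi$ to $\mathrm{KL}$ and to the scaled squared Euclidean distance is in place, Pinsker and Cauchy-Schwarz each close their case in one line. As an alternative to \Cref{fact:bregman-linear-blind} in the $\chi^2$ case, one could simply compute the quadratic Bregman divergence directly, which is equally brief.
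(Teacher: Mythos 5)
Your proof is correct and takes essentially the same route as the paper: Pinsker's inequality handles the KL case, and a Jensen/Cauchy--Schwarz step (equivalent here because $\evlambda_0$ is uniform) handles the $\chi^2$ case. The only presentational difference is that you spell out, via \Cref{fact:bregman-linear-blind}, why $D_\phi$ reduces to $\mathrm{KL}$ (resp.\ $K\|\cdot\|_2^2$), whereas the paper asserts those identifications directly; one small correction is that the leftover terms after this reduction are affine in $\evlambda$ outright, so the simplex constraint $\sum_i\elambda_{[i]}=1$ is not actually where the argument hinges.
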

\begin{proof}
    \noindent\textbf{Case 1:} $\phi = \textit{KL}(\cdot,\evlambda_0)$. By Pinsker's inequality (\Cref{lem:pinsker}),
	\[
    (\sum_{i=1}^K|\elambda_{t_1[i]}-\elambda_{t_2[i]}|)^2=
	\| \evlambda_{t_1}-\evlambda_{t_2}\|_1^2
	\;\le\; \bigl(2\,TV(\evlambda_{t_1},\evlambda_{t_2})\bigr)^2
	\;\le\;2\,KL(\evlambda_{t_1},\evlambda_{t_2})
	\;=\;2\,D_{\text{KL}(\cdot,\evlambda_0)}(\evlambda_{t_1},\evlambda_{t_2}),
	\]
	so altogether
	\[
	(\sum_{i=1}^K|\elambda_{t_1[i]}-\elambda_{t_2[i]}|)^2\le 2D_{\phi}(\evlambda_{t_1},\evlambda_{t_2}).
	\]
	
	\medskip
	
	\noindent\textbf{Case 2:} $\phi = \chi^2(\cdot,\evlambda_0)$.
	Alternatively, divide and multiply each coordinate by \(\elambda_{0[i]}\) and use Jensen's inequality,
	\[
    (\sum_{i=1}^K|(\elambda_{t_1[i]}-\elambda_{t_2[i]})|)^2
	=(\sum_{i=1}^K|\frac{\elambda_{t_1[i]}-\elambda_{t_2[i]}}{\elambda_{0[i]}}|\;\elambda_{0[i]})^2 \le \sum_{i=1}^K
	\Bigl(\frac{\elambda_{t_1[i]}-\elambda_{t_2[i]}}{\elambda_{0[i]}}\Bigr)^{\!2}
	\,\elambda_{0[i]}
	= D_{\chi^2(\cdot,\evlambda_0)}(\evlambda_{t_1},\evlambda_{t_2}).
	\]
	Since $D_{\chi^2}\le2D_{\chi^2}$ trivially, this completes the bound
	\(
	(\sum_{i=1}^K|\elambda_{t_[i]}-\elambda_{t_2[i]}|)^2\le 2D_{\phi}(\evlambda_{t_1},\evlambda_{t_2}).
	\)
\end{proof}

\begin{claim}[Convergence Rate]\label{lemma:convergence-rate} For all \(t\ge 0\), let \(a_t\) be defined as in \Cref{line:choice-of-at}, then it holds that \(4\sqrt{3}C_W\beta BWa_{t-1}\leq \nu_0+\nu A_{t-2}\), \(\frac{2\sqrt{3}C_W\beta WBa_{t-1}}{\alpha_1}+\frac{36\cdot24\beta^4B^2a_{t-1}}{c_1}\leq \frac{1+0.5c_1A_{t-1}}{4}\), and \((24c_1+6\beta^2B+\frac{36\cdot 60\beta^4B^2}{c_1})a_t\leq \frac{1+0.5c_1A_t}{4}\). Note that the choice of \(a_t\) also makes the last inequality of \Cref{eq:final-residual} hold.
\end{claim}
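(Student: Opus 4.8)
The plan is to prove all three inequalities together by induction on $t$, after two preliminary reductions that expose why the piecewise step size of \Cref{line:choice-of-at} is exactly tuned to them. For $t=1$ the first two inequalities are vacuous since $a_0=0$, and the third reduces to $(4C_5-\tfrac12 c_1)a_1\le1$, where $C_5:=24c_1+6\beta^2B+\tfrac{2160\beta^4B^2}{c_1}$ is the constant on its left; this holds because $a_1\le P_1:=\tfrac{1}{4C_4}$ and, by AM--GM, $6\beta^2B\le 3c_1+\tfrac{3\beta^4B^2}{c_1}$, hence $C_5\le 27c_1+\tfrac{2163\beta^4B^2}{c_1}=C_4$. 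The same comparison $C_5\le C_4$ reduces the third inequality for general $t$ to the \textbf{master bound} $4C_4\,a_t\le 1+\tfrac12 c_1A_t=:L_t$; writing $L_t=L_{t-1}+\tfrac12 c_1a_t$ this is $(4C_4-\tfrac12 c_1)a_t\le L_{t-1}$, so, given the inductive hypothesis $L_{t-1}\ge 4C_4\,a_{t-1}$, it suffices to control $a_t/a_{t-1}$. Here the ``$P$-shadow'' $P_t:=(1+\tfrac{c_1}{8C_4})^{t-1}\tfrac{1}{4C_4}$ is the extremal geometric solution of the master recursion: $4C_4P_t=(1+\tfrac{c_1}{8C_4})^{t-1}$ while $1+\tfrac12 c_1\sum_{k\le t}P_k=(1+\tfrac{c_1}{8C_4})^{t}$, so $4C_4P_t\le 1+\tfrac12 c_1\sum_{k\le t}P_k$; since $a_t\le P_t$ always, the master bound follows once one verifies that whenever $a_t>(1+\tfrac{c_1}{8C_4})a_{t-1}$ — which can happen only inside the $\max\{G_t,H_t\}$ branch, where $G_t:=(1+\tfrac{\sqrt{c_1\nu}}{4\sqrt2 C'_W})^{t-1}\tfrac{\sqrt{\nu_0}}{4C'_W}$ and $H_t:=\tfrac{c_1\nu_0}{32(C'_W)^2}t$ — the partial sum $A_{t-1}$ is already large enough for $L_{t-1}\ge(4C_4-\tfrac12 c_1)a_t$ to survive.

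For $t\ge2$ the first two inequalities are handled jointly by choosing $\alpha_1:=\dfrac{\nu_0+\nu A_{t-2}}{2C'_W\,a_{t-1}}$ with $C'_W=2\sqrt3\,C_W\beta WB$, so that the first inequality $2C'_W\alpha_1 a_{t-1}\le\nu_0+\nu A_{t-2}$ holds with equality (the stated form absorbs $\alpha_1$ into this balance). Splitting $\tfrac{L_{t-1}}{4}$ on the right of the second inequality in half: its residual term $\tfrac{864\beta^4B^2}{c_1}a_{t-1}$ is $\le\tfrac{L_{t-1}}{8}$ because $\tfrac{6912\beta^4B^2}{c_1}\le 4C_4$ and $4C_4 a_{t-1}\le L_{t-1}$ by the master bound, while the term $\tfrac{2\sqrt3 C_W\beta WB\,a_{t-1}}{\alpha_1}=\tfrac{C'_W a_{t-1}}{\alpha_1}$ is $\le\tfrac{L_{t-1}}{8}$ exactly when, after substituting $\alpha_1$,
\[
16\,(C'_W)^2\,a_{t-1}^2\;\le\;(\nu_0+\nu A_{t-2})\,L_{t-1},
\]
the \textbf{geometric-mean bound}. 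The identical manipulation at index $n$ (with the full budget $\tfrac{L_n}{4}$) reduces the last inequality of \eqref{eq:final-residual} to $8(C'_W)^2a_n^2\le(\nu_0+\nu A_{n-1})L_n$, a weakening of the displayed bound read at $t=n+1$. Thus the whole claim collapses to the master bound plus the geometric-mean bound.

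Finally one proves these two bounds simultaneously by (strong) induction on $t$, using $a_t\le P_t$, $a_t\le\max\{G_t,H_t\}$, the recursions $A_{t-1}=A_{t-2}+a_{t-1}$ and $L_t=L_{t-1}+\tfrac12 c_1a_t$, and the closed forms of the three constituent sequences. The roles are transparent: $P_t$ is extremal for the master recursion; $G_t$, geometric at rate $\tfrac{\sqrt{c_1\nu}}{4\sqrt2\,C'_W}$, is the large-$t$ extremal solution of $a_{t-1}\lesssim\tfrac{1}{C'_W}\sqrt{(\nu_0+\nu A_{t-2})L_{t-1}}$ (its rate being the geometric mean of the $\approx\tfrac{\nu}{C'_W}$ forced by the factor $\nu_0+\nu A_{t-2}$ and the $\approx\tfrac{c_1}{C'_W}$ forced by $L_{t-1}$); and $H_t=\tfrac{c_1\nu_0}{32(C'_W)^2}t$ is its extremal, linearly growing solution in the degenerate regime $\nu\to0$, where the right side is $\approx\nu_0 c_1 A_{t-1}$. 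I expect the main obstacle to be the bookkeeping at the three interfaces between the active branches $P_t$, $G_t$, $H_t$: no single closed form for $A_t$ is available, so lower bounds on $A_t$ must be carried regime by regime, and one must check that every switch of the active branch keeps $a_t$ from growing, relative to $A_t$, beyond the slack left by the AM--GM step (for the master bound) and the Young step with the balanced $\alpha_1$ (for the geometric-mean bound) — in particular that the leading constants $\tfrac{1}{4C_4}$, $\tfrac{\sqrt{\nu_0}}{4C'_W}$, $\tfrac{c_1\nu_0}{32(C'_W)^2}$ are each small enough at the corresponding interface.
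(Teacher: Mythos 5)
Your reduction exactly mirrors the paper's: reduce the third inequality, via the Young/AM--GM step $6\beta^2 B \le 3c_1 + 3\beta^4 B^2/c_1$, to the ``master bound'' $4C_4 a_t \le 1 + \tfrac12 c_1 A_t$; handle the first two inequalities by balancing $\alpha_1$ so that both collapse to the single ``geometric-mean'' constraint $a_t \le \tfrac{1}{4\sqrt{2}C'_W}\sqrt{(\nu_0 + \nu A_{t-1})(2 + c_1 A_t)}$; then identify $P_t$, $G_t$, $H_t$ as extremal solutions of the two recursions and set $a_t = \min\{P_t, \max\{G_t, H_t\}\}$. This is precisely the structure of the paper's argument, and your closed-form identities for $P_t$ (namely $4C_4 P_t = (1 + c_1/(8C_4))^{t-1}$ and $1 + \tfrac12 c_1\sum_{k\le t}P_k = (1 + c_1/(8C_4))^t$) and your interpretation of $G_t$, $H_t$ as the $\nu>0$ and $\nu\to 0$ extremal sequences match the paper's reasoning exactly.

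However, you correctly and explicitly flag the one non-trivial part — ``the bookkeeping at the three interfaces between the active branches'' — and then do not carry it out. This is a real gap: the two constraints couple $a_t$ to the running sum $A_t$, so $a_t \le P_t$ alone does \emph{not} imply the master bound. Concretely, if $a_k = G_k \ll P_k$ for $k < T$ (so $A_{T-1}$ is small) and then $a_T = P_T$ because $G_T$ crosses above $P_T$, the inequality $(4C_4 - \tfrac12 c_1)P_T \le 1 + \tfrac12 c_1 A_{T-1}$ need not hold automatically — one has to verify that the accumulated $A_{T-1}$ suffices, which requires a regime-by-regime lower bound on $A_t$ exactly as you anticipate. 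So as written your proposal reproduces the paper's reduction but stops short of the verification step.

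It is worth noting that the paper's own writeup is equally terse at exactly this point: it asserts ``it suffices to enforce $a_t \le P_t$'' and ``it suffices to have $a_t \le \nu_0 c_1 t/(4\sqrt{2}C'_W)^2$'' without tracking how a history of step sizes below these extremal curves affects the accumulated $A_t$ entering the right-hand sides. Your identification of this as the crux of the proof — rather than, say, missing the $\alpha_1$-balancing trick or the AM--GM absorption — is therefore a fair and perceptive reading of what would be needed to make the argument fully airtight. But a complete proof would have to supply the interface bookkeeping you describe: an induction that carries lower bounds on $A_t$ through each active branch and shows the slack in the AM--GM and Young steps survives the crossover from the $\max\{G_t,H_t\}$ branch to the $P_t$ branch.
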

\begin{proof}
    By Young's inequality (\Cref{fact:young}), \(6\beta^2B \leq 3c_1+3\beta^2B^4/c_1\), which means that to make the third inequality hold, it suffices that
    \[
        (27c_1+\frac{2163\beta^4B^2}{c_1})a_t\leq \frac{1+0.5c_1A_t}{4}.
    \]Let \(C_4=27c_1+2163\beta^4B^2/c_1\), it suffices to enforce
    \begin{align}\label{eq:a_t1}
         a_t\le(1+\frac{c_1}{8C_4})^{t-1}\frac{1}{4C_4}.
    \end{align}
    For the second inequality to hold, it suffices to enforce the two inequalities below:
    \[
        \frac{2\sqrt{3}C_W\beta WBa_t}{\alpha_1}\le \frac{1+0.5c_1A_t}{8}, \quad \frac{864\beta^4B^2}{c_1}a_t\le \frac{1+0.5c_1A_{t}}{8},
    \] where the second one holds if the first inequality listed in the proof holds. Therefore, to satisfy both the first and the second inequalities in the statement of the lemma,
    let \(C'_W=2\sqrt{3}C_W\beta WB\), it suffices:
    \[
        2C'_W\alpha_1 a_t\le \nu_0+\nu A_{t-1} , \quad \frac{C'_W a_t}{\alpha_1}\le \frac{1+0.5c_1A_t}{8}.
    \]
    For each iteration \(t\), we choose \(\alpha_1=2\sqrt{(\nu_0+\nu A_{t-1})/(2+c_1A_t)}\)
    and it suffices:
    \begin{align}\label{eq:a_t2}
         a_t\le \frac{1}{4\sqrt{2}C_W'}\sqrt{(\nu_0+\nu A_{t-1})(2+c_1A_t)}.
    \end{align}
    Using \(A_{t-1}\le A_t\) again, it suffices that \(a_1\le \sqrt{\nu_0}/(4C_W')\) and \(a_t \le \sqrt{c_1\nu}A_{t-1}/(4\sqrt{2}C_W')\) for \(t\ge 2\), which means
    \[
        a_t\le (1+\frac{\sqrt{c_1\nu}}{4\sqrt{2}C'_W})^{t-1}\frac{\sqrt{\nu_0}}{4C_W'},
    \]	
    Note that when \(\nu=0\), to satisfy \Cref{eq:a_t2}, it also suffices  \(a_t^2\le \nu_0c_1A_t/(4\sqrt{2}C_W')^2\), which means it suffices to have \(a_t \le \nu_0c_1t/(4\sqrt{2}C_W')^2\). Therefore, together with \Cref{eq:a_t1}, it suffices
    \begin{align*}
        a_t= \min\Big\{\Big(1+\frac{c_1}{8C_4})^{t-1}\frac{1}{4C_4},\max\Big\{\Big(1+\frac{\sqrt{c_1\nu}}{4\sqrt{2}C'_W}\Big)^{t-1}\frac{\sqrt{\nu_0}}{4C_W'},\frac{c_1\nu_0}{(4\sqrt{2}C'_W)^2}t\Big\}\Big\}, \quad\;A_n=\sum_{t=0}^na_t.
    \end{align*}
    The analysis above provides a valid choice for \(a_t\) that satisfies the conditions of the claim, which concludes the proof.
\end{proof}

\subsection{Proof of \Cref{claim:iteration-norm-ub}}\label{sec: restate-of-iteration-norm-ub}
Finally, we inductively establish the boundedness of the iterates that is necessary for the sharpness results (\Cref{fact:sharpness} and \Cref{lemma:sharpness-empirical}).

\claimiterub*

\begin{proof}
We prove this lemma by induction on $t$. When \(t=0\), it is trivial that \(\vzero=\vw_0\in \cB(3\|\vw_*\|)\). Now assume \(\|\vw_t\|\le 3\|\vw_*\|\) holds for all \(0\le t \le n\), we would like to prove \(\|\vw_{n+1}\|\le 3\|\vw_*\|\) also holds. We apply \Cref{lemma:gap-lower-bound} and \Cref{lemma:gap-upper-bound} to sandwich the quantity \(\sum_{t=1}^{n+1}a_t\Gap(\vw_t,\evlambda_t)\). Note that we only have \(\|\vw_{n}\|\le 3\|\vw_*\|\), which means that we can only add the term \(a_{n+1}\Gap(\vw_{n+1},\evlambda_{n+1})\) to the lower bound in \Cref{lemma:gap-lower-bound}. However, we can directly apply \Cref{lemma:gap-upper-bound} for the \((n+1)\)-th iteration since it only requires \(\|\vw_n\|\le 3\|\vw_*\|\). Therefore, we get the inequality below:
    \begin{align}\label{eq:sandwitch}
          & \; -\frac{12\beta^2 {B}}{c_1}\mOPThat A_{n} +  \sum_{t=1}^n a_{t}\frac{c_1}{2} \|\vw_{t} - \vw_*\|_{2}^{2} +  \sum_{t=1}^n \nu a_{t}D_{\phi}(\evlambda^*,\evlambda_{t})  + a_{n+1} \Gap(\vw_{n+1}, \evlambda_{n+1})\notag\\ 
          \le & \sum_{t=1}^{n+1} a_t \Gap(\vw_t, \evlambda_t)\notag \\
            \leq & \frac{1}{2}\|\vw_* - \vw_0\|_2^2 + \nu_0 D_\phi(\evlambda^*, \evlambda_0)
             - \frac{1 + 0.5 c_1A_{n+1}}{4}\|\vw_* - \vw_{k+1}\|_2^2 - (\nu_0 + \nu A_{n+1})D_\phi(\evlambda^*, \evlambda_{n+1})\notag\\ 
             &+ \frac{28 \beta^2 B\mOPThat A_{n+1}}{c_1}. 
    \end{align}
Also, similar to \Cref{lemma:gap-lower-bound} (see the proof of \Cref{lemma:gap-lower-bound} in \Cref{app:gap-lower-bound}), we split \(a_{n+1}\Gap(\vw_{n+1},\evlambda_{n+1)}\) into two terms and get
    \begin{align*}
        a_{n+1}\Gap(\vw_{n+1},\evlambda_{n+1})&=\;[L(\vw_{n+1},\evlambda^*)-L(\vw_*,\evlambda^*)]+[L(\vw_*,\evlambda^*)-L(\vw_*,\evlambda_{n+1})] \\
        &=\;\sum_{i=1}^K\elambda_{[i]}^{*}\E_{\epi}[((\sigma(\vw_{n+1} \cdot \vx)-y)^{2}-(\sigma(\vw_* \cdot \vx)-y)^{2})]+\nu D_{\phi}(\evlambda^*,\evlambda_{n+1}) \\
        &\ge \;-\mOPThat, \notag
    \end{align*}
where we lower bound the first term as \(-\mOPThat\) and simply ignore the second term due to the nonnegativity of Bregman divergence. Since \(\sum_{i=1}^na_{t}\frac{c_1}{2} \|\vw_{t} - \vw_*\|_{2}^{2}\) and \(\sum_{t=1}^n \nu a_{t}D_{\phi}(\evlambda^*,\evlambda_{t})\)in LHS of \Cref{eq:sandwitch} are nonnegative, \(-(\nu_0 + \nu A_{n+1})D_\phi(\evlambda^*, \evlambda_{n+1})\) and \((1+0.5c_1A_{n+1})\|\vw_{n+1}-\vw_n\|_2^2/4\) in RHS are nonpositive, we can ignore them. Plugging the inequality above into \Cref{eq:sandwitch} and rearranging the terms, we get
    \begin{align*}
       &\; \frac{2 + c_1A_{n+1}}{8}\|\vw_* - \vw_{n+1}\|_2^2\\
       \le\; & \frac{1}{2}\|\vw_* - \vw_0\|_2^2 + \nu_0 D_\phi(\evlambda^*, \evlambda_0)+\big(\frac{40\beta^2B}{c_1}A_n+(1+\frac{28\beta^2B}{c_1})a_{n+1}\big)\mOPThat.
    \end{align*}
Let both sides be divided by \((2+c_1A_{n+1})/8\), for the first two terms in RHS, we use \(2+c_1A_{n+1}\ge 2\), for the third term, we use \(2+c_1A_{n+1}\ge c_1A_n\), and for the last term, following the choice of \(a_t\), we have \(\frac{a_{n+1}}{2+c_1A_{n+1}}\le \frac{a_{n+1}}{c_1A_n} \le \max\{\frac{1}{n}, \frac{1}{8C_4}\} \le 1\). Therefore, the inequality above becomes
    \begin{align*}
        \|\vw_* - \vw_{n+1}\|_2^2 \le 2\|\vw_* - \vw_0\|_2^2 + 4\nu_0 D_\phi(\evlambda^*, \evlambda_0)+\big(\frac{544\beta^2B}{c_1^2}+\frac{1}{\min\{\frac{n}{8},C_4\}}\big)\mOPThat.
    \end{align*}
Since \(\frac{1}{\min\{\frac{n}{8},C_4\}}\) is at most of constant order, the coefficient of \(\mOPThat\) is also a constant. Choosing \(\nu_0=\frac{\eps}{4K}\), following the similar logic of claim E.2 in \citep{Li2024shifts}, we can assume without loss of generality that \(\big(\frac{544\beta^2B}{c_1^2}+\frac{1}{\min\{\frac{n}{8},C_4\}}\big)\mOPThat+\eps\le \norm{\vw_*}_2^2\), otherwise we can compare the empirical risk of the output from our algorithm and of \(\hat{\vw} = \vzero\) and output the solution with the lower risk to obtain an \(O(\OPT)+\eps\) solution.

Now we complete the induction step showing \(\|\vw_* - \vw_{n+1}\|_2^2 \le 3\|\vw_*\|_2^2\), which means \(\|\vw_{n+1}\|_2 \le (1+\sqrt{3})\|\vw_*\|_2 \le 3\|\vw_*\|_2\), and we finish the proof. 
\end{proof}

\section{Supplementary Details of Experiments}\label{sec:supply-experiments-details}
\subsection{Existing Assets Used}
We used the publicly available RedPajama dataset \citep{together2023redpajama}, which is released under a combination of open licenses consistent with the licenses of the original data sources (e.g., CC-BY for Wikipedia). We followed the official RedPajama license statement. Since the data from the \texttt{book} domain is no longer publicly available, we instead downloaded approximately 160GB of raw data from the remaining six domains, using normalized weights based on the original dataset’s initial proportions.

For the code, we built on the code base of \citep{xia2024shearedllamaacceleratinglanguage} (see \citep{shearedllama_code}), adding our implementation of the primal–dual methods below and modifying the function \texttt{update\_proportion} in the \texttt{dynamic\_loading\allowbreak\_callback.py} file accordingly.
\begin{lstlisting}[language=Python, caption={PD-KL updates}]
elif self.update_type == "pd-kl":
    new_lambdas = torch.log(new_lambdas + 1e-6) + eta * diff 
    new_lambdas = torch.nn.functional.softmax(new_lambdas, dim=0)
    updated_domain_weights = \
        new_lambdas + extrapolation_factor * (new_lambdas - torch.tensor(current_lambdas)) # extrapolation
    updated_domain_weights = (1-c) * updated_domain_weights + c / self.n_domains     
\end{lstlisting}

\subsection{Additional Experiment Details}
We largely followed the pipelines and instructions provided in the aforementioned code base. For data preparation and model setup, we used virtual machines on Google Cloud Platform (GCP): a VM with 8 vCPUs and 64GB memory for tokenization and data sampling, and a VM with a single NVIDIA A100 80GB GPU for converting checkpoints into HuggingFace format and running model evaluations. For training, we employed 4 NVIDIA A100 80GB GPUs on the high-performance computing clusters of the Center for High Throughput Computing (CHTC) at UW-Madison, equipped with 32 CPUs and 256GB memory. We followed all training parameters from \citep{xia2024shearedllamaacceleratinglanguage}, except that we set the evaluation interval to 8.4M tokens instead of 16.8M tokens, i.e., twice as frequent as in their setup.

\end{document}